\documentclass{article}

\PassOptionsToPackage{hypertexnames=false}{hyperref}  
\usepackage{parskip}

\usepackage[colorlinks=true, linkcolor=blue!70!black, citecolor=blue!70!black,urlcolor=black,breaklinks=true]{hyperref}
\usepackage{microtype}
\usepackage{hhline}

\makeatletter
\newcommand{\neutralize}[1]{\expandafter\let\csname c@#1\endcsname\count@}
\makeatother

\usepackage{algorithm}

\usepackage{amsthm}
\usepackage{mathtools}
\usepackage{amsmath}
\usepackage{bbm}
\usepackage{amsfonts}
\usepackage{amssymb}

\usepackage{xpatch}

\newtheorem*{theorem*}{Theorem}
\newtheorem{theorem}{Theorem}
\newtheorem{lemma}{Lemma}
\newtheorem{proposition}[theorem]{Proposition}
\newtheorem{assumption}{Assumption}
\newtheorem{corollary}{Corollary}

\theoremstyle{definition}

\newtheorem{example}{Example}

\theoremstyle{remark}

\AtBeginEnvironment{remark}{%
  \pushQED{\qed}%
}
\AtEndEnvironment{remark}{\popQED\endexample}

\makeatletter
  \renewenvironment{proof}[1][Proof]%
  {%
   \par\noindent{\bfseries\upshape {#1.}\ }%
  }%
  {\qed\newline}
  \makeatother

\xpatchcmd{\proof}{\itshape}{\normalfont\proofnameformat}{}{}
\newcommand{\proofnameformat}{\bfseries}

\usepackage[nameinlink,capitalize]{cleveref}

\crefformat{equation}{#2(#1)#3}
\Crefformat{equation}{#2(#1)#3}
\usepackage{mdframed}
\usepackage{lipsum}

\Crefformat{figure}{#2Figure #1#3}
\Crefname{assumption}{Assumption}{Assumptions}
\Crefformat{assumption}{#2Assumption #1#3}
\usepackage[customcolors]{hf-tikz}
\usepackage{crossreftools}
\usepackage{xparse}

\ExplSyntaxOn
\DeclareDocumentCommand{\XDeclarePairedDelimiter}{mm}
 {
  \__egreg_delimiter_clear_keys: 
  \keys_set:nn { egreg/delimiters } { #2 }
  \use:x 
   {
    \exp_not:n {\NewDocumentCommand{#1}{sO{}m} }
     {
      \exp_not:n { \IfBooleanTF{##1} }
       {
        \exp_not:N \egreg_paired_delimiter_expand:nnnn
         { \exp_not:V \l_egreg_delimiter_left_tl }
         { \exp_not:V \l_egreg_delimiter_right_tl }
         { \exp_not:n { ##3 } }
         { \exp_not:V \l_egreg_delimiter_subscript_tl }
       }
       {
        \exp_not:N \egreg_paired_delimiter_fixed:nnnnn 
         { \exp_not:n { ##2 } }
         { \exp_not:V \l_egreg_delimiter_left_tl }
         { \exp_not:V \l_egreg_delimiter_right_tl }
         { \exp_not:n { ##3 } }
         { \exp_not:V \l_egreg_delimiter_subscript_tl }
       }
     }
   }
 }

\keys_define:nn { egreg/delimiters }
 {
  left      .tl_set:N = \l_egreg_delimiter_left_tl,
  right     .tl_set:N = \l_egreg_delimiter_right_tl,
  subscript .tl_set:N = \l_egreg_delimiter_subscript_tl,
 }

\cs_new_protected:Npn \__egreg_delimiter_clear_keys:
 {
  \keys_set:nn { egreg/delimiters } { left=.,right=.,subscript={} }
 }

\cs_new_protected:Npn \egreg_paired_delimiter_expand:nnnn #1 #2 #3 #4
 {
  \mathopen{}
  \mathclose\c_group_begin_token
   \left#1
   #3
   \group_insert_after:N \c_group_end_token
   \right#2
   \tl_if_empty:nF {#4} { \c_math_subscript_token {#4} }
 }
\cs_new_protected:Npn \egreg_paired_delimiter_fixed:nnnnn #1 #2 #3 #4 #5
 {
  \mathopen{#1#2}#4\mathclose{#1#3}
  \tl_if_empty:nF {#5} { \c_math_subscript_token {#5} }
 }
\ExplSyntaxOff

\XDeclarePairedDelimiter{\supnorm}{
  left=\lVert,
  right=\rVert,
  subscript=\infty
  }

\usepackage[utf8]{inputenc} 
\usepackage[T1]{fontenc}    
\usepackage{url}            
\usepackage{booktabs}       
\usepackage{amsfonts}       
\usepackage{nicefrac}       
\usepackage{microtype}      
\usepackage{authblk}
\usepackage{tocloft}            

\usepackage{enumitem}

\usepackage{breakcites}
\usepackage{dsfont}
\usepackage{etoolbox}
\usepackage{comment}
\newtoggle{draft}
\togglefalse{draft}

\usepackage{color-edits}

\usepackage{mathrsfs}

\usepackage{algorithm}
\usepackage{verbatim}
\usepackage[noend]{algpseudocode}

\usepackage{multicol}

\usepackage{colortbl}
\usepackage{bbm}
\usepackage{setspace}

\usepackage{transparent}

\usepackage{inconsolata}
\usepackage[scaled=.90]{helvet}
\usepackage{xspace}

\usepackage{pifont}

\usepackage{tikz-cd}

    \newcommand{\E}{{\mathbb E}}
    
    \newcommand{\R}{{\mathbb R}}

    \newcommand{\PP}{\mathbb{P}}

    \newcounter{rcnt}[section]

    \def\argmin{\mathop{\rm argmin}}

\newcommand\blfootnote[1]{
    \begingroup
    \renewcommand\thefootnote{}\footnote{#1}
    \addtocounter{footnote}{-1}
    \endgroup
}

    \def\ddefloop#1{\ifx\ddefloop#1\else\ddef{#1}\expandafter\ddefloop\fi}
    \def\ddef#1{\expandafter\def\csname c#1\endcsname{\ensuremath{\mathcal{#1}}}}
    \ddefloop ABCDEFGHIJKLMNOPQRSTUVWXYZ\ddefloop
 \usepackage[preprint]{neurips_2026}

\usepackage[utf8]{inputenc} 
\usepackage[T1]{fontenc}    
\usepackage{hyperref}       
\usepackage{url}            
\usepackage{booktabs}       
\usepackage{amsfonts}       
\usepackage{nicefrac}       
\usepackage{microtype}      
\usepackage{xcolor}         
\usepackage{subcaption}
\usepackage{graphicx}
\usepackage{wrapfig}
\usepackage{amsmath}
\usepackage{tikz}
\usepackage{wrapfig}
\usetikzlibrary{shapes.geometric, positioning}
\usepackage{listings}
\usepackage{xcolor}

\AtBeginDocument{%
  \setlength{\abovedisplayskip}{5pt plus 2pt minus 2pt}%
  \setlength{\belowdisplayskip}{5pt plus 2pt minus 2pt}%
  \setlength{\abovedisplayshortskip}{2pt plus 2pt}%
  \setlength{\belowdisplayshortskip}{3pt plus 2pt minus 2pt}%
}

\IfFileExists{fontawesome5.sty}{%
  \usepackage{fontawesome5}%
}{%
  \providecommand{\faGithub}{\textsf{[GitHub]}}%
}

\title{Local Sparsity Enables \\ Unsupervised LLM Safety Detection}

\author{%
  \bf Xin Chen \quad Gil Kur\thanks{Equal contribution. Correspondence to: \texttt{xin.chen@inf.ethz.ch}} \quad  Alexander Shevchenko\textsuperscript{*}  \quad Andreas Krause \\
  \rule{0pt}{1.5em} ETH Z\"urich
}

\begin{document}

\hypersetup{linkcolor=black}
\maketitle
\hypersetup{linkcolor=blue!70!black}

\blfootnote{\faGithub~Code is available at
\url{https://github.com/lasgroup/unsupervised-llm-safety}.}

\begin{abstract}
Deployment-time safety methods for large language models (LLMs) are predominantly supervised and assume access to unsafe training data. Nevertheless, new attacks and harm categories regularly arise, not captured by models trained in such a supervised fashion. An alternative approach is to view this problem through the lens of anomaly detection, namely, to rely solely on modeling safe data and flagging out-of-distribution inputs. However, LLM activations lie in a high-dimensional space, raising concerns about whether anomaly detection is statistically feasible. We show that, under the linear representation hypothesis (LRH), there may indeed be hope. In the LRH concept space, which is typically recovered via a sparse autoencoder (SAE), nearby points share a small common active support. Using this local sparsity insight, we propose a framework for locally masked SAE-based anomaly detection, supported by theoretical justifications.  We validate it on various architectures and datasets, including both capability-testing datasets and safety-specific datasets. Finally, when we allow algorithms to use 1\% out-of-distribution data for calibration, locally sparse methods achieve near-optimal performance, demonstrating their ability to capture meaningful safety information while using only 1-2\% of SAE neurons for computation.
\end{abstract}

\newcommand{\ncluster}{\mathcal{N}_c}
\newcommand{\sparsity}{k}
\newcommand{\as}[1]{\textcolor{blue}{\textbf{[Alex]} #1}}

\section{Introduction}
\label{sec:introduction}

Supervised safety classifiers for Large Language Models (LLMs) \citep{sharma2025constitutional,
bai2022training, chen2025learning, zou2024improving} assume access to labeled unsafe examples. In practice, this turns out to be a significant limitation since unsafe distribution may change drastically during deployment. Examples of such distribution shift include: the emergence of new jailbreaks \citep{andriushchenko2024jailbreaking, nasr2025attacker}, harmful behaviors absent from the labeled set \citep{turner2025model}, and inputs from domains the model never encountered before \citep{oh2025understanding}. In this view, a supervised procedure fit to a fixed unsafe distribution might generalize poorly beyond it.

To address this limitation, we take an unsupervised view and frame the problem as anomaly detection \citep{scholkopf1999support, perera2021one}. Namely, given safe data $\{x_i\}_{i=1}^{n} {\sim} \mathbb{P}$, we aim to construct a score function  $s : \mathcal{X} \to \mathbb{R}$ that takes large values on inputs atypical under $\mathbb{P}$. Crucially, this construction does not require unsafe data, and is therefore not biased toward any particular harm category. Despite its conceptual appeal, this approach remains underexplored for modern LLMs. The reason is that LLM embeddings are high-dimensional, and unsupervised anomaly detection methods are known to perform poorly in this setting \citep{aggarwal2001surprising, zimek2012survey}.

\begin{figure*}[t]
  \centering
  \begin{tikzpicture}[
      font=\small,
      cluster/.style={draw, ellipse, line width=0.5pt, fill opacity=0.55, draw opacity=0.9},
      bar/.style={draw=black!20, fill=black!5, line width=0.4pt, rounded corners=1pt},
      tick/.style={line width=1.4pt},
      leader/.style={dashed, line width=0.4pt, opacity=0.7, dash pattern=on 2pt off 3pt},
      shared/.style={dashed, draw=black!50, line width=0.4pt, dash pattern=on 2pt off 2pt},
      pt/.style={circle, fill, inner sep=0pt, minimum size=2pt},
      bgpt/.style={circle, fill=black!45, opacity=0.35, inner sep=0pt, minimum size=1.4pt},
    ]

    \definecolor{purpA}{HTML}{CECBF6}
    \definecolor{purpAd}{HTML}{534AB7}
    \definecolor{purpAt}{HTML}{3C3489}
    \definecolor{tealB}{HTML}{9FE1CB}
    \definecolor{tealBd}{HTML}{1D9E75}
    \definecolor{tealBt}{HTML}{085041}
    \definecolor{coralC}{HTML}{F5C4B3}
    \definecolor{coralCd}{HTML}{D85A30}
    \definecolor{coralCt}{HTML}{712B13}
    \definecolor{barbg}{HTML}{F1EFE8}
    \definecolor{barfr}{HTML}{D3D1C7}
    \definecolor{unionfr}{HTML}{888780}
    \definecolor{uniontk}{HTML}{444441}
    \definecolor{axisgr}{HTML}{5F5E5A}

    \node[anchor=center, font=\small\bfseries] at (5.5, 4.10)
      {SAE feature space ($d = 16{,}384$)};


    \draw[barfr, line width=0.4pt] (0,0.50) -- (0,3.50);
    \draw[barfr, line width=0.4pt] (0,0.50) -- (4.0,0.50);
    \node[anchor=east, text=black!50, font=\scriptsize] at (-0.05,3.30) {PC$_2$};
    \node[anchor=east, text=black!50, font=\scriptsize] at (4.0,0.30) {PC$_1$};

    \foreach \x/\y in {%
      3.20/2.95, 3.32/2.80, 3.44/2.90, 3.56/2.75, 3.26/2.60, 3.46/2.55, 3.60/2.85,%
      0.48/0.85, 0.62/0.70, 0.68/0.85, 0.54/0.62, 0.74/0.58,%
      3.52/0.95, 3.68/0.78, 3.76/0.62, 3.60/0.58,%
      0.30/2.20, 0.44/2.05, 0.56/2.18, 0.36/1.92%
    }{\node[bgpt] at (\x,\y) {};}

    \node[cluster, fill=purpA, draw=purpAd, minimum width=1.40cm, minimum height=0.85cm]
      (clA) at (1.30,2.70) {};
    \foreach \x/\y in {1.00/2.80, 1.40/2.90, 1.66/2.75, 1.10/2.60, 1.50/2.50, 1.22/2.40, 1.70/2.55}
      {\node[pt, fill=purpAt] at (\x,\y) {};}
    \node[text=purpAt, font=\small\bfseries, anchor=west] at (1.75,3.15) {Cluster A};

    \node[cluster, fill=tealB, draw=tealBd, minimum width=1.30cm, minimum height=0.85cm]
      (clB) at (2.40,1.95) {};
    \foreach \x/\y in {2.10/2.05, 2.45/2.15, 2.70/2.00, 2.18/1.85, 2.52/1.75, 2.78/1.85, 2.30/1.65}
      {\node[pt, fill=tealBt] at (\x,\y) {};}
    \node[text=tealBt, font=\small\bfseries, anchor=west] at (3.05,1.95) {Cluster B};

    \node[cluster, fill=coralC, draw=coralCd, minimum width=1.40cm, minimum height=0.65cm]
      (clC) at (1.55,1.10) {};
    \foreach \x/\y in {1.20/1.20, 1.55/1.30, 1.90/1.18, 1.30/1.05, 1.70/0.95, 1.45/0.88, 2.00/1.00}
      {\node[pt, fill=coralCt] at (\x,\y) {};}
    \node[text=coralCt, font=\small\bfseries, anchor=west] at (2.40,1.10) {Cluster C};

    \def\barX{5.10}
    \def\barW{4.40}

    \newcommand{\rowbar}[3]{%
      \draw[bar] (\barX,#1) rectangle ++(\barW,0.32);%
      \foreach \tx in {#3}{\draw[tick, color=#2] (\barX+\tx,#1) -- ++(0,0.32);}%
    }

    \rowbar{2.95}{purpAd}{0.28, 0.48, 0.72, 0.92, 1.12, 1.32, 1.60, 1.84, 2.08, 2.32, 2.56, 2.84}
    \node[text=purpAt, anchor=west, font=\footnotesize] at (\barX+\barW+0.1, 3.11) {200 selected by mask};

    \rowbar{2.35}{tealBd}{1.60, 2.08, 2.32, 2.56, 2.84, 3.12, 3.36, 3.60, 3.84, 4.08}
    \node[text=tealBt, anchor=west, font=\footnotesize] at (\barX+\barW+0.1, 2.51) {\dots};

    \rowbar{1.75}{coralCd}{0.92, 1.32, 1.84, 3.12, 3.36, 3.92, 4.16, 4.32}
    \node[text=coralCt, anchor=west, font=\footnotesize] at (\barX+\barW+0.1, 1.91) {\dots};

    \draw[shared] (\barX+1.60, 1.75) -- (\barX+1.60, 3.27);
    \draw[shared] (\barX+3.12, 2.35) -- (\barX+3.12, 3.27);

    \draw[barfr, line width=0.4pt] (\barX,1.40) -- (\barX+\barW,1.40);

    \draw[draw=unionfr, fill=barbg, line width=0.4pt, rounded corners=1pt]
      (\barX,1.00) rectangle ++(\barW,0.32);
    \foreach \tx in {0.28, 0.48, 0.72, 0.92, 1.12, 1.32, 1.60, 1.84, 2.08, 2.32, 2.56, 2.84,%
                     3.12, 3.36, 3.60, 3.84, 3.92, 4.08, 4.16, 4.32}
      {\draw[tick, color=uniontk] (\barX+\tx,1.00) -- ++(0,0.32);}
    \node[text=uniontk, anchor=west, font=\footnotesize] at (\barX+\barW+0.1, 1.16)
      {$\approx 3{,}000$ across all masks};

    \draw[color=axisgr, line width=0.4pt] (\barX,0.65) -- (\barX+\barW,0.65);
    \node[text=axisgr, anchor=west, font=\footnotesize] at (\barX,0.40) {$0$};
    \node[text=axisgr, anchor=east, font=\footnotesize] at (\barX+\barW,0.40) {SAE dimension $d = 16{,}384$};

    \draw[leader, color=purpAd]  (clA.east) -- (\barX, 3.05);
    \draw[leader, color=tealBd]  (clB.east) -- (\barX, 2.45);
    \draw[leader, color=coralCd] (clC.east) -- (\barX, 1.85);

  \end{tikzpicture}
  \caption{Per-cluster masks in SAE's feature space.
  \emph{Left:} 2D projection of $\mathbb{R}^{d}$ with three highlighted clusters; faint background point clouds indicate other clusters not displayed.
  \emph{Right:} the indices of features selected by the mask $m_c$ for each cluster.
  Each mask retains $200$ features, and the masks are partially overlapping; their union covers approximately $3{,}000$ features, still much smaller than $d$.}
  \label{fig:local-mask}
\vspace{-7mm}
\end{figure*}

While this might appear to be the case at first sight, LLM activations exhibit far more structure than their nominal dimension suggests. The Linear Representation Hypothesis (LRH) \citep{park2023linear, park2025geometry} posits that each embedding is a \emph{sparse} combination of concept vectors. This concept space can be approximately recovered using a sparse autoencoder (SAE) \citep{cunningham2023sparse, paulo2024automatically, he2024llama}. \citet{hubotter2025specialization} further observe that within local ``neighborhoods'', the active features are largely shared (cf. \Cref{fig:local-mask}). Consequently, anomaly detection can in principle operate on a much lower-dimensional representation. However, this structure is \emph{local}: the relevant low-dimensional subspace varies across neighborhoods. Methods that instead impose a single, globally compressed representation—such as standard autoencoders—substantially underperform (cf. CVDD and Deep SVDD in ~\Cref{fig:overall_label}).
We leverage this local structure in the following contributions.

\begin{wrapfigure}{r}{0.48\textwidth}
    \centering
    \includegraphics[width=0.46\textwidth]{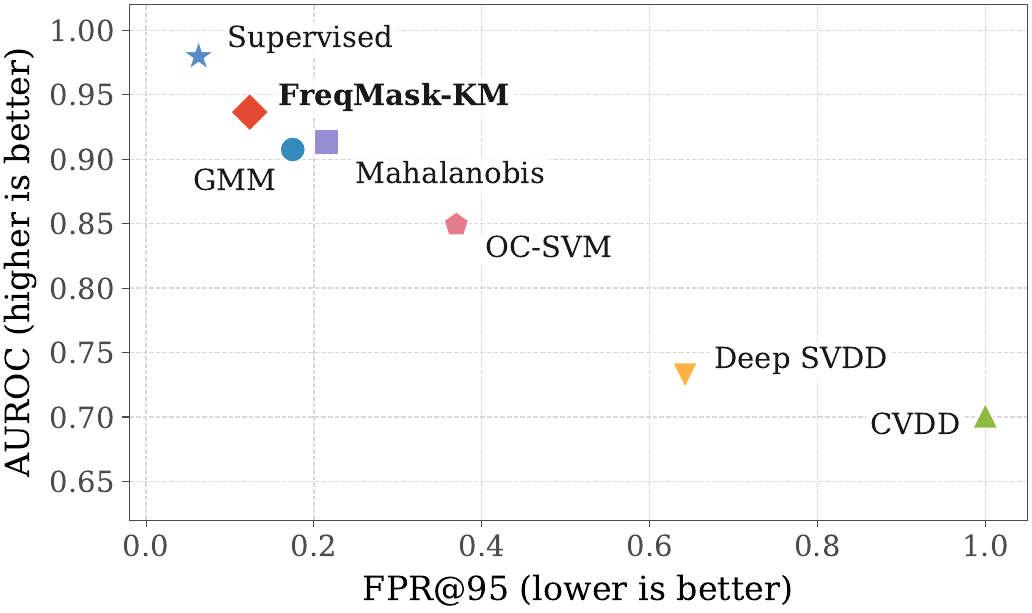}
    \caption{\textbf{Local sparsity in LLM activations enables one-class safety detection.}
    AUROC vs.\ FPR@95 on LLaMA3-8B for \textsc{FreqMask-KM} against one-class baselines
    (Mahalanobis, CVDD) and a supervised reference.}
    \label{fig:overall_label}
    \vspace{-5mm}
\end{wrapfigure}

\textbf{Modeling.} In \Cref{sec:framework}, we isolate four modeling choices through which local sparsity can be exploited for anomaly detection: \textbf{(i)} embedding space, \textbf{(ii)} clustering, \textbf{(iii)} choice of sparse subspace within each cluster, and \textbf{(iv)} scoring rule. We instantiate this modeling with two methods: \textsc{FreqMask-KM}, which is training-free at the cluster level, and \textsc{LearnedMask-LoRA}, which fits per-cluster low-rank adapters.

\textbf{Numerics.} In \Cref{sec:experiment}, we validate our findings across six
instruction-tuned models (Qwen2-1.5B \citep{bai2023qwen}, Ministral-8B \citep{liu2026ministral}, LLaMA3-8B \citep{grattafiori2024llama}, Qwen3-8B \citep{yang2025qwen3}, GPT-oss-20B \citep{agarwal2025gpt}, and Gemma 4-26B \citep{gemma_4_team_2026}) and
three categories of unsafe behavior. We show that several sparsity-aware algorithms perform on par or better than standard baselines. Ablations confirm that \emph{local sparsity} is essential for anomaly detection on LLM representations—any successful method must exploit it, either explicitly by design (as ours does) or implicitly through inductive bias. This structure consistently emerges across all six model families we study.

\textbf{Theoretical justification.} In \Cref{sec:theory}, we prove a sample-complexity bound for learning the corresponding anomaly detector. Notably, the bound scales only \emph{logarithmically} in the concept-space dimension $d_2$ (hidden dimension of SAE) and the ambient dimension $d_1$, and \emph{only linearly} in the local sparsity. This explains why anomaly detection does not fail in this high-dimensional regime.

\section{Background: Linear Representation Hypothesis and Local Sparsity}
\label{sec:lrh-background}


The Linear Representation Hypothesis posits that high-level concepts are encoded as linear directions in LLM activation space \citep{mikolov2013efficient, park2023linear}. Language models represent more concepts than their residual stream has dimensions by packing them into near-orthogonal directions, a.k.a. \emph{superposition} \citep{elhage2022toy}. 
Sparse autoencoders approximately recover the corresponding concept space as an overcomplete dictionary, such that each input activates only a small subset of hidden neurons \citep{cunningham2023sparse, bricken2023monosemanticity, templeton2024scaling, gao2024scaling}.  
\citet{hubotter2025specialization} puts this idea further by showing that the active features \emph{substantially overlap} across suitable ``neighbors''. In this section, we make these observations formal and define the set of notations used throughout the rest of the paper.

\paragraph{Residual stream and SAE.}
Let $\mathcal{X}$ denote the input space of token sequences. For a fixed layer $\ell$ of a transformer, we define the corresponding residual-stream activation map by
$a^{(\ell)} : \mathcal{X} \to \mathbb{R}^{d_1}$,
where $d_1$ stands for the dimension of residual-stream. A sparse autoencoder at layer $\ell$ is defined by its encoder-decoder pair:
$E : \mathbb{R}^{d_1} \to \mathbb{R}^{d_2}, D : \mathbb{R}^{d_2} \to \mathbb{R}^{d_1}, d_2 \gg d_1,$
where $E = \mathrm{ReLU}(W_Ex+b_E)$ with $D=W_Dx+b_D$, for $W_E,W^\top_D \in \mathbb{R}^{d_2\times d_1}$, $b_E \in \mathbb{R}^{d_2}$, $b_D \in \mathbb{R}^{d_1}$ and $\mathrm{ReLU}(a)=\max\left\{0,a\right\}$ is applied component-wise.
The optimal configuration of parameters $\Theta = (W_E,W_D,b_E,b_D)$ is selected to minimize \emph{regularized} $L_2$-reconstruction loss over activations $a^{\ell}(x)$ of safe inputs $x \sim \mathbb{P}$:
\begin{equation}
    \label{eq:sae-loss}
    \mathcal{L}_{\mathrm{SAE}}(E, D) \;:=\; \mathbb{E}_{x \sim \mathbb{P}} \left[\bigl\|a^{(\ell)}(x) - D\bigl(E(a^{(\ell)}(x))\bigr)\bigr\|_2^2\right] \;+\; \lambda \, \mathbb{E}_{x \sim \mathbb{P}} \left[\bigl\|E(a^{(\ell)}(x))\bigr\|_1\right] \rightarrow \min_{\Theta},
\end{equation}
where $\lambda > 0$ is a regularization coefficient. $\mathrm{ReLU}(\cdot)$ activation in conjunction with $L_1$-penalty on SAE features $z(x):=E(a^{(\ell)}(x)) \;\in\; \mathbb{R}^{d_2}$ ensures that the resulting representations $z(x)$ are indeed sparse. The corresponding active features of $z(x)$ form an active support $\phi(x) \;:=\; \{\, j \in [d_2] : z(x)_j \neq 0 \,\}$ of non-zero entries of cardinality $|\phi(x)| = \|z(x)\|_0$.

\paragraph{Global (average) sparsity.}
The key property of a well-trained SAE is that, despite extreme overparameterization $d_2 \gg d_1$, each input activates only a small subset of concepts. Formally: 
\begin{equation}
    \label{eq:global-sparsity}
    \mathbb{E}_{x \sim \mathbb{P}} \, \|z(x)\|_0 \;=:\; s \;\ll\; d_2.
\end{equation}
Across the SAEs we use, average sparsity $s$ is two to three orders of magnitude smaller than $d_2$.\footnote{For some of the models we consider the corresponding values are $(d_1, d_2, s) \approx (4096, 16384, 100)$.}

\paragraph{Local sparsity.}
Central to this work is a stronger property---the joint sparsity structure within a ``neighborhood''. For a fixed metric $\rho$ on the concept space (see examples in \Cref{sec:framework-clustering}), a safe input $x \in \cX_0 := \mathrm{supp}(\mathbb{P})$ and a radius $r > 0$, we denote the safe-input neighborhood of $x$ at scale $r$ by
$$\mathcal{N}_r(x) \;:=\; \{\, x' \in \mathcal{X}_0 : \rho(z(x), z(x')) \le r \,\}.$$
 The local-sparsity property posits that for some ``not small'' $r_0 \geq 0$, the \emph{union} of active supports over the corresponding neighborhood is significantly smaller than the dimension of the concept space:
\begin{equation}
    \label{eq:local-sparsity}
    \mathbb{E}_{x \sim \mathbb{P}} \, \Bigl|\, \bigcup_{x' \in \mathcal{N}_{r_0}(x)} \phi(x') \,\Bigr| \;=:\; s_{\mathrm{loc}} \;\ll\; d_2.
\end{equation}
In words, active features are largely \emph{shared} across nearby points rather than being distinct. This implies that our data lies in a manifold containing ``sparse patches'', which justifies the use of ``local sparsity'' terminology. As we will see, this assumption is motivated by the empirical validation on six model families in \Cref{sec:experiment}, and it is central to our theoretical analysis in \Cref{sec:theory}.

\paragraph{Implication for one-class anomaly detection.}
Equation~\eqref{eq:local-sparsity} is the structural property our modeling heavily relies on. If we design an anomaly detection algorithm that operates on sparse local patches implicitly (or by inductive bias), the effective 
dimension it faces amounts to $s_{\mathrm{loc}}$ rather than $d_2$. 
We make it precise in the upcoming sections:
\Cref{sec:framework}  focuses on possible modeling choices (embedding space, clustering, local subspace, scoring) to ensure \eqref{eq:local-sparsity} holds; 
\Cref{sec:experiment} confirms this picture empirically across six model families;
and \Cref{sec:theory} shows that, as long as \eqref{eq:local-sparsity} holds, the sample complexity of the resulting estimator scales linearly in $s_{\mathrm{loc}}$ and only logarithmically in $d_2$.



\section{Framework for Local Sparse Anomaly Detection}
\label{sec:framework}

We now describe a framework that poses anomaly detection on LLM activations as local problems for which their effective dimension is determined by $s_{\mathrm{loc}}$ rather than by $d_2$. The framework consists of four stages: the \emph{design of embedding space} in which the detector operates, the \emph{clustering procedure} $\mathcal{C} = \{C_1, \dots, C_{\ncluster}\}$ to recover safe neighborhoods, the \emph{local sparse mask} $m_c \in \{0,1\}^{d_2}$ identified within each cluster $C_c$, and the \emph{scoring} rule used to evaluate new test examples. Each stage admits several principled choices, detailed in \Cref{sec:embedding-space,sec:framework-clustering,sec:framework-subspace,sec:framework-scoring}.
We treat these steps as a design space rather than a single algorithm: different model geometries may favor different instantiations, and the experiments in \Cref{sec:experiment} support this modular view.

\subsection{Embedding Space}
\label{sec:embedding-space}

The first stage selects the space in which all subsequent phases operate.
Recall that the concepts are entangled in the \emph{dense} activations $a^{(\ell)} : \mathcal{X} \to \mathbb{R}^{d_1}$ for layer $\ell$. Therefore, searching for the local sparse structure of \eqref{eq:local-sparsity} directly in $a^{(\ell)}$ is hopeless. We address this by lifting the activations to the sparse space induced by an SAE. By construction, the corresponding map $z : \mathcal{X} \to \mathbb{R}^{d_2}$ with $d_2 \gg d_1$ produces sparse representations, i.e., $\|z(x)\|_0 \ll d_2$. This structure is what makes the subsequent clustering able to identify neighborhoods compatible with \eqref{eq:local-sparsity}. This does come at a cost: the method becomes dependent on the quality of the pretrained SAE, since errors in the recovered concepts propagate through later stages.
Nevertheless, a well-chosen SAE can be enough to rescue methods that underperform in the dense activation space, simply by remapping the inputs (cf. \Cref{sec:exp-decomposition}). The SAE architecture involves several design choices, which we briefly discuss. We adopt a $\mathrm{ReLU}(\cdot)$ activation in the encoder $E(\cdot)$ with an $L_1$ penalty on $z(x)$, which enforces sparsity on average. An alternative is to impose a fixed sparsity per input via a $\mathrm{top}_\sparsity(\cdot)$ activation \citep{gao2024scaling}. While SAEs recover approximately disentangled representations, hierarchical variants (e.g., \citet{zaigrajew2025interpreting}) may be more efficient and align better with \eqref{eq:local-sparsity}. \citet{kantamneni2025sparse} analyzes the role of SAE in supervised probing, which we discuss more in Appendix~\ref{sec:related-sae-probing}.






\subsection{Clustering}
\label{sec:framework-clustering}
The clustering stage partitions the safe distribution into regions
for which the local-sparsity property \eqref{eq:local-sparsity} is
expected to hold. Formally, given a finite set of safe examples
$\mathcal{D} = \{x_1, \dots, x_n\} \subset \mathcal{X}_0$ and a fixed choice of metric
$\rho$ on the embedding space, the outcome of this stage is a partition
$\mathcal{C} = \{C_1, \dots, C_{\ncluster}\}$ of $\mathcal{D}$ obtained
by running $K$-means with metric $\rho$ on the chosen embeddings of $\mathcal{D}$. Two
design choices govern this partition: the metric $\rho$ and the number
of clusters $\ncluster$.

\textbf{Distance metric.} The choice of metric set the trade-off between active support and activation ``magnitude'' when deciding which points are grouped together. Let $b(x) := \mathbf{1}[z(x) > 0] \in \{0,1\}^{d_2}$
be the active-feature indicator of sparse representations $z(x)$. We examine three points along this trade-off: \textbf{(i)}
the Hamming distance $\rho_{L_0}(x, x') = \|b(x) - b(x')\|_0$, which
depends on $z$ only through $b$ and groups the points \emph{exlusively} by their active
support $\phi(x)$, explicitly targeting small
values of $s_{\mathrm{loc}}$; \textbf{(ii-iii)} the magnitude-aware
$\rho_{L_p}(x, x') = \|z(x) - z(x')\|_p$ for $p \in \{1, 2\}$, which
retain activation influence, i.e., it groups points $\mathcal{D}$ by both which features fire and how strongly, at the cost of greater sensitivity to noise in the activation magnitudes.

\paragraph{Number of clusters.}
The number of clusters $\ncluster$ trades off cluster homogeneity
against per-cluster sample size. With too few clusters, the union of
inner-cluster supports $\bigcup_{x \in C_c} \phi(x)$ grows and \eqref{eq:local-sparsity}
degrades. With too many, per-cluster sample sizes
$|C_c| \approx N / \ncluster$ become small, and parametric per-cluster
estimators (e.g., per-cluster covariance for Mahalanobis scoring)
become ill-conditioned. An appropriate choice of $\ncluster$ takes into account both the
safe sample size $N$ and the global sparsity $s$.

\subsection{Local sparse subspace}
\label{sec:framework-subspace}
At this stage, for each cluster $C_c \in \mathcal{C}$, we identify a sparse subspaces of $\mathbb{R}^{d_2}$ on which inputs are scored.
The subspace is identified by its binary mask $m_c \in \{0,1\}^{d_2}$, applied
to SAE features $z(x)$ via the Hadamard product $m_c \odot z(x)$. We
consider two constructions below.

\paragraph{Frequency mask.}
For an index set $I \subseteq \mathcal{D}$, let
$f^{(I)} \in [0,1]^{d_2}$ be the per-feature activation frequency
over $I$,
$f^{(I)}_j \;=\; \frac{1}{|I|} \sum_{x \in I} \mathbf{1}\bigl[z(x)_j \neq 0\bigr], j \in [d_2],$
and define the corresponding top-$\sparsity$ mask by
$m^{(I)} \;:=\; \mathbf{1}\bigl[f^{(I)} \in \mathrm{top}_\sparsity(f^{(I)})\bigr]$,
where $\mathrm{top}_\sparsity(\cdot)$ returns the indices of the
$\sparsity$ largest entries. We distinguish between two choices for $I$:
\textbf{(i)} we set $I = \mathcal{D}$ and apply the same \emph{global mask}
in each cluster ($m_c := m^{(\mathcal{D})}$ for all
$c \in [\ncluster]$), leaving the clustering as the only source of local information;
\textbf{(ii)} alternatively, we set  $I = C_c$ and apply \emph{per-cluster mask} $m_c := m^{(C_c)}$.

\paragraph{Per-cluster learned subspace.}
An alternative is to let the SAE adjust locally via per-cluster low-rank adapters \citep{hu2022lora}: $E_c = E + \Delta E_c$, $D_c = D + \Delta D_c$, with $\mathrm{rank}(\Delta E_c), \mathrm{rank}(\Delta D_c) \le r \ll \min(d_1, d_2)$\footnote{With slight abuse of notation, we write $E$ for $W_E$ and $D$ for $W_D$.}. The mask $m_c$ and the adapters $(\Delta E_c, \Delta D_c)$ are then jointly optimized on $C_c$ against an auxiliary objective.
Fitting $E$ and $D$ directly on each cluster, without the rank
constraint above, was found in preliminary experiments to increase the
held-out reconstruction loss and the average density of $z(x)$ on safe
data, relative to the initial SAE. We therefore restricted per-cluster
updates to the \emph{low}s rank-$r$ adapters $(\Delta E_c, \Delta D_c)$.

\subsection{Scoring}
\label{sec:framework-scoring}

The final stage assigns each test point $x$ an anomaly score given its
cluster $c^\star(x) \in [\ncluster]$ and the local mask $m_{c^\star}$.
We use the nearest-centroid assignment
    $c^\star(x) \;=\; \arg\min_{c \in [\ncluster]} \rho\bigl(z(x), \mu_c\bigr), \mu_c \;=\; \frac{1}{|C_c|} \sum_{x' \in C_c} z(x'),$
where $\rho$ is the clustering metric
(\Cref{sec:framework-clustering}). We consider two scoring rules.

\textbf{Centroid distance} in the masked SAE feature space,
$s_{\mathrm{dist}}(x) = \bigl\lVert m_{c^\star} \odot \bigl(z(x) - \mu_{c^\star}\bigr) \bigr\rVert_p, p \in \{1, 2\},$
or, when per-cluster covariance estimates are reliable, its
Mahalanobis variant. This is the standard typical-set realization:
atypical points lie far from the local centroid in the active-feature
subspace.

\textbf{Reconstruction residual} under the cluster's local
encoder/decoder pair $(E_{c^\star}, D_{c^\star})$,
$s_{\mathrm{rec}}(x) \;=\; \bigl\lVert a^{(\ell)}(x) - D_{c^\star}\!\bigl(m_{c^\star} \odot E_{c^\star}(a^{(\ell)}(x))\bigr) \bigr\rVert_2,$
matched to the learned-subspace construction in
\Cref{sec:framework-subspace}  for reconstruction loss as the auxiliary objective.

\textbf{Per-cluster calibration.} Both centroid and residual scores use global thresholding, i.e., the cut-off parameter for a decision rule is independent of the  cluster assigned to the evaluation point. Empirically, we notice that cluster assignment is highly non-homogeneous for unsafe data -- unsafe samples concentrate in a small subset of the safe clusters. Such discrepancy can be exploited for calibration at a cost of a small labeled set ($<1\%$ of data). Indeed, we demonstrate that the scoring rule can be adjusted per cluster via a simple affine map. We further address these points in  \Cref{sec:discussion} and Appendix~\ref{app:calibration}.


\begin{figure*}[t]
\centering
\begin{minipage}[t]{0.48\textwidth}
\begin{algorithm}[H]
\small
\caption{\textsc{FreqMask-KM}}
\label{alg:freqmask-km}
\begin{algorithmic}[1]
\Require Safe set $\mathcal{D}$, SAE encoder $E$,
  layer $\ell$, clusters number $\ncluster$, sparsity $\sparsity$
\Statex \textbf{Fit:}
\State $z_i \gets E(a^{(\ell)}(x_i))$ for $x_i \in \mathcal{D}$
  \Comment{SAE features}
\State $\{C_1, \ldots, C_{\ncluster}\} \gets \textsc{KMeans}_{L_2}(\{z_i\})$
\State $\mu_c \gets \text{mean}(\{z_i : i \in C_c\})$
  \Comment{cluster centroid}
\State $b_i \gets \mathbf{1}[z_i > 0]$
  \Comment{binarize}
\State $f_j \gets \tfrac{1}{N}\sum_i (b_{i})_j $
  \Comment{pooled frequency}
\State $m \gets \mathbf{1}[f \in \text{top-}\sparsity(f)]$
  \Comment{global mask}
\Statex \textbf{Scoring} $x$\textbf{:}
\State $z \gets E(a^{(\ell)}(x))$
\State $c^\star \gets \arg\min_c \lVert z - \mu_c \rVert_2$
  \Comment{assign cluster}
\Statex \textbf{Decision} (given threshold $\tau$)\textbf{:}
\State $s(x) \gets \lVert m \odot z - m \odot \mu_{c^\star} \rVert_1$
\State \Return \textsc{unsafe} if $s(x) > \tau$, else \textsc{safe}
\end{algorithmic}
\end{algorithm}
\end{minipage}%
\hfill
\begin{minipage}[t]{0.48\textwidth}
\begin{algorithm}[H]
\small
\caption{\textsc{LearnedMask-LoRA}}
\label{alg:learnedmask-lora}
\begin{algorithmic}[1]
\Require Safe set $\mathcal{D}$, SAE $(E, D)$,
  layer $\ell$, clusters $\ncluster$, sparsity $\lambda$
\Statex \textbf{Training:}
\State $z_i \gets E(a^{(\ell)}(x_i))$ for $x_i \in \mathcal{D}$
\State $\{C_1, \ldots, C_{\ncluster}\} \gets \textsc{KMeans}_{L_2}(\{z_i\})$
\For{each cluster $c$}
  \Comment{Stage 1: binary mask}
  \State $m_c \gets \arg\min_{m \in \{0,1\}^d} \mathcal{L}_{\text{rec}} + \lambda \lVert m \rVert_1$
\EndFor
\For{each cluster $c$}
  \Comment{Stage 2: LoRA adapters}
  \State $(\Delta E_c, \Delta D_c) \gets \arg\min \mathcal{L}_{\text{rec}}$
\EndFor
\Statex \textbf{Scoring} $x$\textbf{:}
\State $z \gets E(a^{(\ell)}(x))$, $\;c^\star \gets \arg\min_c \lVert z - \mu_c \rVert_2$
\Statex \textbf{Decision} (given threshold $\tau$)\textbf{:}
\State $s(x) \gets \lVert a^{(\ell)}(x) - D_{c^\star}(m_{c^\star} \odot E_{c^\star}(a^{(\ell)}(x))) \rVert_2$
\State \Return \textsc{unsafe} if $s(x) > \tau$, else \textsc{safe}
\end{algorithmic}
\end{algorithm}
\end{minipage}
\vspace{-6mm}
\end{figure*}

\subsection{Two concrete instantiations}
\label{sec:framework-instantiations}

We describe two instantiations of this framework, \textsc{FreqMask-KM}
and \textsc{LearnedMask-LoRA}, with no per-cluster learned parameters
and with per-cluster low-rank adaptation, respectively. Pseudocode is
given in \Cref{alg:freqmask-km,alg:learnedmask-lora}.

\textbf{\textsc{FreqMask-KM}.} The pipeline operates on SAE features
$z(x) \in \mathbb{R}^{d_2}$. Clustering uses Lloyd's algorithm with
uniform random centroid initialization (FAISS \citep{johnson2019billion} \texttt{Kmeans}
defaults), minimizing
$\sum_{c=1}^{\ncluster} \sum_{x \in C_c} \lVert z(x) - \mu_c \rVert_2^2$.
The local subspace uses the global frequency mask $m_{\mathrm{global}}$, shared across all clusters. Scoring is
the $\ell_1$ centroid distance:
$s_{\mathrm{dist}}(x) = \lVert m_{\mathrm{global}} \odot (z(x) - \mu_{c^\star}) \rVert_1$.

\textbf{\textsc{LearnedMask-LoRA}.} The pipeline operates on SAE
features $z(x)$ and uses the same clustering as
\textsc{FreqMask-KM}. The local subspace is induced by per-cluster LoRA
adapters $(\Delta E_c, \Delta D_c)$, fit
with the SAE backbone frozen and $m_c$ is learned by minimizing the per-cluster reconstruction loss
$\mathcal{L}_{\text{rec}} = \mathbb{E}_{x \in C_c} \lVert a^{(\ell)}(x) - D_c(m_c \odot E_c(a^{(\ell)}(x))) \rVert_2^2$. Scoring is done via the reconstruction residuals
$s_{\mathrm{rec}}(x)$ under the assigned
cluster's adapted encoder-decoder $(E_{c^\star}, D_{c^\star})$.

These two instances interpolate the design space defined in
\Cref{sec:framework-subspace}: the first fits no per-cluster
parameters, the second fits a low-rank adapter per cluster against the
downstream scoring objective. \Cref{sec:experiment} shows that both
reach competitive AUROC across six model families.
Detailed ablation studies can be found in Appendix~\ref{app:freqmask-km-ablations} and Appendix~\ref{app:learnedmask-lora-ablations}, where we vary \textsc{FreqMask-KM} and \textsc{LearnedMask-LoRA} one stage at a time: clustering metric, number of clusters $\ncluster$, mask size $\sparsity$, mask on versus off, scoring rule, within-cluster overlap of the safe and unsafe active supports, extraction layer, SAE sparsity penalty, etc., along with many more detailed analysis.


\section{Experiments}
\label{sec:experiment}

\subsection{Setup}
\label{sec:exp-setup}

\paragraph{Models.}
We evaluate on six instruction-tuned model families: \texttt{qwen2-1.5b-instruct} \citep{bai2023qwen}, \texttt{ministral-8b-instruct} \citep{liu2026ministral}, \texttt{llama3-8b-instruct} \citep{grattafiori2024llama}, \texttt{qwen3-8b}, \texttt{gpt-oss-20b}, and \texttt{gemma-4-26b}. Activations are extracted from the residual stream at layer $20$ for Qwen2-1.5B, layer $20$ for Ministral, layer $25$ for LLaMA3, layer $30$ for Qwen3, layer $20$ for GPT-oss, and layer $25$ for Gemma-4-26B. Layers were chosen at mid-to-late depth, where safety-relevant features have emerged but the representation has not yet narrowed toward the output vocabulary, and were not tuned against downstream AUROC. For every input, we record the residual-stream activation of the model's own generated response at the final response token.

\paragraph{Data.}
The safe set pools general-capability and instruction-following data with the safe portions of the evaluation benchmarks, totaling approximately $206$K examples per model; per-source composition, splits, and licenses are deferred to Appendix~\ref{app:datasets}. We evaluate on three unsafe categories: \textit{BeaverTails} \citep{ji2023beavertails} (requests for harmful content), \textit{ToxiGen} \citep{hartvigsen2022toxigen} (toxic and hateful language), and \textit{HarmBench} \citep{mazeika2024harmbench} adversarial attacks, pooled across eight attack algorithms. Reported AUROC and TPR@$5\%$FPR are means across these three categories.

\paragraph{Baselines, metrics, and method of focus.}
We compare against established one-class methods: global Mahalanobis distance \citep{nader2014mahalanobis}, Gaussian Mixture Model (GMM), One-Class SVM \citep{scholkopf1999support}, Deep SVDD \citep{ruff2018deep}, and CVDD \citep{ruff2019self}. We also report a supervised linear probe as a reference ceiling; the probe is trained with access to unsafe labels and is not a competitor to the unsupervised framework. For each method, we report its AUROC and TPR@$5\%$FPR (true positive rate at a $5\%$ false positive rate). Additional implementation details are described in Appendix~\ref{app:hyperparameters}.

\subsection{Unsupervised Anomaly Detection on LLM Activations Is Feasible}
\label{sec:exp-main}

\begin{table}[t]
\centering
\footnotesize
\caption{AUROC and TPR@$5\%$FPR for two framework instantiations, five unsupervised baselines, and a supervised linear probe across six instruction-tuned model families. The ``Local'' column marks whether the method scores on a sparse local subspace ($\bullet$) or not ($-$). Results are obtained across $5$ random seeds. Local-sparsification instantiations are the best unsupervised method on 4 models, and for the rest, a close second (within $0.025$ AUROC). TPR@$5\%$FPR rankings are less uniform. In each column the best unsupervised result is in \textbf{bold} and the second best is \underline{underlined}; the supervised probe is excluded from this ranking. The five $\geq$8B models perform substantially better than Qwen 1.5B, indicating that representation quality is key to successful anomaly detection.}
\label{tab:main-results}
\setlength{\tabcolsep}{4pt}
\resizebox{\textwidth}{!}{%
\begin{tabular}{lcccccc!{\vrule}c}
\toprule
\multicolumn{8}{c}{\textit{AUROC} ($\uparrow$)} \\
\midrule
Method & Local & Ministral 8B & LLaMA3 8B & Qwen3 8B & GPT-oss 20B & Gemma 4 26B & Qwen 1.5B \\
\midrule
BCE (linear probe, supervised) & --     & $0.993 \pm 0.000$ & $0.972 \pm 0.003$ & $0.993 \pm 0.001$ & $0.990 \pm 0.001$ & $0.983 \pm 0.001$ & $0.982 \pm 0.002$ \\
\midrule
\textsc{FreqMask-KM}      & $\bullet$ & $\mathbf{0.914 \pm 0.000}$ & $\mathbf{0.937 \pm 0.000}$ & $0.915 \pm 0.000$ & $0.900 \pm 0.000$ & $0.856 \pm 0.000$ & $0.762 \pm 0.000$ \\
\textsc{LearnedMask-LoRA} & $\bullet$ & $\underline{0.896 \pm 0.001}$ & $0.912 \pm 0.001$ & $\underline{0.917 \pm 0.000}$ & $\underline{0.919 \pm 0.001}$ & $\mathbf{0.904 \pm 0.000}$ & $\mathbf{0.794 \pm 0.001}$ \\
Mahalanobis    & --     & $0.892 \pm 0.000$ & $\underline{0.913 \pm 0.000}$ & $0.914 \pm 0.000$ & $0.843 \pm 0.000$ & $\underline{0.901 \pm 0.000}$ & $0.756 \pm 0.000$ \\
GMM            & --     & $0.881 \pm 0.011$ & $0.906 \pm 0.002$ & $\mathbf{0.942 \pm 0.002}$ & $\mathbf{0.925 \pm 0.008}$ & $0.861 \pm 0.004$ & $\underline{0.780 \pm 0.005}$ \\
One-Class SVM  & --     & $0.829 \pm 0.002$ & $0.841 \pm 0.011$ & $0.885 \pm 0.007$ & $0.850 \pm 0.002$ & $0.886 \pm 0.003$ & $0.756 \pm 0.004$ \\
Deep SVDD      & --     & $0.389 \pm 0.152$ & $0.475 \pm 0.183$ & $0.389 \pm 0.118$ & $0.535 \pm 0.024$ & $0.665 \pm 0.026$ & $0.507 \pm 0.101$ \\
CVDD           & --     & $0.712 \pm 0.029$ & $0.665 \pm 0.037$ & $0.705 \pm 0.029$ & $0.706 \pm 0.059$ & $0.672 \pm 0.042$ & $0.545 \pm 0.021$ \\
\midrule
\multicolumn{8}{c}{\textit{TPR@}$5\%$\textit{FPR} ($\uparrow$)} \\
\midrule
Method & Local & Ministral 8B & LLaMA3 8B & Qwen3 8B & GPT-oss 20B & Gemma 4 26B & Qwen 1.5B \\
\midrule
BCE (linear probe, supervised) & --     & $0.964 \pm 0.001$ & $0.905 \pm 0.014$ & $0.957 \pm 0.004$ & $0.948 \pm 0.003$ & $0.942 \pm 0.003$ & $0.876 \pm 0.014$ \\
\midrule
\textsc{FreqMask-KM}      & $\bullet$ & $0.675 \pm 0.000$ & $\mathbf{0.457 \pm 0.000}$ & $0.429 \pm 0.000$ & $0.535 \pm 0.000$ & $0.264 \pm 0.000$ & $0.287 \pm 0.000$ \\
\textsc{LearnedMask-LoRA} & $\bullet$ & $\underline{0.685 \pm 0.001}$ & $\underline{0.450 \pm 0.009}$ & $\underline{0.581 \pm 0.001}$ & $\underline{0.719 \pm 0.002}$ & $\mathbf{0.642 \pm 0.003}$ & $\mathbf{0.359 \pm 0.006}$ \\
Mahalanobis    & --     & $0.458 \pm 0.000$ & $0.409 \pm 0.000$ & $0.458 \pm 0.000$ & $0.421 \pm 0.000$ & $\underline{0.641 \pm 0.000}$ & $0.264 \pm 0.000$ \\
GMM            & --     & $0.664 \pm 0.009$ & $0.381 \pm 0.010$ & $\mathbf{0.639 \pm 0.008}$ & $\mathbf{0.725 \pm 0.018}$ & $0.624 \pm 0.014$ & $\underline{0.343 \pm 0.039}$ \\
One-Class SVM  & --     & $\mathbf{0.694 \pm 0.004}$ & $0.185 \pm 0.020$ & $0.252 \pm 0.026$ & $0.567 \pm 0.006$ & $0.512 \pm 0.031$ & $0.217 \pm 0.004$ \\
Deep SVDD      & --     & $0.022 \pm 0.012$ & $0.036 \pm 0.015$ & $0.038 \pm 0.019$ & $0.078 \pm 0.033$ & $0.116 \pm 0.042$ & $0.063 \pm 0.018$ \\
CVDD           & --     & $0.358 \pm 0.080$ & $0.348 \pm 0.077$ & $0.416 \pm 0.056$ & $0.428 \pm 0.111$ & $0.352 \pm 0.087$ & $0.132 \pm 0.041$ \\
\bottomrule
\end{tabular}%
}
\vspace{-5mm}
\end{table}

Table~\ref{tab:main-results} reports 3-category AUROC and TPR@$5\%$FPR across the six models. On four of the five $\geq$8B models the two local-sparsification instantiations land in a tight band: \textsc{FreqMask-KM} reaches AUROC $0.914$ on Ministral, $0.937$ on LLaMA3, $0.915$ on Qwen3, and $0.900$ on GPT-oss, and \textsc{LearnedMask-LoRA} $0.896$, $0.912$, $0.917$, and $0.919$ respectively. Gemma is the exception: its global-mask instantiation \textsc{FreqMask-KM} is weaker ($0.856$), while the learned instantiation \textsc{LearnedMask-LoRA} stays in band ($0.904$) and is in fact the best unsupervised method on Gemma. \textsc{FreqMask-KM} is the best unsupervised method on Ministral and LLaMA3; on Qwen3 and GPT-oss the GMM density baseline is marginally ahead ($0.942$ and $0.925$), with a local instantiation second within $0.025$ AUROC, and on Gemma \textsc{LearnedMask-LoRA} leads with Mahalanobis a close second ($0.904$ vs.\ $0.901$). The representation-learning baselines trail substantially: CVDD at $0.665$--$0.712$ and Deep SVDD at $0.389$--$0.665$, the latter with seed standard deviations up to $0.18$, indicating that its optimization is not able to find a useful representation in this setting. A roughly $0.04$--$0.08$ AUROC gap to the supervised linear probe remains on the $\geq$8B models, which we read as the cost of dropping the assumption that the unsafe distribution is known in advance (Section~\ref{sec:discussion}).

TPR@$5\%$FPR is less uniform than AUROC: the best method varies by model---One-Class SVM on Ministral ($0.694$), \textsc{FreqMask-KM} on LLaMA3 ($0.457$), GMM on Qwen3 ($0.639$) and GPT-oss ($0.725$), and \textsc{LearnedMask-LoRA} on Gemma ($0.642$)---while \textsc{LearnedMask-LoRA} stays within a few points of the best on every $\geq$8B model ($0.685$, $0.450$, $0.581$, $0.719$, $0.642$). No single method dominates across the $\geq$8B models, consistent with different LLMs having different internal representation geometries that may favor different algorithmic choices. On Qwen 1.5B, all unsupervised methods score below $0.80$ AUROC with low TPR@$5\%$FPR, indicating that representation quality bounds what unsupervised detection can recover. We defer the structural reading of these results to Section~\ref{sec:exp-decomposition}, where the decomposition table licenses it directly.

\subsection{Decomposition: Which Design Choices Most Impact the Performance?}
\label{sec:exp-decomposition}

\begin{table}[t]
\centering
\footnotesize
\caption{Decomposition of framework instantiations and baselines along three design axes: clustering, sparsification, and scoring. Performance is averaged across the five $\geq$8B models and $5$ random seeds. The ``Local'' column marks whether scoring uses a sparse local subspace ($\bullet$) or not ($-$). We use the mean AUROC from \Cref{tab:main-results} across those five models. Methods in the upper block operate in the raw LLM activation space; methods in the lower block operate in an SAE feature space. Moving from activation space to SAE space in CVDD contributes a large share of the gain; once in SAE space, local-subspace constructions \textsc{FreqMask-KM} and \textsc{LearnedMask-LoRA} land in the same AUROC band despite differing substantially.}
\label{tab:decomposition}
\begin{tabular}{lclllc}
\toprule
Method & Local & Clustering & Sparsification & Scoring & AUROC \\
\midrule
Mahalanobis & -- & None & Full covariance & Mahalanobis distance & 0.893 \\
GMM & -- & Mixture models & None & Mixture likelihood & 0.903 \\
CVDD & -- & KMeans & None & $L_2$ to cluster centroid & 0.692 \\
\midrule
CVDD (SAE space) & -- & KMeans & None & $L_2$ to cluster centroid & 0.806 \\
\textsc{FreqMask-KM} & $\bullet$ & KMeans & Global top-$k$ & $L_1$ to cluster centroid & 0.904 \\
Local \textsc{FreqMask-KM} & $\bullet$ & KMeans & per-cluster top-$k$ & $L_1$ to cluster centroid & 0.790 \\
\textsc{LearnedMask-LoRA} & $\bullet$ & KMeans & per-cluster learned & SAE residual & \textbf{0.909} \\
\bottomrule
\end{tabular}
\vspace{-5mm}
\end{table}

Table~\ref{tab:decomposition} decomposes framework instantiations and baselines along representation space, clustering, sparsification, and scoring. The lower block isolates the role of the SAE embedding: \textit{CVDD (SAE space)} is exactly CVDD with the SAE encoder substituted for the activation embedding and training. The three framework instantiations share KMeans clustering and nearest-centroid cluster assignment, and differ in how the sparse subspace is identified: globally by activation frequency (\textsc{FreqMask-KM}), per-cluster by frequency (Local \textsc{FreqMask-KM}), or per-cluster by a learned mask trained jointly with a LoRA adapter against SAE reconstruction (\textsc{LearnedMask-LoRA}).

\paragraph{Representation space carries a large share of the gain.}
\looseness -1 \textit{CVDD (SAE space)} differs from \textit{CVDD} only in the encoder. Substituting the SAE encoder for the activation embedding lifts mean AUROC from $0.692$ to $0.806$. Adding a sparse local subspace on top lifts it further: \textsc{FreqMask-KM} and \textsc{LearnedMask-LoRA} reach $0.904$ and $0.909$ respectively, on par with or above the global density baselines (Mahalanobis $0.893$, GMM $0.903$) while operating in a much lower effective dimension.

\begin{table}[t]
\centering
\footnotesize
\caption{Comparison between the mask disabled against the deployed masked
configuration. The masks improve the bigger models more significantly. 
}
\label{tab:mask-ablation}
\setlength{\tabcolsep}{5pt}
\begin{tabular}{lcccccc}
\toprule
& Qwen 1.5B & Ministral 8B & LLaMA3 8B & Qwen3 8B & GPT-oss 20B & Gemma 4 26B \\
\midrule
No mask  & $0.776$ & $0.904$ & $0.937$ & $0.914$ & $0.781$ & $0.766$ \\
Mask     & $0.762$ & $0.914$ & $0.937$ & $0.915$ & $0.900$ & $0.856$ \\
$\Delta$ & $-0.014$ & $+0.010$ & $\phantom{+}0.000$ & $+0.001$ & $+0.119$ & $+0.090$ \\
\bottomrule
\end{tabular}
\end{table}

\paragraph{The mask is necessary for the largest models.}
We study the necessity of the mask by disabling it, scoring $\ell_1$ over all SAE features, and re-sweeping the
cluster count $\ncluster$ for the no-mask condition alone, so it is given
its own best case (\Cref{tab:mask-ablation}; full sweep and setup in
Appendix~\ref{app:freq-abl-K}). The masked configuration matches or
exceeds the best no-mask result on all five $\geq$8B models, and the gap
grows with model size: $+0.119$ AUROC on GPT-oss 20B
and $+0.090$ on Gemma 4 26B, against $+0.010$ on
Ministral 8B and a tie on LLaMA3 8B and Qwen3 8B. Larger models
pack more concepts into a given activation, so the local subspace is
harder to recover from the dense SAE representation and the mask
contributes more.

\paragraph{Within the SAE space, the mechanism for finding the subspace does not dominate.}
\textsc{FreqMask-KM} and \textsc{LearnedMask-LoRA} differ substantially in cost: global frequency selection is essentially free, whereas \textsc{LearnedMask-LoRA} requires fitting $\ncluster$ masks and per-cluster low-rank adapters. Both are competitive against the global density baselines while operating in a lower effective dimension. Their agreement is at the aggregate level rather than model by model: \textsc{FreqMask-KM} leads by $0.018$ on Ministral and $0.025$ on LLaMA3, the two are tied on Qwen3, and \textsc{LearnedMask-LoRA} leads by $0.019$ on GPT-oss and $0.048$ on Gemma (\Cref{tab:main-results}). Which construction wins is therefore model-dependent, but neither is far from the other on any $\geq$8B model. We read this as evidence for the structural claim: if local low effective dimension is the operative property, any reasonable procedure for surfacing that subspace should work. The comparability across instantiations supports the claim about structure rather than about any particular algorithm.

\subsection{Local Effective Dimension Is Genuinely Low}
\label{sec:exp-local-dim}

\begin{figure}[t]
\centering
\includegraphics[width=\textwidth]{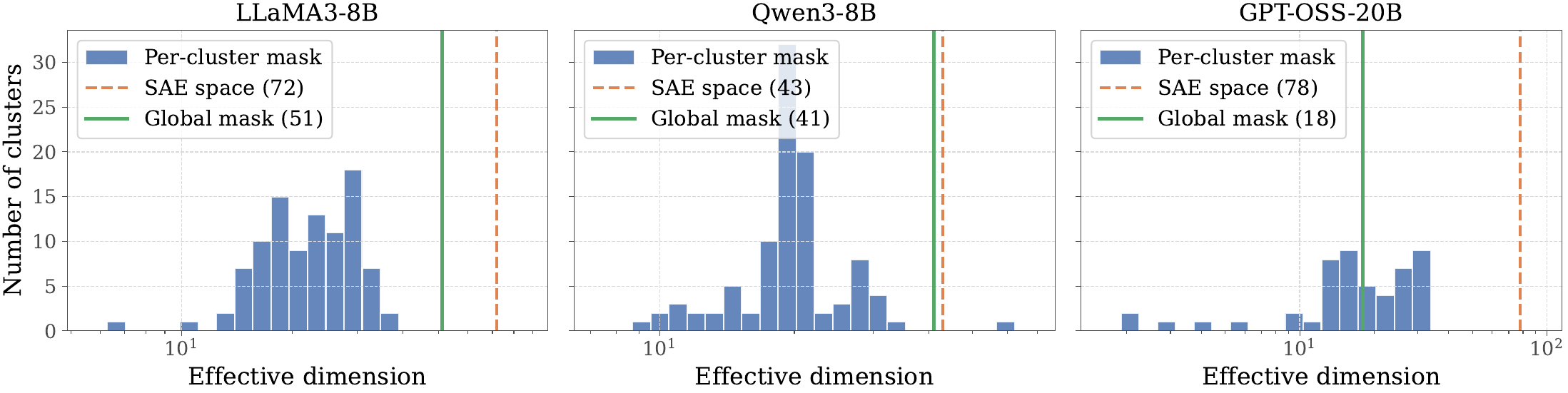}
\caption{Effective dimension ($d_{90}$) across representation spaces for three of the model families; two more are shown in Appendix~\ref{app:effective-dim} (Figure~\ref{fig:effective-dim-appendix}), and Gemma-4-26B is omitted because its selected mask has only $k=25$ features, which already bounds its masked $d_{90}$ values by $25$. Blue bars show the distribution of per-cluster $d_{90}$ values, where each cluster's $d_{90}$ is computed on that cluster's members within its own top-$k$ mask $\mathcal{M}_c$ (the $k$ SAE features that fire most often within the cluster, with $k$ as for the global mask; Appendix~\ref{app:effective-dim}). Vertical lines mark $d_{90}$ in the raw SAE feature space (dashed orange) and the global-mask subspace (solid green).}
\label{fig:effective-dim}
\vspace{-5mm}
\end{figure}

The framework's feasibility and its robustness to design choice both rest on the premise that safe cluster members occupy low-dimensional subspaces of the SAE feature space. We test this by computing $d_{90}$, the smallest number of PCA components explaining $90\%$ of variance, in three spaces: SAE space, the global-mask subspace spanned by the $k$ features selected by \textsc{FreqMask-KM}, and, for each cluster separately, a subspace of the same size $k$ spanned by the features most frequently active within that cluster (the mask used by Local \textsc{FreqMask-KM}). Definitions are deferred to Appendix~\ref{app:effective-dim}; Figure~\ref{fig:effective-dim} shows the per-cluster distribution for three of the model families, with two more in Appendix~\ref{app:effective-dim}.

Across the three models, the per-cluster $d_{90}$ is small in absolute terms (median $22$ for LLaMA3-8B, $20$ for Qwen3-8B, and $17.5$ for GPT-OSS-20B), a tiny fraction of the ambient SAE dimension ($d_2 = 16{,}384$) and well below the raw SAE-space values of $72$, $43$, and $78$ respectively. The per-cluster masks are not subsets of the global mask, so the three values describe different subspaces rather than successive restrictions of one another. Their relation is model-dependent: on LLaMA3-8B both masked subspaces are lower-dimensional than raw SAE space, with the per-cluster masks lowest (raw $72$, global $51$, per-cluster $22$); on Qwen3-8B the raw and global-mask dimensions are already close ($43$ and $41$), while the per-cluster masks give $20$; and on GPT-OSS-20B the global mask alone brings the dimension from $78$ to $18$, and the per-cluster masks give a similar median ($17.5$). The two additional model families in Appendix~\ref{app:effective-dim} (Figure~\ref{fig:effective-dim-appendix}) follow the LLaMA3 pattern, with per-cluster medians of $36$ (Qwen2-1.5B) and $40.5$ (Ministral-8B). Across these five families the local effective dimension is consistently a small fraction of the ambient SAE dimension, which is what makes per-cluster scoring well posed.

\section{Discussion and Limitations}
\label{sec:discussion}

\paragraph{Limits of the supervised assumption.} Supervised methods work only insofar as the labels cover the unsafe distribution at deployment, and the evidence across safety subfields suggests this coverage is rarely complete \citep{andriushchenko2024jailbreaking, mazeika2024harmbench}. The one-class framing trades a $0.04$--$0.08$ AUROC gap to the supervised probe on the $\geq$8B models (\Cref{tab:main-results}) for independence from this assumption, which is a reasonable trade in regimes where it is unreliable.

\begin{table}[tb]
\centering
\small
\caption{AUROC after per-cluster calibration fit from $1\%$ of the unsafe data and $1\%$ of the benign out-of-distribution data. Negatives are the held-out in-distribution safe points together with two novel benign domains absent from training (HellaSwag \citep{zellers2019hellaswag} and HH-RLHF-helpful \citep{bai2022training}); positives are the held-out unsafe points.
In each column, the best result among the calibrated unsupervised methods is in \textbf{bold} and the second best is \underline{underlined}. Means over $5$ seeds. Methods that score on a local low-dimensional subspace obtain near-optimal performance.
}

\label{tab:calib-auroc}
\setlength{\tabcolsep}{4pt}
\resizebox{\textwidth}{!}{%
\begin{tabular}{lcccccc}
\toprule
Method & Qwen2-1.5B & Ministral-8B & LLaMA3-8B & Qwen3-8B & GPT-OSS-20B & Gemma-4-26B \\
\midrule
\textsc{FreqMask-KM}, uncalibrated & 0.265 & 0.522 & 0.712 & 0.386 & 0.750 & 0.454 \\
\midrule
\textsc{FreqMask-KM}      & \textbf{0.986} & \underline{0.973} & \underline{0.978} & \textbf{0.985} & 0.969 & \underline{0.962} \\
\textsc{LearnedMask-LoRA} & \underline{0.985} & \textbf{0.990} & \textbf{0.987} & \underline{0.981} & \underline{0.982} & \textbf{0.976} \\
GMM                       & 0.953 & 0.969 & 0.935 & 0.967 & \textbf{0.985} & 0.901 \\
CVDD                      & 0.890 & 0.920 & 0.960 & 0.964 & 0.789 & 0.317 \\
BCE (supervised)          & 0.977 & 0.981 & 0.967 & 0.975 & 0.975 & 0.953 \\
\bottomrule
\end{tabular}%
}
\end{table}

\begin{figure}[!tb]
\centering
\includegraphics[width=0.75\textwidth]{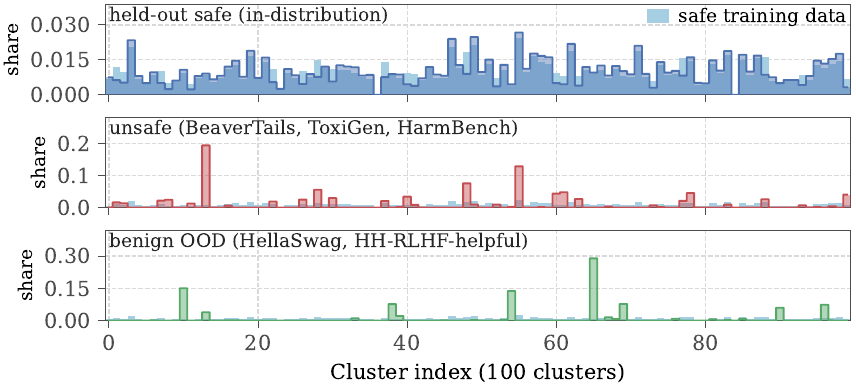}
\caption{Data distribution over clusters across held-out safe, unsafe,
and benign OOD data on LLaMA3-8B. Each panel
overlays the safe training data (light blue) with one other data source. The held-out safe data follows the
training occupancy closely, while unsafe data and novel benign data each
concentrate in a few clusters and in different clusters from one another.
}
\label{fig:cluster-occupancy}
\vspace{-3mm}
\end{figure}

\paragraph{Sensitivity to the safe distribution and calibration.}
Unsupervised detector assigns typicality score based on training representations of safe data. Consequently, benign regions of the input domain that are underrepresented during training may receive high anomaly scores regardless of their safety \citep{ruff2018deep}. Supervised detectors are also sensitive to such benign distribution shifts, while additionally relying on the unsafe training distribution remaining representative at deployment. The second requirement is arguably harder to satisfy, since unsafe inputs are open-ended and may evolve adversarially as new attack strategies emerge.

Importantly, the cluster structure learned from safe data makes it possible to correct the remaining sensitivity to benign shift using only a small labeled subset. This relies on empirical observation that unsafe data cluster assignment is highly non-homogeneous. That is, unsafe and novel benign data concentrate in small subsets of clusters, while safe data is spread more evenly (\Cref{fig:cluster-occupancy}). We employ this observation to design a simple correction method. Instead of modifying the learned representations (i.e., retraining), we adjust the scoring within each cluster via an affine transformation calibrated on a small labeled subset.
In particular, with $1\%$ of the unsafe data and $1\%$ of the benign OOD data, AUROC rises from $0.265$--$0.750$ to $0.962$--$0.990$ across the six models, and benign false-positive rate displays substantial improvement from $0.12$--$0.92$ to at most $0.041$ (\Cref{tab:calib-auroc,tab:calib-benign-fpr}). We note that such a low-sample regime instantiates a realistic practical scenario: While full unsafe and benign distributions are unavailable during the deployment, it is feasible to attain a small portion of each for calibration purposes. We defer the rest of the details and the additional discussion to Appendix~\ref{app:calibration}.

\section{Theory: Generalization under Local Sparsity}
\label{sec:theory}

We study two questions separately: whether \textsc{FreqMask-KM} estimates
the safe geometry accurately, and whether this geometry retains the
information needed to distinguish safe from unsafe inputs. The first
question concerns estimation from safe data. The second also depends on
the unsafe distribution and cannot be answered from local sparsity alone.

\paragraph{Population and empirical detectors.}
Let $\PP_0$ and $\PP_1$ be the distributions of safe and unsafe inputs.
Fix the representation $z(x):=E(a^{(\ell)}(x))\in\R_+^{d_2}$ and let
$P_y$ be the distribution of $z(X)$ for $X\sim\PP_y$, $y\in\{0,1\}$.
We analyze an independent sample
$\mathcal D_n=(Z_1,\ldots,Z_n)\sim P_0^n$, with $K$ and $k$ fixed.
Thus, the result treats the SAE as given; it also applies conditionally
on an SAE fitted using an independent safe sample. Assume $\E_{P_0}\|Z\|_2^2<\infty$, and fix population $K$-means centers
\begin{equation}
\label{eq:population-kmeans}
 (\mu_1^\star,\ldots,\mu_K^\star)
 \in\argmin_{\mu_1,\ldots,\mu_K}
       \E_{P_0}\min_{c\in[K]}\|Z-\mu_c\|_2^2.
\end{equation}
Let $c^\star(z)$ be the nearest-center index, breaking ties by the
smallest index, and let $\pi_c=P_0(c^\star(Z)=c)$.
We assume $\pi_{\min}:=\min_c\pi_c>0$; then
$\mu_c^\star=\E[Z\mid c^\star(Z)=c]$.

The frequency mask is \emph{global}, as in
Algorithm~\ref{alg:freqmask-km}. Set $p_j=P_0(Z_j>0)$, let
$S^\star\subset[d_2]$ contain the $k$ largest $p_j$, and put
$M^\star=\operatorname{diag}(\mathbf 1_{S^\star})$.
For a target safe false-positive rate $q\in(0,1)$, define
\begin{align}
\label{eq:population-score}
 s^\star(z)&=\|M^\star(z-\mu_{c^\star(z)}^\star)\|_1,\\
\label{eq:population-threshold}
 \tau_q&=\inf\{t:P_0(s^\star(Z)\leq t)\geq1-q\},
 \qquad f_q^\star(z)=\mathbf 1\{s^\star(z)>\tau_q\}.
\end{align}
This is the population one-class detector, not the optimal safe/unsafe
classifier. Write $\widehat\mu_c$, $\widehat c_n$, and $\widehat M$ for
the fitted centers, nearest-center assignment, and empirical global
frequency mask. Their score is
\begin{equation}
\label{eq:empirical-score}
 \widehat s_n(z)=
 \|\widehat M(z-\widehat\mu_{\widehat c_n(z)})\|_1,
 \qquad
 \widehat f_{n,q}(z)=\mathbf 1\{\widehat s_n(z)>\tau_q\}.
\end{equation}
We first compare the scores at the same population threshold. Independent
safe-quantile calibration is treated separately in
Appendix~\ref{app:theory-calibration}, as a deployment procedure rather
than a claim about the experimental protocol.

\paragraph{Local geometry and clustering error.}
For each population cluster, let $J_c\subseteq S^\star$ be the coordinates
that have nonzero conditional variance. Put $s_{\max}=\max_c|J_c|\leq k$.
This is a uniform clusterwise measure of local sparsity, distinct from
the average neighborhood quantity $s_{\mathrm{loc}}$ in
\eqref{eq:local-sparsity}.
\begin{assumption}\label{Ass:local-safe-geometry}
For every $c\in[K]$, conditionally on $c^\star(Z)=c$,
\begin{equation}
\label{eq:local-energy}
 \E\|M^\star(Z-\mu_c^\star)\|_2^2\leq v^2,
 \qquad
 \|M^\star(Z-\mu_c^\star)\|_2\leq B\quad\text{almost surely}.
\end{equation}
\end{assumption}
Here $v^2$ bounds the masked within-cluster covariance trace.  Let $I_c=\{i:c^\star(Z_i)=c\}$, $N_c=|I_c|$, and let
$\overline\mu_c$ be the mean over $I_c$, with value zero when $N_c=0$.
After matching empirical and population cluster labels, define
\begin{align}
\label{eq:clustering-stability}
 \rho_n&=\max_c\|M^\star(\widehat\mu_c-\overline\mu_c)\|_1,\\
\label{eq:safe-assignment-error}
 \eta_{0,n}&=P_0(\widehat c_n(Z)\neq c^\star(Z)\mid\mathcal D_n).
\end{align}
The matching is any fixed measurable rule, chosen from the training
sample. These are clustering errors, not assumptions that the errors
vanish: $\rho_n$ measures the effect of fitting the partition on the
centroids, and $\eta_{0,n}$ measures changed assignments of fresh safe
points. The theorem keeps both terms explicit.

\begin{assumption}\label{Ass:frequency-and-margin}
For $1\leq k<d_2$, the ordered frequencies satisfy
$\gamma:=p_{(k)}-p_{(k+1)}>0$. Moreover, for some $L_0<\infty$,
\begin{equation}
\label{eq:safe-anticoncentration}
 P_0(|s^\star(Z)-\tau_q|\leq t)\leq L_0t,
 \qquad t\geq0.
\end{equation}
\end{assumption}
The frequency gap permits recovery of the mask. The second condition
limits the mass of safe points whose decisions can change after a small
perturbation of the score.

\begin{theorem}\label{Thm:One}
Suppose \Cref{Ass:local-safe-geometry,Ass:frequency-and-margin} hold.
Let $\delta\in(0,1)$ and let
\begin{equation}
\label{eq:mask-recovery}
 n\pi_{\min}\geq8\log\frac{4K}{\delta},
 \qquad n\geq8\gamma^{-2}\log\frac{4d_2}{\delta} \qquad \varepsilon_n=\rho_n+
 \sqrt{\frac{2s_{\max}}{n\pi_{\min}}}
 \left(v+B\sqrt{2\log\frac{4K}{\delta}}\right).
\end{equation}
Then, with probability at least $1-\delta$ over $\mathcal D_n$, it holds
\begin{equation}
\label{eq:main-estimation-bound}
 P_0(\widehat f_{n,q}(Z)\neq f_q^\star(Z)\mid\mathcal D_n)
 \leq\eta_{0,n}+L_0\varepsilon_n.
\end{equation}
\end{theorem}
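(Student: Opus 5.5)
Three events each hold with probability at least $1-\delta/4$ or so: exact mask recovery $\widehat M=M^\star$, a minimum size $N_c\ge n\pi_c/2$ for every cluster, and concentration of the masked cluster means around the population centers. On their intersection I will show that every safe $z$ with $\widehat c_n(z)=c^\star(z)$ satisfies $|\widehat s_n(z)-s^\star(z)|\le\varepsilon_n$. The bound \eqref{eq:main-estimation-bound} then follows from a disagreement decomposition together with the anticoncentration condition in \Cref{Ass:frequency-and-margin}.

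\emph{Step 1: mask recovery.} Each empirical frequency $\widehat p_j$ averages $n$ i.i.d.\ Bernoulli$(p_j)$ variables. Hoeffding's inequality and a union bound over $d_2$ coordinates give $\max_j|\widehat p_j-p_j|<\gamma/2$ with probability at least $1-2d_2e^{-n\gamma^2/2}$. The condition $n\ge 8\gamma^{-2}\log(4d_2/\delta)$ makes this at least $1-\delta/4$, with room to spare. On this event the $k$ coordinates with the largest $\widehat p_j$ are exactly $S^\star$, since the gap $\gamma$ cannot be crossed. Hence $\widehat M=M^\star$.

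\emph{Step 2: cluster sizes.} $N_c$ is Binomial$(n,\pi_c)$. The multiplicative Chernoff bound gives $P(N_c<n\pi_c/2)\le e^{-n\pi_c/8}\le\delta/(4K)$, using $n\pi_{\min}\ge 8\log(4K/\delta)$. A union bound over $c$ yields $N_c\ge n\pi_{\min}/2$ for all $c$ with probability at least $1-\delta/4$.

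\emph{Step 3: concentration of masked cluster means.} Condition on the index sets $I_c$. The points $Z_i$, $i\in I_c$, are then i.i.d.\ from $P_0(\cdot\mid c^\star=c)$. The vector $W_i=M^\star(Z_i-\mu_c^\star)$ has mean zero, norm at most $B$, and second moment at most $v^2$. It is supported on $J_c$, because coordinates with zero conditional variance equal their means almost surely. A vector Bernstein/McDiarmid inequality for bounded Hilbert-space sums gives, with probability at least $1-\delta/(4K)$,
\[
\Bigl\|\tfrac1{N_c}\textstyle\sum_{i\in I_c} W_i\Bigr\|_2\le \frac{v+B\sqrt{2\log(4K/\delta)}}{\sqrt{N_c}}.
\]
The $v$ term comes from Jensen applied to the expectation, and the $B$ term from bounded differences of size $2B/N_c$. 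Since the vector lives in $J_c$, Cauchy--Schwarz gives $\|\cdot\|_1\le\sqrt{s_{\max}}\|\cdot\|_2$. Combining this with $N_c\ge n\pi_{\min}/2$ gives
\[
\|M^\star(\overline\mu_c-\mu_c^\star)\|_1\le\sqrt{\frac{2s_{\max}}{n\pi_{\min}}}\Bigl(v+B\sqrt{2\log\tfrac{4K}{\delta}}\Bigr).
\]
The triangle inequality with the definition of $\rho_n$ then gives $\max_c\|M^\star(\widehat\mu_c-\mu_c^\star)\|_1\le\varepsilon_n$. A union bound over the steps gives total failure probability at most $\delta$.

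\emph{Step 4: decision disagreement.} On these events, suppose $\widehat c_n(z)=c^\star(z)=c$. The reverse triangle inequality in $\ell_1$ gives $|\widehat s_n(z)-s^\star(z)|\le\|M^\star(\widehat\mu_c-\mu^\star_c)\|_1\le\varepsilon_n$. Both detectors use the same threshold $\tau_q$, so their decisions can differ only if $|s^\star(z)-\tau_q|\le\varepsilon_n$. Therefore
\[
P_0(\widehat f\ne f^\star\mid\mathcal D_n)\le\eta_{0,n}+P_0(|s^\star(Z)-\tau_q|\le\varepsilon_n)\le\eta_{0,n}+L_0\varepsilon_n.
\]

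\emph{Main obstacle.} The delicate point is Step 3. Conditioning on the random index sets must preserve the i.i.d.\ conditional law, which holds because $c^\star$ is a fixed population rule. The vector concentration must also be stated with exactly the constants in $\varepsilon_n$. I would use the McDiarmid argument on $\|\sum W_i\|_2$, applied with the $\delta/(4K)$ budget per cluster. The label-matching step adds no difficulty, since it only enters through $\rho_n$ and $\eta_{0,n}$, which the statement keeps explicit.
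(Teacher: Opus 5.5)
Your proposal is correct and follows the paper's proof essentially step for step. It uses a Chernoff bound on the oracle cluster counts, McDiarmid on the masked oracle cluster mean conditional on the membership indicators with Cauchy--Schwarz on $J_c$, Hoeffding plus a union bound for exact recovery of the mask, and then the reverse triangle inequality with the threshold-band argument; the only differences are cosmetic (Hoeffding at level $\gamma/2$ with a $\delta/4$ budget per event, versus the paper's $\gamma/4$ with $\delta/2$ assigned to mask recovery), and both accountings close within $\delta$.
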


\paragraph{Proof idea.}
Each population cluster contains at least $n\pi_c/2$ observations with
high probability. Its masked sample mean concentrates in $\ell_2$;
converting to $\ell_1$ costs only $\sqrt{s_{\max}}$ because the error
is supported on $J_c$. Uniform frequency concentration identifies the
global mask. Outside changed assignments, the scores then differ by
at most $\varepsilon_n$, and \eqref{eq:safe-anticoncentration} bounds
the probability of a changed decision.  For balanced clusters, the sampling term is of order
$\widetilde O(\sqrt{Ks_{\max}/n})$ when $L_0v,L_0B$ are bounded.
In particular, if $K,s_{\max}\lesssim s_{\mathrm{loc}}$, this becomes
$\widetilde O(s_{\mathrm{loc}}/\sqrt n)$. These are additional scaling
conditions, not consequences of pointwise sparsity. Mask recovery has
logarithmic dependence on $d_2$; the theorem does not bound SAE training
or the dimension dependence of $\rho_n$ and $\eta_{0,n}$.

\section{Conclusion}
\label{sec:conclusion}

Anomaly detection is a viable approach to LLM safety once the local sparsity of activations is taken into account. Our framework decomposes the global problem into local sub-problems of low effective dimension. It admits a range of instantiations across embedding spaces, clustering choices, sparsification mechanisms, and scoring rules. Across six model families, multiple instantiations reach performance competitive with global one-class baselines. The low per-cluster effective dimension they rely on is empirically present in the representations. We read the comparability of these instantiations as evidence for the structural claim rather than for any particular algorithm.   We believe that with stronger embeddings, unsupervised safety detection can be used in deployed LLM systems.

\section*{Acknowledgment}
Xin Chen is supported by the Open Philanthropy AI Fellowship and the Vitalik Buterin Fellowship from the Future of Life Institute. The research received further support through ELSA (European Lighthouse on Secure and Safe AI) funded by the European Union under grant agreement No. 101070617 and the Swiss National Science Foundation under NCCR Automation, grant agreement 51NF40 180545.  Alexander Shevchenko was supported by the Swiss National Science Foundation (SNSF) Grant 204439. Gil Kur conducted the part of the work during his visit to the IDEAL Institute, hosted by Lev Reyzin, which was supported by NSF ECCS-2217023.

\bibliographystyle{plainnat}
\bibliography{references}

\newpage
\appendix
\section{Broader Impacts}
\label{app:broader-impacts}

The motivation for a one-class framing is that supervised safety methods
depend on having labels for the unsafe distribution at deployment, a
condition that is rarely fully met in practice
(Section~\ref{sec:discussion}). A detector that scores typicality with
respect to safe behavior, rather than similarity to a fixed catalog of
harms, can in principle flag previously unseen attack strategies and
out-of-domain inputs without requiring those specific examples to be
labeled in advance. To the extent the framework in this paper extends, it
offers a route to safety methods that degrade more gracefully as
adversarial techniques evolve, complementing rather than replacing
existing supervised pipelines.

The main risk we see runs in the other direction. The sensitivity to the
safe distribution discussed in Section~\ref{sec:discussion} can produce
false positives on legitimate benign inputs whose distribution differs
from the safe training set, and an operator may respond by loosening the
threshold, which simultaneously reduces detection of genuinely unsafe
inputs. Operators relying on this kind of detector in isolation, without
the calibration of Section~\ref{sec:discussion} or a comparable
safeguard against benign distribution shift, risk worse safety outcomes
than a well-calibrated supervised classifier on a stable distribution. We explicitly do not advocate for unsupervised
methods as standalone replacements for supervised safety systems.

\section{Per-Cluster Calibration from a Small Labeled Budget}
\label{app:calibration}

This section details the recalibration procedure used in
Section~\ref{sec:discussion}, gives the observation that motivates it,
and reports the full results across all six models.

\paragraph{Motivation.}
The framework of \Cref{sec:framework} is fit without supervision,
whereas the supervised probe of \Cref{sec:exp-main} requires enough
labels to train a classifier. Many deployments fall between these two
regimes: they can label a small sample, but not one large enough to
characterize the unsafe distribution. We use such a sample to rescale
the scores of an already-trained detector instead of training a
classifier on it. The procedure reads only precomputed scores and
cluster assignments, so the detector and its training are left
untouched and no GPU is required.

The rescaling is fit per cluster because the clusters differ from one
another in three respects, all measured on the deployed detector and
shown in
\Cref{fig:cluster-distributions,fig:cluster-scales} and
\Cref{fig:cluster-occupancy}.
First, the score is on a different scale in each cluster. On LLaMA3-8B
the single global threshold that costs $5\%$ false positives overall is
higher than every safe score in $71$ of the $98$ clusters that contain
at least ten safe points, and lower than the median safe score in $7$ of
them, so that one threshold flags no safe point in the first group and
more than half of the safe points in the second. Second, the clusters
differ in how well the within-cluster distance separates safe from
unsafe points, from near-perfect separation to none. Third, unsafe
points and benign out-of-distribution points are concentrated in
different clusters. To make this precise, we count for each cluster how
many unsafe evaluation points it contains, and then take clusters in
decreasing order of that count until they account for half of all unsafe
points. Between $2$ and $9$ clusters are enough, out of the $50$ to
$200$ the model has, and a large fraction of the remaining clusters
contain no unsafe point at all. Repeating the same count for the benign
out-of-distribution points takes even fewer clusters, between $1$ and
$3$. On every one of the six models, these two groups of clusters have
no cluster in common. This describes where each kind of point is
concentrated and not where it is allowed to occur: individual clusters,
including some in the two groups, do contain both kinds of point.
\Cref{fig:cluster-occupancy} shows this per cluster on LLaMA3-8B, and
\Cref{fig:cluster-occupancy-qwen2,fig:cluster-occupancy-ministral,fig:cluster-occupancy-qwen3,fig:cluster-occupancy-gptoss,fig:cluster-occupancy-gemma}
repeat the same view on the other five: held-out safe data follows the
training occupancy, while the two atypical groups each collapse onto a
few clusters. Lowering the score in the clusters that hold most of the
benign out-of-distribution points therefore need not lower it in the
clusters that hold most of the unsafe points.

The rescaling has to vary across clusters to have any effect on the
metrics we report. Both AUROC and recall at a fixed FPR depend on the
scores only through the ranking they induce. A single global map
$s \mapsto a_0 s + b_0$ with $a_0 > 0$ is strictly increasing, so it
leaves that ranking, and therefore both metrics, unchanged. Coefficients
that differ across clusters are what allow inputs assigned to different
clusters to be reordered relative to one another. A global fit is still
computed, but it enters the procedure as the shrinkage target
$(a_0, b_0)$ defined below, not as a calibration in its own right.

\begin{figure}[tb]
\centering
\includegraphics[width=\textwidth]{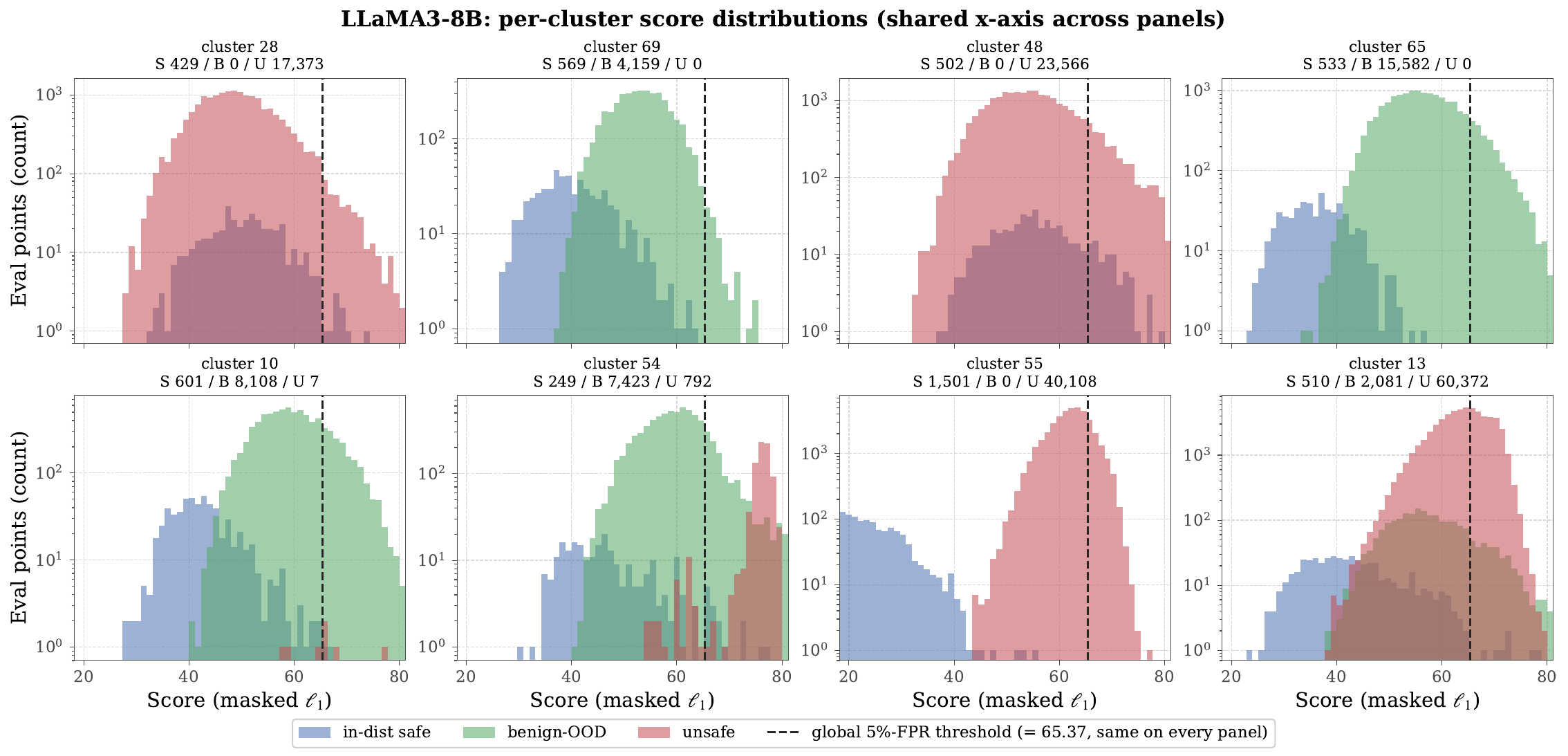}
\caption{Per-cluster score distributions on LLaMA3-8B for the three
groups: in-distribution safe, benign out-of-distribution, and unsafe.
Counts are on a log axis, since the smaller group is invisible at linear
scale (cluster~13 holds $60{,}372$ unsafe points against $510$ safe).
All panels use the same $x$-range, so scores are directly comparable
across panels. The dashed line is the \emph{same} global threshold in
every panel, the one costing $5\%$ false positives over all
in-distribution safe points. Its position within each cluster's score
distribution differs: in cluster~55 it lies in the middle of the unsafe
distribution, while in clusters~65 and~69 most benign
out-of-distribution points fall below it. Clusters are selected by a
fixed rule (the three largest by benign-OOD count, the three largest by
unsafe count, and the two largest containing no unsafe point,
deduplicated) and ordered by median score. The other five models behave
similarly; \Cref{fig:cluster-scales} reports the per-cluster score
spread for all six.}
\label{fig:cluster-distributions}
\end{figure}

\begin{figure}[tb]
\centering
\includegraphics[width=\textwidth]{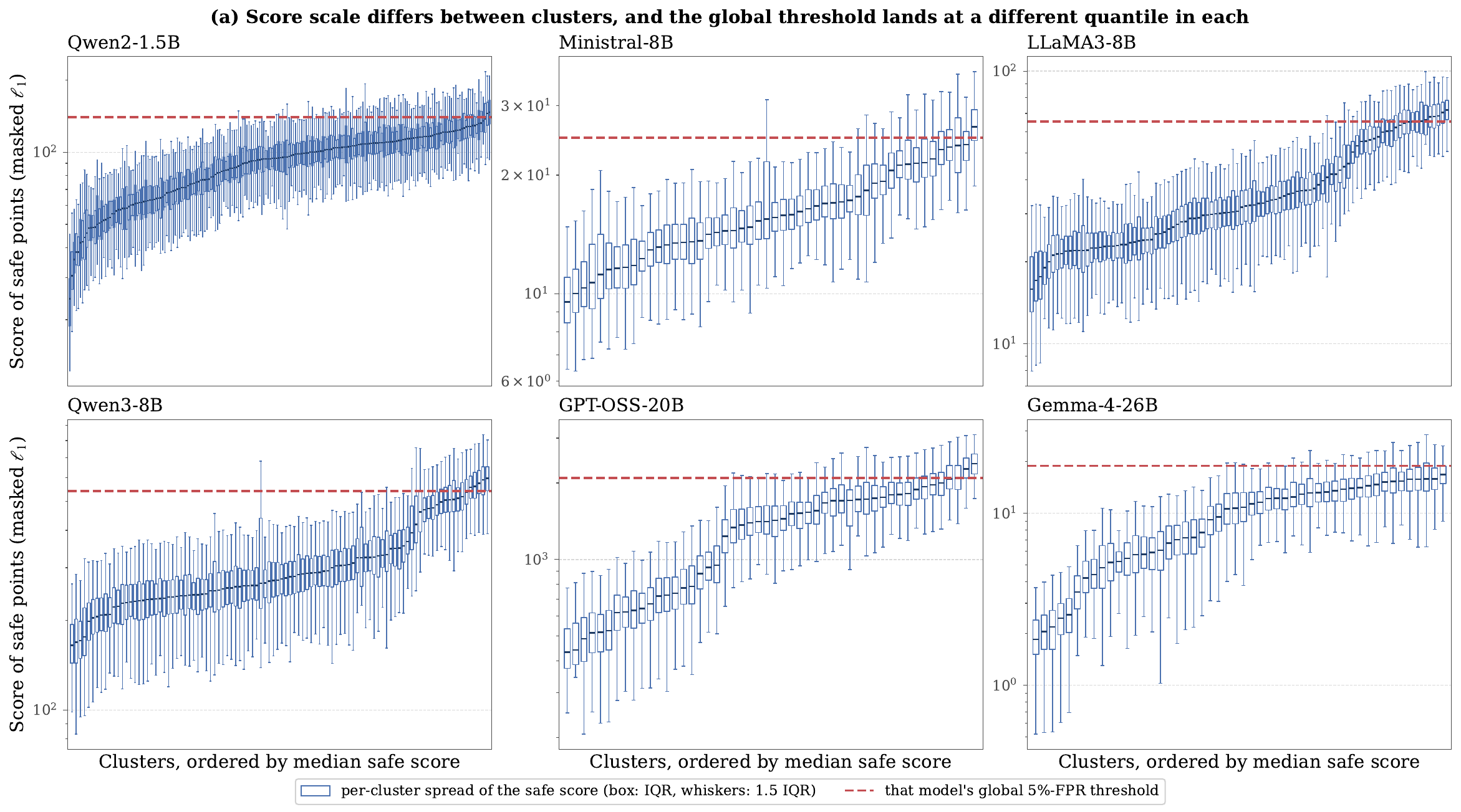}
\caption{Per-cluster spread of the score over in-distribution safe
points, for all six models, clusters ordered by median, with each
model's own global $5\%$-FPR threshold as a dashed line. The
per-cluster medians span roughly an order of magnitude within every
model, and the thresholds themselves differ by two orders of magnitude
across models ($18.9$ on Gemma-4-26B to $2094$ on GPT-OSS-20B), so no
score scale is shared either within or across models.}
\label{fig:cluster-scales}
\end{figure}

\begin{figure}[tb]
\centering
\includegraphics[width=0.92\textwidth]{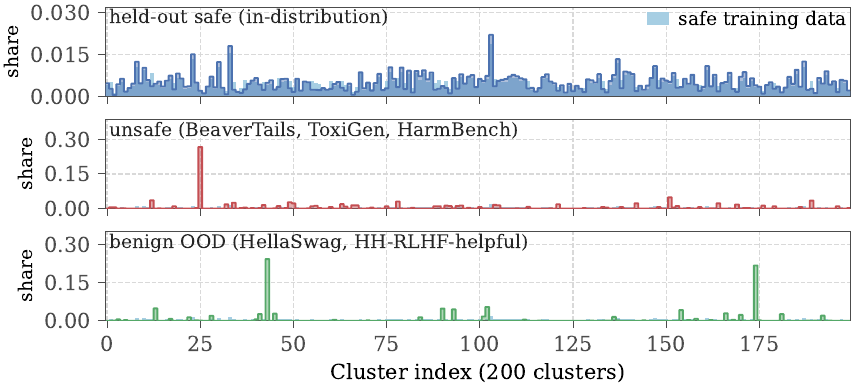}
\caption{Per-cluster occupancy on Qwen2-1.5B (200 clusters). Axes,
colors and panel order are as in \Cref{fig:cluster-occupancy}.
Total variation distance from the training occupancy is
$0.11$ for held-out safe, $0.72$ for unsafe and $0.86$ for benign
out-of-distribution.}
\label{fig:cluster-occupancy-qwen2}
\end{figure}

\begin{figure}[tb]
\centering
\includegraphics[width=0.92\textwidth]{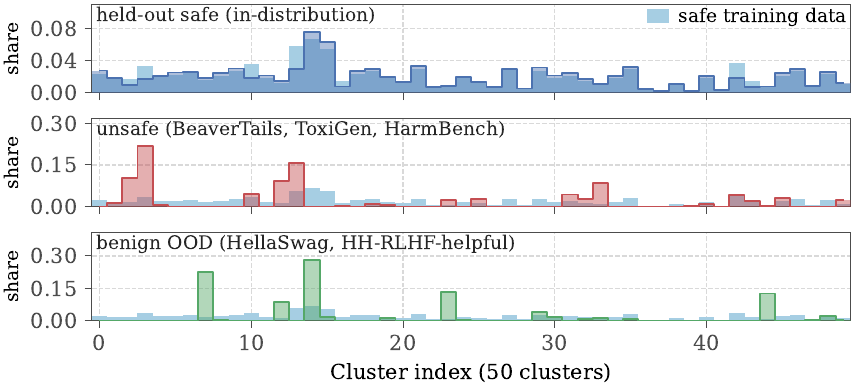}
\caption{Per-cluster occupancy on Ministral-8B (50 clusters). Axes,
colors and panel order are as in \Cref{fig:cluster-occupancy}.
Total variation distance from the training occupancy is
$0.10$ for held-out safe, $0.64$ for unsafe and $0.76$ for benign
out-of-distribution.}
\label{fig:cluster-occupancy-ministral}
\end{figure}

\begin{figure}[tb]
\centering
\includegraphics[width=0.92\textwidth]{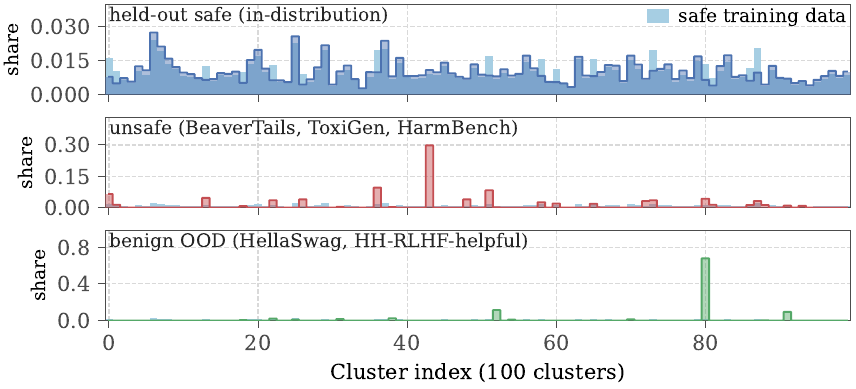}
\caption{Per-cluster occupancy on Qwen3-8B (100 clusters). Axes,
colors and panel order are as in \Cref{fig:cluster-occupancy}.
Total variation distance from the training occupancy is
$0.11$ for held-out safe, $0.73$ for unsafe and $0.90$ for benign
out-of-distribution.}
\label{fig:cluster-occupancy-qwen3}
\end{figure}

\begin{figure}[tb]
\centering
\includegraphics[width=0.92\textwidth]{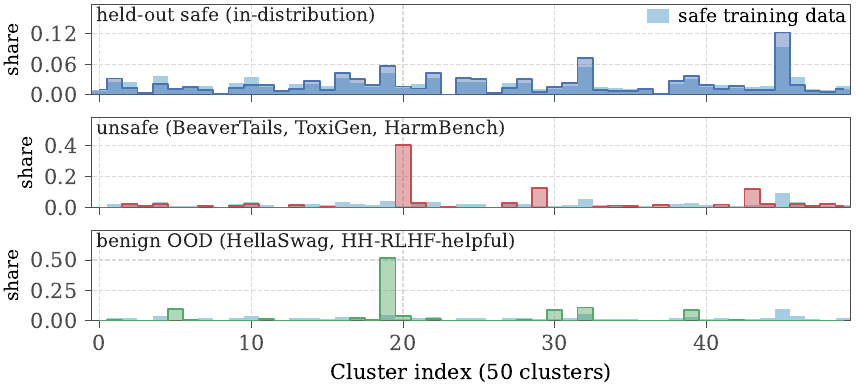}
\caption{Per-cluster occupancy on GPT-OSS-20B (50 clusters). Axes,
colors and panel order are as in \Cref{fig:cluster-occupancy}.
Total variation distance from the training occupancy is
$0.17$ for held-out safe, $0.67$ for unsafe and $0.78$ for benign
out-of-distribution.}
\label{fig:cluster-occupancy-gptoss}
\end{figure}

\begin{figure}[tb]
\centering
\includegraphics[width=0.92\textwidth]{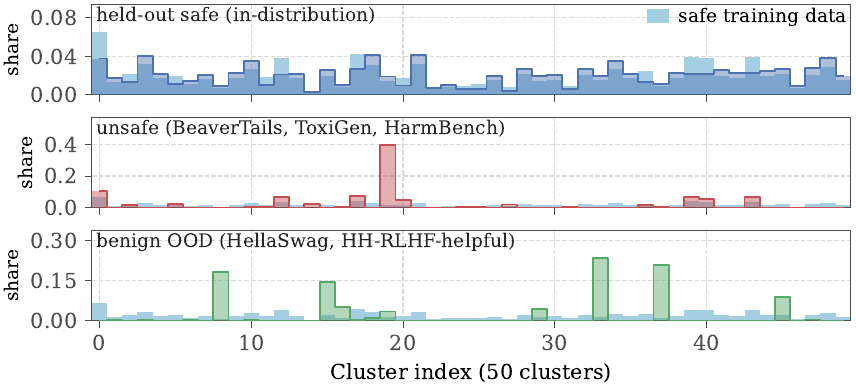}
\caption{Per-cluster occupancy on Gemma-4-26B (50 clusters). Axes,
colors and panel order are as in \Cref{fig:cluster-occupancy}.
Total variation distance from the training occupancy is
$0.17$ for held-out safe, $0.62$ for unsafe and $0.87$ for benign
out-of-distribution.}
\label{fig:cluster-occupancy-gemma}
\end{figure}

\paragraph{Setup.} A one-class detector routes each input $x$ to its
nearest latent cluster $c(x)$ and produces a score $s(x)$, the (masked)
distance to that cluster's centroid. At deployment a single global
threshold on $s(x)$ yields the safe/unsafe decision.

\paragraph{Labeled sample.} The calibration sample is drawn once per
run and contains three groups: a fraction $p$ of the unsafe inputs, the
same fraction $p$ of the benign out-of-distribution inputs, and half of
the in-distribution safe inputs. The safe pool is not budget-limited
because in-distribution safe data is abundant by
assumption, whereas
labeled unsafe examples and labeled examples of a novel benign domain
are equally costly to obtain, so they are drawn at a common rate. We refer to $p$ as the coupled budget; it is the horizontal axis of \Cref{fig:calib-benign-a,fig:calib-benign-b,fig:calib-benign-c}.
The inputs not drawn form the evaluation set, so the two are disjoint
and no calibration point is scored at evaluation time.

\paragraph{Calibration map.} We replace the raw score with a per-cluster
affine map of it,
\begin{equation}
g(x) = a_{c(x)}\, s(x) + b_{c(x)}, \qquad
\Pr(\text{unsafe}\mid x) = \sigma\!\left(g(x)\right),
\qquad \sigma(z) = \frac{1}{1 + e^{-z}},
\end{equation}
so that every cluster $c$ has its own coefficient pair
$(a_c, b_c) \in \mathbb{R}^2$. Benign out-of-distribution points enter
the fit with the safe label. Scoring uses $g(x)$ with a single global
threshold, and all reported metrics are threshold-free except the
false-positive rate defined below.

\paragraph{Importance weights.} The three groups are sampled at
different rates: a fraction $p$ of the unsafe points, the same fraction
$p$ of the benign out-of-distribution points, and one half of the
in-distribution safe points. The calibration sample therefore has a
different composition than the data the detector is applied to. At
$p = 1\%$ it holds one labeled unsafe point for every $100$ in the
unsafe pool, but one labeled safe point for every $2$ in the safe pool,
so unsafe points are under-represented in it by a factor of $50$
relative to safe points. An unweighted fit would place the coefficients,
and in particular the offsets $b_c$, where they suit that composition
rather than the full data. We therefore weight each point by the
reciprocal of its group's sampling fraction,
\begin{equation}
w_i =
\begin{cases}
1/p & \text{if $i$ is unsafe},\\
1/p & \text{if $i$ is benign out-of-distribution},\\
2   & \text{if $i$ is in-distribution safe},
\end{cases}
\end{equation}
so that each labeled point counts once for every point of its group that
it stands for. The weight $2$ on in-distribution safe points is that
same rule applied to a sampling fraction of one half: each labeled safe
point stands for itself and for one safe point that was not sampled.
With these weights the weighted group sizes in the calibration sample
equal the true group sizes, and the weighted sum in the objectives below
is an unbiased estimate of the same sum taken over the full data.

\paragraph{Global anchor.} The labeled sample is small and is divided
over $50$ to $200$ clusters, so many clusters receive few labeled points
and some receive none. Fitting every cluster independently on its own
points is therefore not possible: a cluster whose labeled points are all
safe or all unsafe is linearly separable and has no finite logistic fit,
and a cluster with no labeled point has nothing to fit at all. We
introduce $(a_0, b_0)$ as the estimate to fall back on in these cases.
It is obtained by weighted logistic regression over \emph{all} labeled
points pooled together, ignoring cluster membership:
\begin{equation}
(a_0, b_0) = \argmin_{a, b} \;
\sum_{i} w_i \, \ell\!\left(y_i,\, a s_i + b\right)
+ \varepsilon \left( a^2 + b^2 \right),
\end{equation}
where $y_i \in \{0,1\}$ labels input $i$ as safe (including benign-OOD)
or unsafe and $\ell$ is the logistic loss,
\begin{equation}
\ell(y, z) = -\Bigl[\, y \log \sigma(z) + (1 - y) \log\bigl(1 - \sigma(z)\bigr) \Bigr].
\end{equation}
The term $\varepsilon\,(a^2 + b^2)$ with $\varepsilon = 10^{-3}$ is a
small ridge that keeps the minimizer finite when the labeled sample is
close to linearly separable.

Two properties make this pair a suitable fallback. It is estimated from
every labeled point at once, so it remains well determined at budgets at
which individual clusters hold almost no labeled data. It is also the
best affine map available to a procedure that may not use cluster
membership, so taking it as the target of the
penalty in the next paragraph states the assumption that a cluster
behaves like the pooled data unless its own labeled points show
otherwise. Shrinking toward $(0, 0)$ instead, which is what a plain
ridge penalty would do, would drive $a_c$ and $b_c$ toward zero and
discard the score entirely in exactly those clusters that hold too
little labeled data to estimate anything. Note that $(a_0, b_0)$ is
never used to score inputs on its own: as shown above, a single global
affine map leaves AUROC and recall unchanged.

\paragraph{Per-cluster estimation.} Each cluster then minimizes the same
weighted loss, restricted to the points assigned to that cluster, with
an $\ell_2$ penalty pulling its coefficients toward the anchor,
\begin{equation}
(a_c, b_c) = \argmin_{a, b} \;
\sum_{i \,:\, c(x_i) = c} w_i \, \ell\!\left(y_i,\, a s_i + b\right)
+ \lambda \left\lVert (a, b) - (a_0, b_0) \right\rVert_2^2 ,
\end{equation}
initialized at $(a_0, b_0)$. The likelihood term is a sum over the
labeled points assigned to cluster $c$ while $\lambda$ is fixed, so the
penalty dominates the objective for clusters holding few labeled points
and the likelihood dominates for clusters holding many; a cluster with
no labeled points retains the anchor exactly. The estimator therefore
interpolates between the global map and an unconstrained per-cluster fit
as a function of how many labeled points each cluster receives. We
select $\lambda \in \{1, 10, 100\}$ by cross-validation within the
labeled sample, scoring folds by AUROC. Note that a negative $a_c$ is
permitted, which is what allows the procedure to repair clusters in
which the raw score ranks unsafe points as \emph{closer} to the centroid
than safe ones; such clusters exist on every model
(\Cref{fig:cluster-scales}).

\paragraph{Evaluation protocol.} The test pool is fixed and always
contains all three groups, so the numbers before and after calibration
are computed on identical data. Negatives are the held-out
in-distribution safe points
together with the held-out benign out-of-distribution points
(HellaSwag, $10{,}042$ examples, and HH-RLHF-helpful, $43{,}835$
examples); positives are the held-out unsafe pool. A detector that flags a novel benign domain is therefore
penalized by this metric.
Two quantities are reported. AUROC uses this pooled negative set, so it
is a strictly harder quantity than the AUROC of \Cref{tab:main-results},
whose negatives are in-distribution safe only; the two are not
comparable. The benign false-positive rate is the fraction of held-out
benign-OOD points scored above the threshold that costs $5\%$ false
positives on held-out in-distribution safe points alone, which is the
same operating point at which recall is reported, so recall and the
benign false-positive rate are computed at the same threshold. All
numbers are means over five seeds.

\paragraph{Results across models.}
\Cref{fig:calib-benign-a,fig:calib-benign-b,fig:calib-benign-c} report
every method on all six models as a function of the coupled budget. At
the largest budget we measure ($p = 0.02$), per-cluster calibration
raises AUROC from $0.267$--$0.750$ to $0.962$--$0.987$ for
\textsc{FreqMask-KM}, and from $0.308$--$0.476$ to $0.978$--$0.992$ for
\textsc{LearnedMask-LoRA}. Unlike the unsafe-only budget studied in
earlier versions of this experiment, the gain here is monotone in the
budget at the low end: even at the smallest budget we measure
($p = 3\times10^{-5}$, of the order of ten labeled examples of each
kind) the calibrated score is above the uncalibrated one on every model
and both detectors, reaching $0.730$--$0.892$ for \textsc{FreqMask-KM}
and $0.645$--$0.824$ for \textsc{LearnedMask-LoRA}. The reason is that
the uncalibrated detector is very poor on this test pool to begin with,
so even a coarse per-cluster offset estimated from a handful of points
is an improvement, whereas against in-distribution negatives alone the
uncalibrated detector is already strong and a noisy calibration can
reduce AUROC at the smallest budgets.

\Cref{tab:calib-recall} reports recall at the $5\%$ operating point,
the third quantity reported for this
experiment, alongside the AUROC of Section~\ref{sec:discussion}.

\begin{table}[tb]
\centering
\small
\caption{Recall at the threshold costing $5\%$ false positives on the
in-distribution safe points, at the coupled budget $p = 1\%$. Higher is
better. The first row is the deployed detector before calibration.}
\label{tab:calib-recall}
\setlength{\tabcolsep}{4pt}
\resizebox{\textwidth}{!}{%
\begin{tabular}{lcccccc}
\toprule
Method & Qwen2-1.5B & Ministral-8B & LLaMA3-8B & Qwen3-8B & GPT-OSS-20B & Gemma-4-26B \\
\midrule
\textsc{FreqMask-KM}, uncalibrated & 0.000 & 0.051 & 0.137 & 0.000 & 0.295 & 0.004 \\
\midrule
\textsc{FreqMask-KM}      & 0.970 & 0.840 & 0.913 & 0.978 & 0.777 & 0.681 \\
\textsc{LearnedMask-LoRA} & 0.957 & 0.977 & 0.981 & 0.956 & 0.902 & 0.837 \\
GMM                       & 0.531 & 0.882 & 0.743 & 0.819 & 0.894 & 0.402 \\
CVDD                      & 0.620 & 0.527 & 0.638 & 0.783 & 0.117 & 0.002 \\
BCE (supervised)          & 0.910 & 0.947 & 0.905 & 0.926 & 0.911 & 0.800 \\
\bottomrule
\end{tabular}%
}
\end{table}

\begin{table}[tb]
\centering
\small
\caption{False-positive rate on the held-out benign out-of-distribution domains, measured at the threshold costing $5\%$ false positives on the in-distribution safe points, at the coupled budget $p = 1\%$. Lower is better. The first row is the deployed detector before calibration. Means over $5$ seeds.}
\label{tab:calib-benign-fpr}
\setlength{\tabcolsep}{4pt}
\resizebox{\textwidth}{!}{%
\begin{tabular}{lcccccc}
\toprule
Method & Qwen2-1.5B & Ministral-8B & LLaMA3-8B & Qwen3-8B & GPT-OSS-20B & Gemma-4-26B \\
\midrule
\textsc{FreqMask-KM}, uncalibrated & 0.920 & 0.730 & 0.155 & 0.832 & 0.123 & 0.622 \\
\midrule
\textsc{FreqMask-KM}      & 0.002 & 0.012 & 0.015 & 0.003 & 0.041 & 0.026 \\
\textsc{LearnedMask-LoRA} & 0.000 & 0.002 & 0.005 & 0.007 & 0.007 & 0.004 \\
GMM                       & 0.068 & 0.001 & 0.011 & 0.010 & 0.002 & 0.009 \\
CVDD                      & 0.001 & 0.014 & 0.038 & 0.014 & 0.000 & 0.831 \\
BCE (supervised)          & 0.070 & 0.018 & 0.045 & 0.032 & 0.017 & 0.035 \\
\bottomrule
\end{tabular}%
}
\end{table}

\begin{figure}[tb]
\centering
\includegraphics[width=\textwidth]{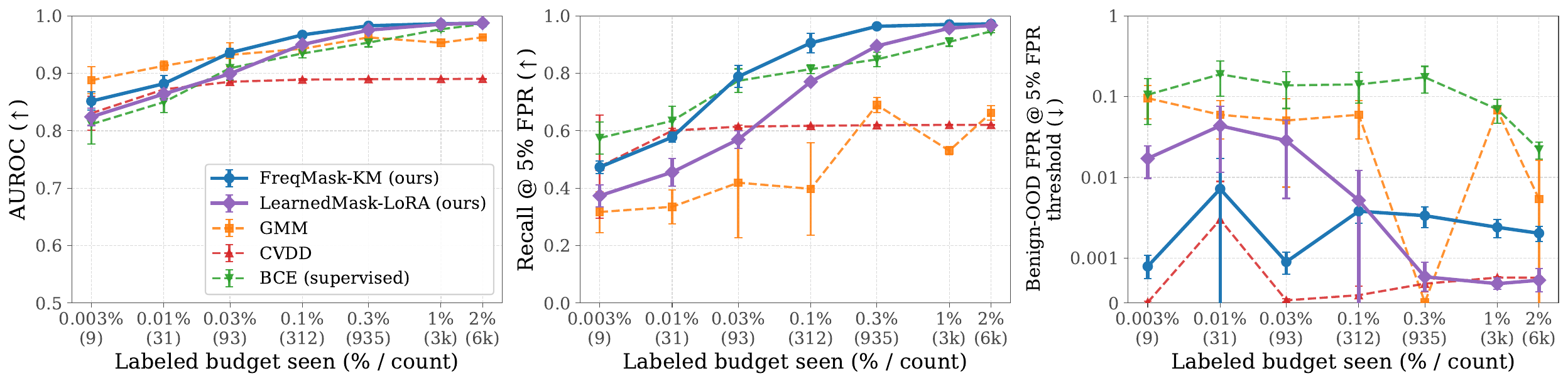}\\[0.4em]
\includegraphics[width=\textwidth]{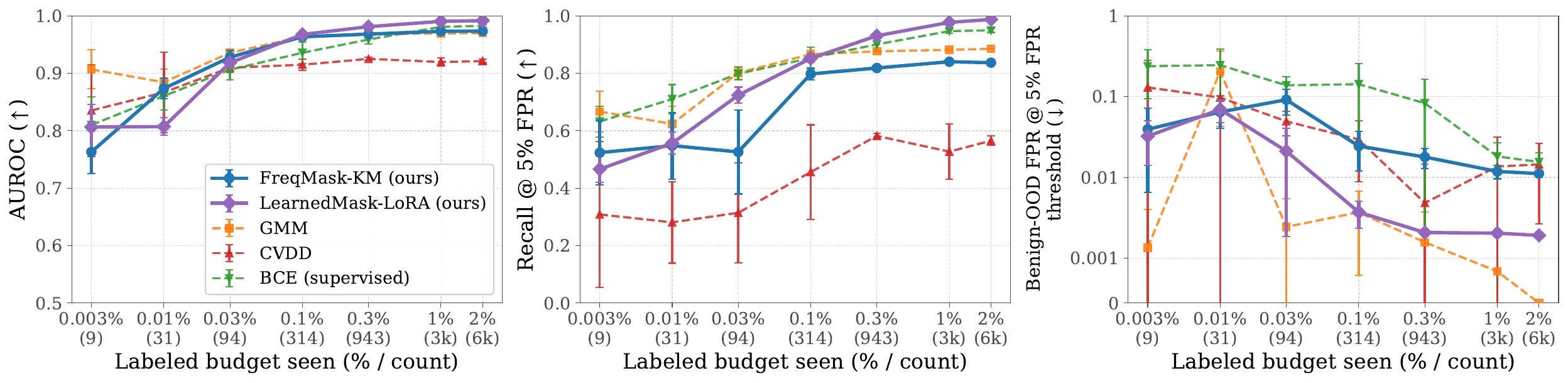}
\caption{Calibration against the coupled budget for Qwen2-1.5B (top) and
Ministral-8B (bottom). AUROC (left, negatives are in-distribution safe
together with the benign domains), recall at $5\%$ FPR (middle), and
benign out-of-distribution FPR at that same threshold (right, lower is
better, symmetric-log axis so exact zeros remain visible). Our methods
are solid, baselines dashed. Markers are means over $5$ seeds and bars
are $\pm$std; at the smallest budgets the sample is of the order of ten
labeled examples, so several baselines have a spread comparable to their
mean.}
\label{fig:calib-benign-a}
\end{figure}

\begin{figure}[tb]
\centering
\includegraphics[width=\textwidth]{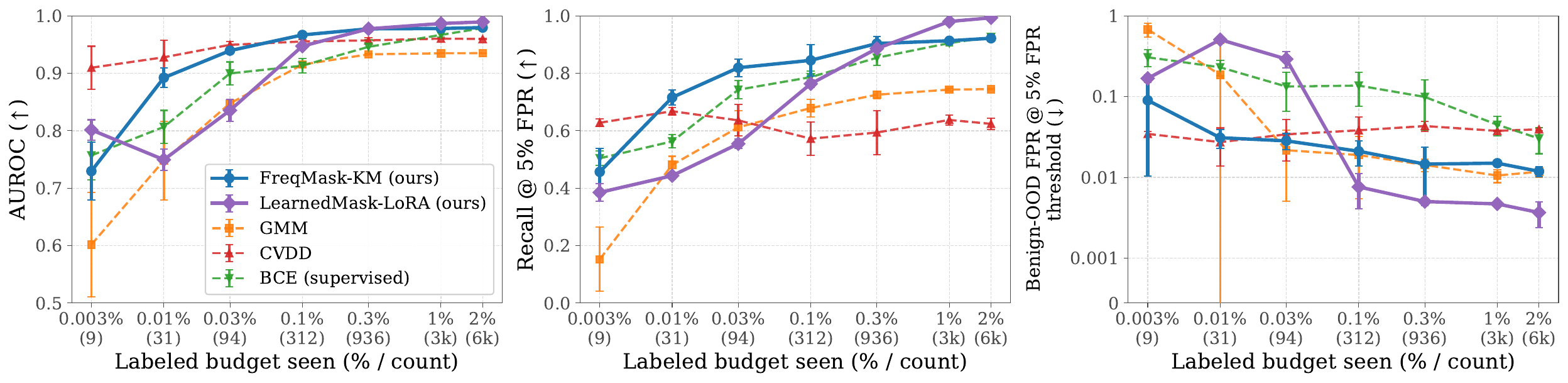}\\[0.4em]
\includegraphics[width=\textwidth]{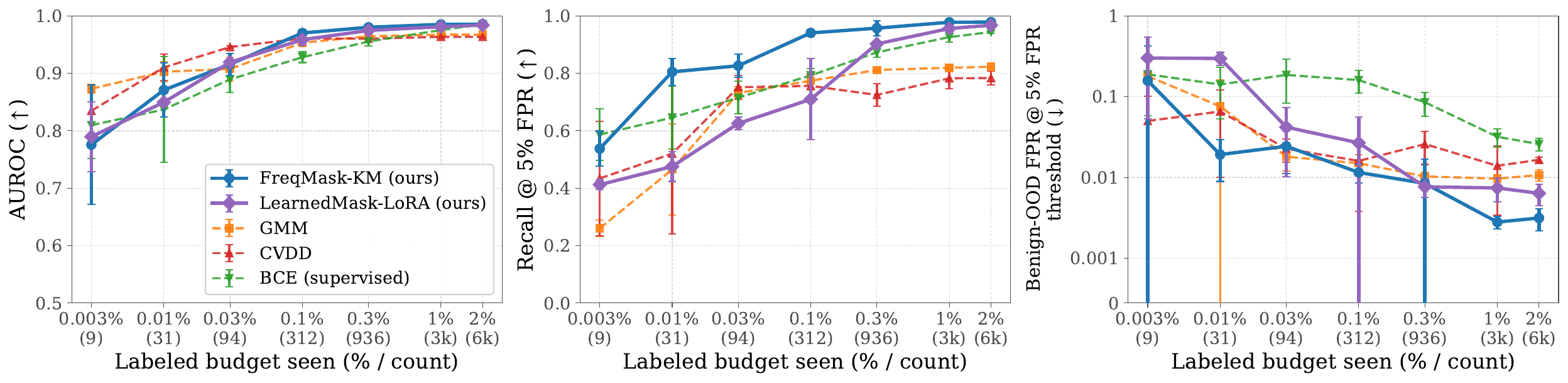}
\caption{Calibration against the coupled budget for LLaMA3-8B (top) and
Qwen3-8B (bottom). Axes, markers, and line styles are as in
\Cref{fig:calib-benign-a}.}
\label{fig:calib-benign-b}
\end{figure}

\begin{figure}[tb]
\centering
\includegraphics[width=\textwidth]{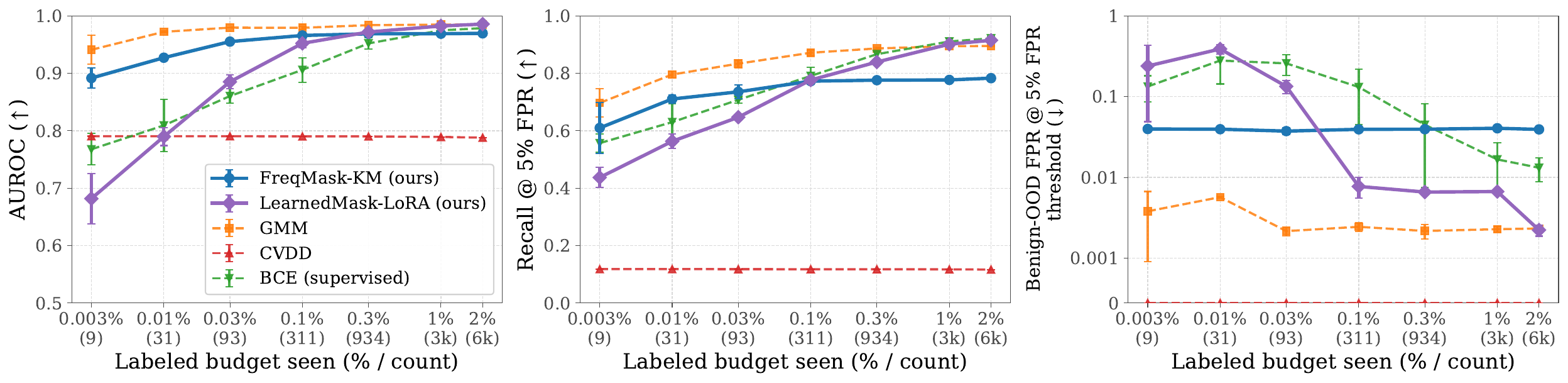}\\[0.4em]
\includegraphics[width=\textwidth]{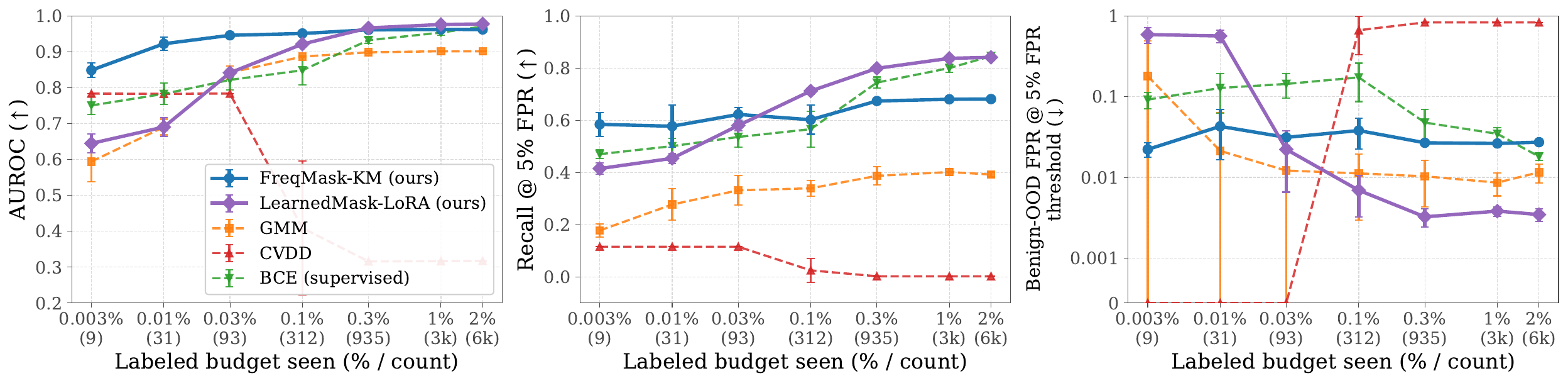}
\caption{Calibration against the coupled budget for GPT-OSS-20B (top)
and Gemma-4-26B (bottom). Axes, markers, and line styles are as in
\Cref{fig:calib-benign-a}. The CVDD curve on Gemma-4-26B falls below
the AUROC range of the other methods once the budget exceeds
$0.03\%$: its raw score is essentially uninformative on that model, so
the calibration has no signal to sharpen.}
\label{fig:calib-benign-c}
\end{figure}

\section{Related Work}
\label{sec:related-work}

\subsection{LLM Safety}
\label{sec:safety-background}

The term ``safety'' is used across several research communities to
refer to related but distinct concerns. The concerns
range from \emph{harmful content generation}, addressed by RLHF-based alignment \citep{dai2023safe,
wachi2024stepwise} and content-filtering systems
\citep{sharma2025constitutional}; to \emph{adversarial robustness}
against jailbreaks and prompt injections \citep{
mazeika2024harmbench,chen2026learning}; to
\emph{alignment properties} such as deception \citep{chen2025ai} and sycophancy \citep{sharma2023towards}; and to
\emph{application-domain boundaries}, where an input outside a
deployed system's intended domain is treated as unsafe even when it
is not harmful in absolute terms \citep{hong2024out}.

Each of these definitions has developed its own methodology, and in each case, the dominant paradigm is supervised or otherwise commits to a specific hypothesis about what unsafe inputs look like, whether through labeled training data \citep{park2025steer}, identified concept directions \citep{zhang2025jbshield}, or assumed brittleness under perturbation \citep{robey2023smoothllm}. Our contribution is to show that the same structure used by representation-based safety methods \citep{chen2025learning, zou2024improving} is exploitable under a framing that does not commit to any single definition of safety.

\subsection{One-Class Anomaly Detection and  Discussions in LLM}
\label{sec:ad-background-one-class}

One-class anomaly detection addresses a different problem from classification: Given samples from a distribution $p_{\text{safe}}$, the task is to score a new input by how typical it is under $p_{\text{safe}}$, without reference to any unsafe distribution. Classical approaches fall into different categories, such as density estimation (Gaussian mixtures \citep{nader2014mahalanobis}, kernel density estimation \citep{goyal2020drocc}) and distance-based scoring ($k$-nearest-neighbors \citep{nizan2024k}, variance reduction \citep{ruff2018deep, ruff2019self}). Applied to LLM safety, this framing has a natural appeal: an unsafe input is one that is distributionally atypical with respect to safe behaviors, regardless of the harm category.

An existing line of work in LLMs proposes scoring functions without labeled negative data, but targets hallucination and reliability rather than safety, and it is not a priori clear that scores tuned for the former transfer to the latter. Examples include Gaussian models on embedding distances for OOD detection \citep{ren2022out}, semantic entropy over sampled outputs for uncertainty \citep{kuhn2023semantic}, intrinsic-dimension estimators on activation neighborhoods for truthfulness \citep{yin2024characterizing}, and identifying hallucination subspaces from unlabeled generations \citep{du2024haloscope}.  \citet{feng2025maad} takes a first step toward bridging these literatures with safety by introducing an anomaly detection benchmark for LLM misalignment. However, it does not propose new algorithms or frameworks. By far, no existing work provides theoretical insight into what makes anomaly detection tractable in high-dimensional LLM representations, which is the question our work seeks to answer.

\subsection{SAE Features for Downstream Tasks}
\label{sec:related-sae-probing}

\citet{kantamneni2025sparse} study whether SAE features are a useful representation for supervised binary probing. Their probe uses labels on two classes and selects a single global set of features from class-conditioned differences. They find that SAE probes rarely outperform probes trained on the dense activations, except in specific regimes such as data scarcity, class imbalance, label noise, and covariate shift. Our setting is complementary. We use no unsafe labels at any point, so features cannot be selected from differences between the two classes; the mask in \textsc{FreqMask-KM} is chosen by activation frequency on safe data alone. Our claim is also different. We do not claim that SAE features are a good representation. We claim that the sparse support is \emph{local}, in the sense that it varies across neighborhoods of the concept space, and that this locality is what makes anomaly detection feasible in a $16{,}384$-dimensional feature space, both in the algorithms of \Cref{sec:framework} and in the sample-complexity bound of \Cref{sec:theory}, which scales only logarithmically in $d_2$. On the representation question, our unsupervised setting differs from their supervised one: moving to the SAE space provides a large gain, raising CVDD from $0.692$ to $0.806$ mean AUROC on the $\geq$8B models (\Cref{tab:decomposition}). Masking provides substantial additional gains on the largest models, adding $0.119$ AUROC on GPT-oss-20B and $0.090$ on Gemma-4-26B, while changing AUROC by at most $0.014$ on the other four models.

\section{Experimental Setup: Additional Details}
\label{app:exp-setup}

This section provides extended details on the SAE training procedure
(Appendix~\ref{app:sae-training}), datasets (Appendix~\ref{app:datasets}), per-algorithm
hyperparameters (Appendix~\ref{app:hyperparameters}), and compute budget
(Appendix~\ref{app:compute}).

\subsection{SAE Training}
\label{app:sae-training}

For each model, we train a Sparse Autoencoder on residual-stream activations
extracted at the analysis layer specified in Section~\ref{sec:exp-setup}
(layer~$20$ for Qwen~1.5B and Ministral, layer~$25$ for LLaMA3, layer~$30$
for Qwen3, layer~$20$ for GPT-oss, and layer~$25$ for Gemma-4-26B). Training data is
the same pooled safe distribution used by the downstream anomaly-detection
methods, comprising approximately $206$K last-token activations per model.
SAEs are trained independently per model; no parameters or features are shared
across models. For GPT-oss we train several SAEs at different sparsity weights
(\Cref{tab:sae-hparams}), as the SAE-space methods select different operating
points along the sparsity spectrum.

The SAE follows a standard architecture
mapping a residual-stream activation $a \in \mathbb{R}^{d_{\text{model}}}$
to a sparse feature vector $z = E(a) = \mathrm{ReLU}(W_E a + b_E) \in
\mathbb{R}^{d_{\text{SAE}}}$ and back via $\hat{a} = D(z) = W_D z + b_D$,
with a single hidden layer of size $d_{\text{SAE}} = 16384$. Training
minimizes a reconstruction loss with an $\ell_1$ sparsity penalty on the
hidden activations:
\begin{equation}
\mathcal{L}_{\text{rec}}(a) = \lVert a - D(E(a)) \rVert_2^2
+ \lambda_{\ell_1} \lVert E(a) \rVert_1.
\end{equation}
The value of $\lambda_{\ell_1}$ is not comparable across models, because
the penalty is applied to activations of different magnitude. We measure
that magnitude by the root mean square (RMS) of a residual-stream
activation, $\mathrm{RMS}(a) = \bigl(\tfrac{1}{d_{\text{model}}}
\sum_{j=1}^{d_{\text{model}}} a_j^2\bigr)^{1/2}$, which differs across
the six models by up to a factor of about $34$. Since the reconstruction term in $\mathcal{L}_{\text{rec}}$ is
quadratic in the scale of $a$ while the penalty term is only linear in
it, the penalty carries relatively less weight the larger that scale
is, so a model with larger RMS needs a larger $\lambda_{\ell_1}$ to
reach the same feature density. Training
hyperparameters are summarized in Table~\ref{tab:sae-hparams}.
\begin{table}[h]
\centering
\small
\caption{SAE training hyperparameters. Input dimension is the residual-stream
width of the corresponding base model; the SAE feature dimension is shared
across all models. The sparsity weight $\lambda_{\ell_1}$ is shared across the
original three models; GPT-oss and Gemma require larger values because their
residual activations have larger RMS ($\approx\!34\times$ for GPT-oss). For
GPT-oss we train at two operating points ($\lambda_{\ell_1}=300$ and $700$),
the $\lambda_{\ell_1}=700$ encoder being the one used by \textsc{FreqMask-KM};
Gemma uses a single intermediate value ($\lambda_{\ell_1}=3$).}
\label{tab:sae-hparams}
\setlength{\tabcolsep}{4pt}
\resizebox{\textwidth}{!}{%
\begin{tabular}{lcccccc}
\toprule
 & Qwen 1.5B & Ministral 8B & LLaMA3 8B & Qwen3 8B & GPT-oss 20B & Gemma 4 26B \\
\midrule
Input dimension $d_{\text{model}}$ & $1536$ & $4096$ & $4096$ & $4096$ & $2880$ & $2816$ \\
SAE feature dimension $d_{\text{SAE}}$ & $16384$ & $16384$ & $16384$ & $16384$ & $16384$ & $16384$ \\
Expansion factor & $\approx 10.7\times$ & $4\times$ & $4\times$ & $4\times$ & $\approx 5.7\times$ & $\approx 5.8\times$ \\
Sparsity weight $\lambda_{\ell_1}$ & $0.01$ & $0.01$ & $0.01$ & $0.01$ & $300$ / $700$ & $3$ \\
Epochs & $10$ & $5$ & $5$ & $20$ & $20$ & $10$ \\
Batch size & $256$ & $256$ & $256$ & $256$ & $256$ & $256$ \\
Optimizer & Adam & Adam & Adam & Adam & Adam & Adam \\
Learning rate & 1e-3 & 1e-3 & 1e-3 & 1e-3 & 1e-3 & 1e-3 \\
\bottomrule
\end{tabular}%
}
\end{table}

\subsection{Datasets}
\label{app:datasets}

\paragraph{Safe distribution.}
The safe training set pools two groups of data: (i) general capability and
instruction-following data (GSM8K, MathSFT40K, HumanEval, HumanEval+, MBPP,
Alpaca, MT-Bench, MMLU, ARC), and (ii) the safe portions of the evaluation
benchmarks (benign examples from BeaverTails and ToxiGen, and the model's
correct refusals on HarmBench attacks). The total is approximately
$206{,}000$ last-token activations per model. Counts are approximately the
same across models; minor variation arises from generation failures on
individual prompts.

\paragraph{Unsafe evaluation.}
We evaluate on three categories: BeaverTails (harmful-content requests),
ToxiGen (toxic and hateful language), and pooled HarmBench adversarial
attacks generated by eight algorithms (AutoDAN \citep{liu2023autodan}, AutoPrompt \citep{shin2020autoprompt}, DirectRequest \citep{mazeika2024harmbench},
GBDA \citep{guo2021gradient}, GCG \citep{zou2023universal}, HumanJailbreaks \citep{mazeika2024harmbench}, PEZ \citep{wen2023hard}, UAT \citep{wallace2019universal}). HarmBench suffixes are optimized
per-target-model, so the adversarial inputs differ across models;
the eight attacks are pooled into a single category for reporting. The
total unsafe evaluation set comprises approximately 314,000 samples
per model.

\begin{table}[t]
\centering
\small
\caption{Licenses for datasets used in this work. All datasets are publicly
available and cited at first mention in Section~\ref{sec:exp-setup}. For the
three benchmarks in the lower block, the safe portion contributes to the safe
training pool and the unsafe portion is used only for evaluation.}
\label{tab:dataset-licenses}
\begin{tabular}{lll}
\toprule
Dataset & License & Use \\
\midrule
GSM8K \citep{cobbe2021training}                & MIT          & Safe training \\
MathSFT40K \citep{mathsft40k}        & Not stated   & Safe training \\
HumanEval \citep{chen2021evaluating}         & MIT          & Safe training \\
HumanEval+ (EvalPlus) \citep{evalplus} & Apache 2.0 & Safe training \\
MBPP \citep{austin2021program}                 & CC-BY 4.0    & Safe training \\
Alpaca \citep{alpaca}              & CC-BY-NC 4.0 & Safe training \\
MT-Bench \citep{zheng2023judging}           & Apache 2.0   & Safe training \\
MMLU \citep{hendrycks2020measuring}              & MIT          & Safe training \\
ARC \citep{clark2018think}               & CC-BY-SA 4.0 & Safe training \\
\midrule
BeaverTails \citep{ji2023beavertails}       & CC-BY-NC 4.0 & Safe training \& unsafe eval \\
ToxiGen \citep{hartvigsen2022toxigen}       & MIT          & Safe training \& unsafe eval \\
HarmBench \citep{mazeika2024harmbench}      & MIT          & Safe training \& unsafe eval \\
\bottomrule
\end{tabular}
\end{table}

\paragraph{Splits.}
The safe set is split $0.8/0.2$: the $0.8$ portion is used to fit each method
(cluster centroids, mask selection, covariance estimation, etc.), and the
remaining $0.2$ is held out from fitting. Both AUROC and TPR@$5\%$FPR are
computed from the same ROC curve on the evaluation set, where the safe
evaluation samples supply the negatives and each unsafe category supplies the
positives; TPR@$5\%$FPR is the true-positive rate at the operating point whose
false-positive rate equals $5\%$. No separate threshold-calibration step is
performed.

\subsection{Activation-space baselines}
\label{app:hp-baselines}

All activation-space baselines operate directly on the residual-stream
activations extracted at the analysis layer (Section~\ref{sec:exp-setup}),
without any SAE encoding. We use our own implementations to keep the
training and scoring pipelines uniform across methods. The hyperparameters that were swept and the values selected for
the main results are listed in Tables~\ref{tab:hp-baselines-qwen},~\ref{tab:hp-baselines-ministral},~\ref{tab:hp-baselines-llama3},~\ref{tab:hp-baselines-qwen3},~\ref{tab:hp-baselines-gpt},~and~\ref{tab:hp-baselines-gemma}.

\paragraph{Mahalanobis.}
A single Gaussian is fit to the safe training set: the global mean
$\mu$ is estimated on safe samples, and the covariance $\Sigma$ is
accumulated on the full training pool, regularized by adding $\lambda I$
before inversion. The anomaly score is the Mahalanobis distance to the
global mean, $\sqrt{(x-\mu)^\top \Sigma^{-1} (x-\mu)}$, mapped to
$[0,1]$ either by linear interpolation between the 5th and 95th
percentiles of training distances (\textit{percentile}) or by a sigmoid
on standardized distances (\textit{sigmoid}). The two free choices are
the regularization strength $\lambda$ and the score type.

\paragraph{Gaussian Mixture Model (GMM).}
A diagonal-covariance Gaussian mixture is fit on standardized safe
features by EM, with components initialized via $k$-means++ or
uniformly at random. The anomaly score is the negative log-likelihood
$-\log p(x \mid \theta)$, normalized to $[0,1]$ by the same percentile
or sigmoid mapping used for Mahalanobis. The free choices are the
number of mixture components $n_{\text{components}}$, the
initialization, and the score normalization.

\paragraph{One-Class SVM (OC-SVM).}
We use a PyTorch implementation that solves the standard one-class
SVM dual on standardized safe features, with $\nu$ controlling the
upper bound on the training-error fraction (and equivalently the
lower bound on the support-vector fraction), and $\gamma$ controlling
the RBF bandwidth. The anomaly score is the negative decision
function (distance from the separating hyperplane in feature space).
The free choices are $\nu$, $\gamma$, and the kernel (RBF or linear).

\paragraph{Deep SVDD.}
Following \citet{ruff2018deep}, a small MLP encoder
$\phi: \mathbb{R}^{d_{\text{model}}} \to \mathbb{R}^{d_{\text{rep}}}$
is autoencoder-pretrained and then fine-tuned to map safe inputs near
a fixed center $c$ in representation space, by minimizing the mean
squared distance $\|\phi(x) - c\|_2^2$. The anomaly score is this
squared distance. The free choices are the representation dimension
$d_{\text{rep}}$, the deep one-class hyperparameter $\nu$ (used by
the soft-boundary objective), and the optimizer learning rate.

\paragraph{Context Vector Data Description (CVDD).}
CVDD generalizes Deep SVDD to multiple centers: an MLP encoder is
trained jointly with $n_{\text{contexts}}$ context vectors
$\{c_k\}_{k=1}^{n_{\text{contexts}}}$, with the per-sample loss given
by a temperature-annealed soft-minimum over distances to all centers,
$-T \log \sum_k \exp(-\|\phi(x) - c_k\|_2^2 / T)$. An orthogonality
regularizer $\|C C^\top - I\|_F^2$ encourages the centers to span
distinct directions. The anomaly score is the distance to the nearest
context vector. The free choices are the number of contexts
$n_{\text{contexts}}$ and the orthogonality weight; the temperature
schedule and pretraining flag are fixed across runs.

\subsection{Hyperparameters}
\label{app:hyperparameters}

For each method we list the hyperparameters that were swept and the values
selected for the main results. Selection is performed by AUROC.

\subsubsection{Activation-space baselines}
Hyperparameters are tuned per model rather than shared, since the analysis layer, residual-stream geometry, and sample sizes vary across models. Tables~\ref{tab:hp-baselines-qwen}, \ref{tab:hp-baselines-ministral}, \ref{tab:hp-baselines-llama3}, \ref{tab:hp-baselines-qwen3}, \ref{tab:hp-baselines-gpt}, and \ref{tab:hp-baselines-gemma} report swept ranges and selected values for each model.

\begin{table}[h]
\centering
\small
\caption{Hyperparameter sweeps and selected values for activation-space baselines on Qwen 1.5B.}
\label{tab:hp-baselines-qwen}
\begin{tabular}{llll}
\toprule
Method & Hyperparameter & Sweep range & Selected \\
\midrule
Mahalanobis
 & Regularization & $\{10^{-5}, 10^{-4}, 10^{-3}, 10^{-2}, 10^{-1}\}$ & $10^{-5}$ \\
 & Score type & \{percentile, sigmoid\} & sigmoid \\
\midrule
GMM
 & $n_{\text{components}}$ & $\{3, 5, 10, 50, 100\}$ & $50$ \\
 & Initialization & \{k-means++, random\} & k-means++ \\
 & Score normalization & \{percentile, sigmoid\} & percentile \\
\midrule
OC-SVM
 & $\nu$ & $\{0.01, 0.05, 0.1, 0.2\}$ & $0.01$ \\
 & $\gamma$ & $\{10^{-4}, 10^{-3}, 10^{-2}, 10^{-1}, 1\}$ & $10^{-2}$ \\
 & Kernel & \{RBF, linear\} & RBF \\
\midrule
Deep SVDD
 & Representation dim. & $\{16, 32, 64, 128\}$ & $128$ \\
 & $\nu$ & $\{0.05, 0.1, 0.2\}$ & $0.2$ \\
 & Learning rate & $\{10^{-4}, 10^{-3}, 10^{-2}\}$ & $10^{-3}$ \\
\midrule
CVDD
 & $n_{\text{contexts}}$ & $\{10, 50, 200, 500\}$ & $500$ \\
 & Orthogonality weight & $\{10^{-4}, 10^{-3}, 10^{-2}\}$ & $10^{-3}$ \\
\bottomrule
\end{tabular}
\end{table}

\begin{table}[h]
\centering
\small
\caption{Hyperparameter sweeps and selected values for activation-space baselines on Ministral 8B.}
\label{tab:hp-baselines-ministral}
\begin{tabular}{llll}
\toprule
Method & Hyperparameter & Sweep range & Selected \\
\midrule
Mahalanobis
 & Regularization & $\{10^{-5}, 10^{-4}, 10^{-3}, 10^{-2}, 10^{-1}\}$ & $10^{-5}$ \\
 & Score type & \{percentile, sigmoid\} & percentile \\
\midrule
GMM
 & $n_{\text{components}}$ & $\{3, 5, 10, 50, 100\}$ & $10$ \\
 & Initialization & \{k-means++, random\} & k-means++ \\
 & Score normalization & \{percentile, sigmoid\} & sigmoid \\
\midrule
OC-SVM
 & $\nu$ & $\{0.01, 0.05, 0.1, 0.2\}$ & $0.1$ \\
 & $\gamma$ & $\{10^{-4}, 10^{-3}, 10^{-2}, 10^{-1}, 1\}$ & $10^{-3}$ \\
 & Kernel & \{RBF, linear\} & RBF \\
\midrule
Deep SVDD
 & Representation dim. & $\{16, 32, 64, 128\}$ & $128$ \\
 & $\nu$ & $\{0.05, 0.1, 0.2\}$ & $0.2$ \\
 & Learning rate & $\{10^{-4}, 10^{-3}, 10^{-2}\}$ & $10^{-3}$ \\
\midrule
CVDD
 & $n_{\text{contexts}}$ & $\{10, 50, 200, 500\}$ & $50$ \\
 & Orthogonality weight & $\{10^{-4}, 10^{-3}, 10^{-2}\}$ & $10^{-3}$ \\
\bottomrule
\end{tabular}
\end{table}

\begin{table}[h]
\centering
\small
\caption{Hyperparameter sweeps and selected values for activation-space baselines on LLaMA3 8B.}
\label{tab:hp-baselines-llama3}
\begin{tabular}{llll}
\toprule
Method & Hyperparameter & Sweep range & Selected \\
\midrule
Mahalanobis
 & Regularization & $\{10^{-5}, 10^{-4}, 10^{-3}, 10^{-2}, 10^{-1}\}$ & $10^{-5}$ \\
 & Score type & \{percentile, sigmoid\} & percentile \\
\midrule
GMM
 & $n_{\text{components}}$ & $\{3, 5, 10, 50, 100\}$ & $10$ \\
 & Initialization & \{k-means++, random\} & k-means++ \\
 & Score normalization & \{percentile, sigmoid\} & percentile \\
\midrule
OC-SVM
 & $\nu$ & $\{0.01, 0.05, 0.1, 0.2\}$ & $0.1$ \\
 & $\gamma$ & $\{10^{-4}, 10^{-3}, 10^{-2}, 10^{-1}, 1\}$ & $10^{-2}$ \\
 & Kernel & \{RBF, linear\} & RBF \\
\midrule
Deep SVDD
 & Representation dim. & $\{16, 32, 64, 128\}$ & $128$ \\
 & $\nu$ & $\{0.05, 0.1, 0.2\}$ & $0.2$ \\
 & Learning rate & $\{10^{-4}, 10^{-3}, 10^{-2}\}$ & $10^{-3}$ \\
\midrule
CVDD
 & $n_{\text{contexts}}$ & $\{10, 50, 200, 500\}$ & $500$ \\
 & Orthogonality weight & $\{10^{-4}, 10^{-3}, 10^{-2}\}$ & $10^{-3}$ \\
\bottomrule
\end{tabular}
\end{table}

\begin{table}[h]
\centering
\small
\caption{Hyperparameter sweeps and selected values for activation-space baselines on Qwen3 8B.}
\label{tab:hp-baselines-qwen3}
\begin{tabular}{llll}
\toprule
Method & Hyperparameter & Sweep range & Selected \\
\midrule
Mahalanobis
 & Regularization & $\{10^{-5}, 10^{-4}, 10^{-3}, 10^{-2}, 10^{-1}\}$ & $10^{-5}$ \\
 & Score type & \{percentile, sigmoid\} & percentile \\
\midrule
GMM
 & $n_{\text{components}}$ & $\{3, 5, 10, 50, 100\}$ & $100$ \\
 & Initialization & \{k-means++, random\} & k-means++ \\
 & Score normalization & \{percentile, sigmoid\} & sigmoid \\
\midrule
OC-SVM
 & $\nu$ & $\{0.01, 0.05, 0.1, 0.2\}$ & $0.05$ \\
 & $\gamma$ & $\{10^{-4}, 10^{-3}, 10^{-2}, 10^{-1}, 1\}$ & $10^{-2}$ \\
 & Kernel & \{RBF, linear\} & RBF \\
\midrule
Deep SVDD
 & Representation dim. & $\{16, 32, 64, 128\}$ & $32$ \\
 & $\nu$ & $\{0.05, 0.1, 0.2\}$ & $0.1$ \\
 & Learning rate & $\{10^{-4}, 10^{-3}, 10^{-2}\}$ & $10^{-2}$ \\
\midrule
CVDD
 & $n_{\text{contexts}}$ & $\{10, 50, 200, 500\}$ & $50$ \\
 & Orthogonality weight & $\{10^{-4}, 10^{-3}, 10^{-2}\}$ & $10^{-4}$ \\
\bottomrule
\end{tabular}
\end{table}

\begin{table}[h]
\centering
\small
\caption{Hyperparameter sweeps and selected values for activation-space baselines on GPT-oss 20B.}
\label{tab:hp-baselines-gpt}
\begin{tabular}{llll}
\toprule
Method & Hyperparameter & Sweep range & Selected \\
\midrule
Mahalanobis
 & Regularization & $\{10^{-5}, 10^{-4}, 10^{-3}, 10^{-2}, 10^{-1}\}$ & $10^{-5}$ \\
 & Score type & \{percentile, sigmoid\} & sigmoid \\
\midrule
GMM
 & $n_{\text{components}}$ & $\{3, 5, 10, 50, 100\}$ & $10$ \\
 & Initialization & \{k-means++, random\} & random \\
 & Score normalization & \{percentile, sigmoid\} & sigmoid \\
\midrule
OC-SVM
 & $\nu$ & $\{0.01, 0.05, 0.1, 0.2\}$ & $0.1$ \\
 & $\gamma$ & $\{10^{-4}, 10^{-3}, 10^{-2}, 10^{-1}, 1\}$ & $10^{-4}$ \\
 & Kernel & \{RBF, linear\} & RBF \\
\midrule
Deep SVDD
 & Representation dim. & $\{16, 32, 64, 128\}$ & $16$ \\
 & $\nu$ & $\{0.05, 0.1, 0.2\}$ & $0.05$ \\
 & Learning rate & $\{10^{-4}, 10^{-3}, 10^{-2}\}$ & $10^{-4}$ \\
\midrule
CVDD
 & $n_{\text{contexts}}$ & $\{10, 50, 200, 500\}$ & $500$ \\
 & Orthogonality weight & $\{10^{-4}, 10^{-3}, 10^{-2}\}$ & $10^{-4}$ \\
\bottomrule
\end{tabular}
\end{table}

\begin{table}[h]
\centering
\small
\caption{Hyperparameter sweeps and selected values for activation-space baselines on Gemma 4 26B.}
\label{tab:hp-baselines-gemma}
\begin{tabular}{llll}
\toprule
Method & Hyperparameter & Sweep range & Selected \\
\midrule
Mahalanobis
 & Regularization & $\{10^{-5}, 10^{-4}, 10^{-3}, 10^{-2}, 10^{-1}\}$ & $10^{-5}$ \\
 & Score type & \{percentile, sigmoid\} & sigmoid \\
\midrule
GMM
 & $n_{\text{components}}$ & $\{3, 5, 10, 50, 100\}$ & $100$ \\
 & Initialization & \{k-means++, random\} & random \\
 & Score normalization & \{percentile, sigmoid\} & sigmoid \\
\midrule
OC-SVM
 & $\nu$ & $\{0.01, 0.05, 0.1, 0.2\}$ & $0.1$ \\
 & $\gamma$ & $\{10^{-4}, 10^{-3}, 10^{-2}, 10^{-1}, 1\}$ & $10^{-2}$ \\
 & Kernel & \{RBF, linear\} & RBF \\
\midrule
Deep SVDD
 & Representation dim. & $\{16, 32, 64, 128\}$ & $128$ \\
 & $\nu$ & $\{0.05, 0.1, 0.2\}$ & $0.1$ \\
 & Learning rate & $\{10^{-4}, 10^{-3}, 10^{-2}\}$ & $10^{-4}$ \\
\midrule
CVDD
 & $n_{\text{contexts}}$ & $\{10, 50, 200, 500\}$ & $500$ \\
 & Orthogonality weight & $\{10^{-4}, 10^{-3}, 10^{-2}\}$ & $10^{-4}$ \\
\bottomrule
\end{tabular}
\end{table}

\subsubsection{Framework instantiations (SAE feature space)}

All three framework instantiations share KMeans clustering and nearest-centroid
cluster assignment; they differ in how the per-cluster sparse subspace is
identified. The centroid is used for scoring only by the two \textsc{FreqMask-KM}
variants ($L_1$ distance to the assigned centroid); \textsc{LearnedMask-LoRA}
uses it only to assign a test point to a cluster and scores by the SAE
reconstruction residual under that cluster's learned mask.

\textsc{FreqMask-KM} only has two hyperparameters: Cluster count $N$ and mask size $k$. Selected values are listed in Table~\ref{tab:hp-framework}. The local version of it (Local \textsc{FreqMask-KM}) uses the same hyperaprameters, but the top-$k$ mask is applied locally by selecting the most frequently activated neurons.

\begin{table}[h]
\centering
\small
\caption{Selected hyperparameters for \textsc{FreqMask-KM}, swept over
$N \in \{50, 100, 200\}$ and $k \in \{25, 50, 100, 200, 400\}$. GPT-oss and
Gemma select small masks ($k=50$ and $k=25$) from the sweep extended to
smaller masks (Appendix~\ref{app:freq-abl-k}).}
\label{tab:hp-framework}
\setlength{\tabcolsep}{6pt}
\begin{tabular}{lcc}
\toprule
Model & $N$ (clusters) & $k$ (mask size) \\
\midrule
Qwen 1.5B    & $200$ & $100$ \\
Ministral 8B & $50$  & $200$ \\
LLaMA3 8B    & $100$ & $400$ \\
Qwen3 8B     & $100$ & $400$ \\
GPT-oss 20B  & $50$  & $50$  \\
Gemma 4 26B  & $50$  & $25$  \\
\bottomrule
\end{tabular}
\end{table}

\textsc{LearnedMask-LoRA} is trained in two stages, both operating
under the SAE space.

\paragraph{Stage 1: per-cluster mask training.}
A binary mask is learned per cluster using a straight-through
estimator (\texttt{mask\_type = binary\_ste}). Concretely, each
cluster's mask is parameterized by a vector of learnable logits
$\ell \in \mathbb{R}^{d_2}$ (one per SAE feature, initialized to
$0.5$); the forward pass applies the hard mask
$m = \mathbbm{1}[\ell > 0]$ to the SAE code via $m \odot z$, while
the backward pass replaces the non-differentiable indicator with
the derivative of $\sigma(\ell)$, so the logits can be trained
end-to-end with SGD despite $m$ being discrete~\citep{bengio2013estimating}.
In practice this is implemented with the standard detach
trick:
\begin{lstlisting}[language=Python]
soft = torch.sigmoid(logits)
hard = (logits > 0).float()
m = soft + (hard - soft).detach()  # forward: hard; backward: through soft
\end{lstlisting}
Sparsity is induced by adding $\lambda_{\text{sparsity}} \,
\|\sigma(\ell)\|_1$ to the reconstruction loss, with the coefficient
calibrated per cluster from the sweep $\{10^{-4}, 3\!\times\!10^{-4},
10^{-3}, 3\!\times\!10^{-3}, 10^{-2}, 3\!\times\!10^{-2}, 10^{-1}\}$.
We use $N = 100$ clusters with a minimum cluster size of $50$. The
clusters are created with an $L_2$ distance to centroids.

\paragraph{Stage 2: per-cluster LoRA adapter.}
A LoRA adapter is trained against an SAE-reconstruction objective on the
cluster's masked subspace. LoRA rank scales with cluster size:
$r = 2$ (small), $4$ (medium), $8$ (large). Training uses $30$ epochs at
learning rate $10^{-3}$ with sparsity weight $10^{-3}$ on the LoRA output.
Stage 2 is gated by a model-specific minimum cluster size: $500$ for
LLaMA3, $20$ for Ministral, $50$ for Qwen3, $50$ for Gemma, and disabled
for Qwen~1.5B and GPT-oss (Stage 1 mask only, no LoRA adaptation), as we
observe that LoRA hurts performance on those two models (Appendix~\ref{app:abl-lora}).

\subsubsection{Supervised reference}

The supervised reference reported in Section~\ref{sec:exp-main} is a linear
probe (with bias) trained on SAE features against binary safe/unsafe labels.
Several BCE variants were evaluated for completeness; their configurations
are summarized in Table~\ref{tab:hp-bce}.

\begin{table}[h]
\centering
\small
\caption{Supervised BCE variants and their 3-category performance across the
six models. AUROC and TPR@$5\%$FPR are each the mean of BeaverTails, ToxiGen,
and pooled-HarmBench. All variants use the BCE loss. The SAE + linear (with
bias) row is the \emph{linear probe} reported in Table~\ref{tab:main-results}
and gives its mean over $5$ random seeds (standard deviations are in
Table~\ref{tab:main-results}). The other variants are single-seed runs (best of
learning rate $\in \{10^{-3}, 10^{-4}\}$), reported here for completeness.}
\label{tab:hp-bce}
\setlength{\tabcolsep}{4pt}
\resizebox{\textwidth}{!}{%
\begin{tabular}{ll cccccc cccccc}
\toprule
 & & \multicolumn{6}{c}{AUROC ($\uparrow$)} & \multicolumn{6}{c}{TPR@$5\%$FPR ($\uparrow$)} \\
\cmidrule(lr){3-8}\cmidrule(lr){9-14}
Input & Classifier & Ministral & LLaMA3 & Qwen3 & GPT-oss & Gemma & Qwen 1.5B & Ministral & LLaMA3 & Qwen3 & GPT-oss & Gemma & Qwen 1.5B \\
\midrule
SAE features    & 2-layer MLP        & 0.993 & 0.959 & 0.992 & 0.987 & 0.978 & 0.971 & 0.966 & 0.902 & 0.958 & 0.938 & 0.937 & 0.887 \\
Raw activations & 2-layer MLP        & 0.993 & 0.979 & 0.992 & 0.988 & 0.969 & 0.975 & 0.966 & 0.913 & 0.962 & 0.939 & 0.915 & 0.802 \\
Raw activations & Linear             & 0.988 & 0.972 & 0.985 & 0.985 & 0.961 & 0.959 & 0.928 & 0.748 & 0.905 & 0.910 & 0.751 & 0.688 \\
SAE features    & Linear (with bias) & 0.993 & 0.972 & 0.993 & 0.990 & 0.983 & 0.982 & 0.964 & 0.905 & 0.957 & 0.948 & 0.942 & 0.876 \\
SAE features    & Linear (no bias)   & 0.985 & 0.965 & 0.979 & 0.973 & 0.947 & 0.954 & 0.898 & 0.693 & 0.847 & 0.858 & 0.635 & 0.664 \\
\bottomrule
\end{tabular}%
}
\end{table}

\subsection{Compute}
\label{app:compute}

All experiments are run on a SLURM-managed cluster with single NVIDIA V100 or Titan-RTX
GPU per job. Approximate wall-clock times per training run are listed in
Table~\ref{tab:compute}. We pre-cached the model activations from each dataset, and these methods load the cached activations for training.

\begin{table}[h]
\centering
\small
\caption{Approximate wall-clock per run on a single V100 or Titan-RTX.}
\label{tab:compute}
\begin{tabular}{lc}
\toprule
Method & Wall-clock per run \\
\midrule
Training SAE & $<5$ min \\
Mahalanobis, GMM, OC-SVM & $5$--$20$ min \\
Deep SVDD, CVDD & $\approx 30$ min \\
\textsc{FreqMask-KM}, Local \textsc{FreqMask-KM} & $\approx 10$ min \\
\textsc{LearnedMask-LoRA} & $\approx 60$ min \\
\bottomrule
\end{tabular}
\end{table}

\section{Effective Dimension: Definition and Per-Setting Specialization}
\label{app:effective-dim}

Let $X \in \mathbb{R}^{n \times d}$ denote the data matrix in the feature
space of interest, with $n$ samples and $d$ features. Let
$\bar{x} = \frac{1}{n} \sum_{i=1}^{n} x_i$ be the empirical mean, computed
independently within each setting's feature space. The empirical covariance is
\begin{equation}
    \Sigma = \frac{1}{n} X^\top X - \bar{x}\,\bar{x}^\top \in \mathbb{R}^{d \times d}.
\end{equation}
Let $\lambda_1 \geq \lambda_2 \geq \cdots \geq \lambda_d \geq 0$ denote its
eigenvalues in decreasing order; negative eigenvalues arising from
finite-precision arithmetic are clamped to zero. Define the cumulative
explained-variance ratio
\begin{equation}
    \rho(k) = \frac{\sum_{j=1}^{k} \lambda_j}{\sum_{j=1}^{d} \lambda_j}.
\end{equation}
The local effective dimension at the $90\%$ variance threshold is
\begin{equation}
    d_{90} = \min\!\left\{ k \in \{1, \dots, d\} : \rho(k) \geq 0.90 \right\}.
\end{equation}
Equivalently, $d_{90}$ is the smallest number of principal components
required to retain at least $90\%$ of the total variance in $X$.

\begin{figure}[!ht]
\centering
\includegraphics[width=0.67\textwidth]{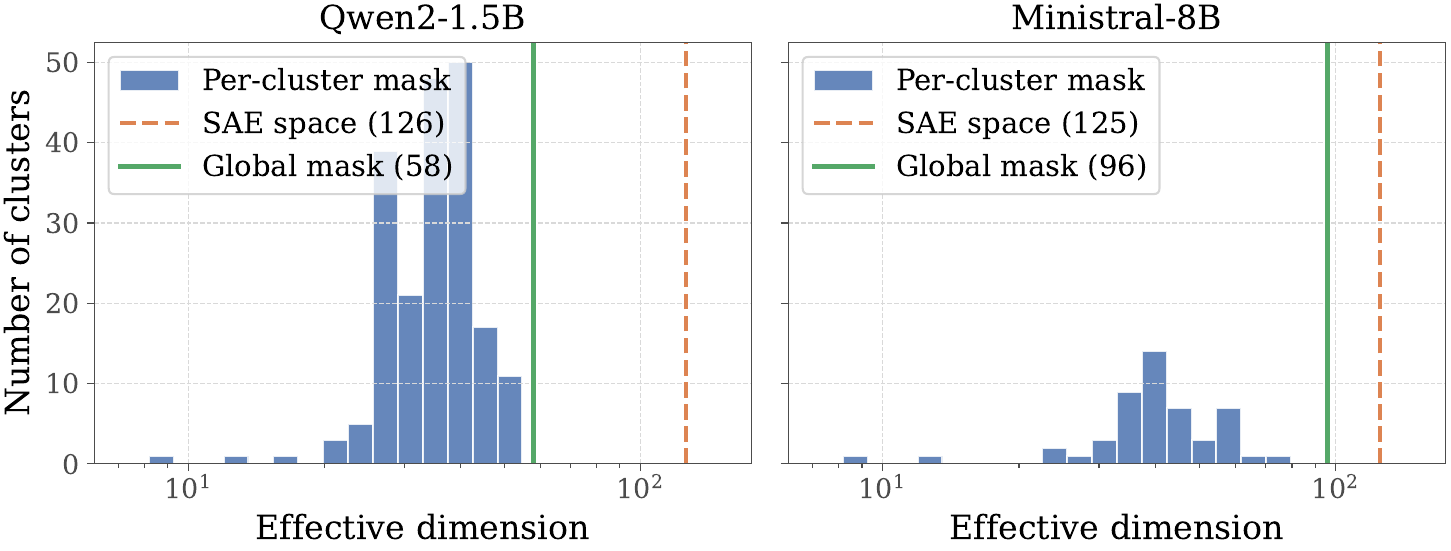}
\caption{Effective dimension ($d_{90}$) for two model families not shown
in the main text (Figure~\ref{fig:effective-dim}): Qwen2-1.5B and
Ministral-8B. We omit Gemma-4-26B because its selected mask has only
$k=25$ features, which already bounds its masked $d_{90}$ values by $25$.
As in the main text, blue bars show the per-cluster $d_{90}$ distribution,
computed on each cluster's members within its own top-$k$ mask
$\mathcal{M}_c$; vertical lines mark $d_{90}$ in the raw SAE feature space
(dashed orange) and the global-mask subspace (solid green). Per-cluster
$d_{90}$ again sits well below the global-mask and raw-SAE values,
confirming the low local effective dimension across the five model
families shown.}
\label{fig:effective-dim-appendix}
\end{figure}

The settings differ only in the choice of $X$ (and hence $d$):
\begin{itemize}
    \item \textbf{Raw SAE:} $X \in \mathbb{R}^{n \times d_{\text{SAE}}}$, rows
    are $z_i = \mathrm{ReLU}(W_{\text{enc}} h_i + b_{\text{enc}})$.
    \item \textbf{Global mask:} $X = Z_{:,\,\mathcal{M}} \in \mathbb{R}^{n \times k}$,
    where $\mathcal{M} \subset \{1, \dots, d_{\text{SAE}}\}$, $|\mathcal{M}| = k$,
    is the global mask's active feature index set.
    \item \textbf{Per-cluster local mask:} for cluster $c$ with members
    $\mathcal{I}_c = \{i : \ell_i = c\}$, define the local active set
    \begin{equation}
        \mathcal{M}_c = \mathrm{top}\text{-}k\!\left(
            \sum_{i \in \mathcal{I}_c} \mathbf{1}[z_i > 0]
        \right),
    \end{equation}
    i.e.\ the $k$ SAE features that fire most often within cluster $c$. Then
    $X^{(c)} = Z_{\mathcal{I}_c,\,\mathcal{M}_c} \in \mathbb{R}^{n_c \times k}$,
    and we report the distribution of $d_{90}^{(c)}$ over clusters with
    $n_c \geq \max(20, 2k)$ to avoid rank-limited estimates.
\end{itemize}

In all settings $d_{90} \leq \min(n - 1, d)$, with the per-cluster bound
$d_{90}^{(c)} \leq \min(n_c - 1, k)$. Centering is performed independently
within each setting's feature space; in particular, for the per-cluster
analysis, $\bar{x}^{(c)}$ is the mean over $\mathcal{I}_c$ within the local
masked subspace $\mathcal{M}_c$.

\section{Ablations: Stage-by-Stage Variations on \textsc{FreqMask-KM}}
\label{app:freqmask-km-ablations}

This section reports six ablations on \textsc{FreqMask-KM}
(\Cref{sec:framework-instantiations}, Appendix~\ref{app:hyperparameters})
targeting the design choices left open by the framework definition,
ordered to follow the stage sequence of \Cref{sec:framework}: the
clustering metric (Appendix~\ref{app:freq-abl-cluster}), the number of
clusters $\ncluster$ (Appendix~\ref{app:freq-abl-K}), the mask size
$\sparsity$ (Appendix~\ref{app:freq-abl-k}), the scoring metric
(Appendix~\ref{app:freq-abl-scoring}), the within-cluster
overlap of safe and unsafe active supports
(Appendix~\ref{app:freq-abl-overlap}), which gives a mechanistic reading of
the clustering-metric and scoring-metric results, and finally the
choice of extraction layer (Appendix~\ref{app:freq-abl-layer}).

\paragraph{Goal of these ablations.} The framework of
\Cref{sec:framework} decomposes the pipeline into four stages
(embedding, clustering, local subspace, scoring), each of which
admits several principled choices. The ablations below are intended
as a worked example of how those choices can be varied one stage at
a time within a single instantiation, and what the resulting
qualitative orderings look like. They are not a hyperparameter
optimization study: the goal is to compare design choices under a
fixed setting, not to identify a per-cell optimum. Two consequences
follow.

First, all runs in this section use a single seed. The only source
of randomness in \textsc{FreqMask-KM} is the KMeans cluster
assignment under uniform random centroid initialization
(\Cref{alg:freqmask-km}); the global frequency mask, centroid
estimation, and $\ell_1$ scoring are deterministic given the
clustering. The seed-to-seed standard deviation of \textsc{FreqMask-KM}
in \Cref{tab:main-results} is empirically zero at the precision
reported, and we do not expect multi-seed sweeps to change the
qualitative orderings reported below.

Second, the sweep grids are narrower than the per-model selection in
Appendix~\ref{app:hyperparameters}: the clustering and scoring ablations use
$\ncluster \in \{50, 100\}$ and $\sparsity \in \{100, 200, 400\}$,
the scaling ablations sweep one of the two over a wider range. The
grid omits $\ncluster = 200$, which is the value selected for
Qwen 1.5B in the main results, so the Qwen entries below peak at a
slightly lower AUROC than the main-table figure. We accept this gap
because the goal is the cross-method ordering at a fixed setting,
not the optimum for any one method.

Performance is reported as mean AUROC across the three evaluation
slices (BeaverTails, ToxiGen, pooled HarmBench adversarial attacks).

\subsection{Clustering Metric}
\label{app:freq-abl-cluster}

\paragraph{Setup.} The clustering stage in
\Cref{sec:framework-clustering} permits magnitude-aware and binarized
metrics. We compare three KMeans variants in SAE feature space:
$\rho_{L_2}(x, x') = \lVert z(x) - z(x') \rVert_2$ (the default in
\Cref{sec:framework-instantiations}),
$\rho_{L_1}(x, x') = \lVert z(x) - z(x') \rVert_1$, and
$\rho_{L_0}(x, x') = \lVert b(x) - b(x') \rVert_1$ (Hamming distance
on the binarized active-feature indicator $b(x)$ from
\Cref{sec:framework-clustering}). All other components are held fixed:
global frequency mask, $\ell_1$ centroid scoring, and a sweep over
$(\ncluster, \sparsity) \in \{50, 100\} \times \{100, 200, 400\}$
with the best cell reported per model. The $\rho_{L_2}$ row is each
model's default \textsc{FreqMask-KM} configuration
(Appendix~\ref{app:hyperparameters}); for GPT-oss this configuration uses a
smaller mask ($\sparsity = 50$) than the binarized-metric grid above.

\paragraph{Results.} \Cref{tab:freq-abl-cluster} reports the best
3-slice mean AUROC per clustering metric and per model. Two
observations.

First, $\rho_{L_2}$ is the best metric on four of the five $\geq$8B
models. The margin over the next-best metric ranges
from $0.10$ AUROC on Ministral down to $0.013$ on Qwen3 and $0.003$ on
GPT-oss. The exception is Gemma, where $\rho_{L_1}$ edges ahead
($0.776$ vs.\ $0.762$ for $\rho_{L_2}$, a $0.014$ gap). On Qwen 1.5B,
$\rho_{L_2}$ places third, behind $\rho_{L_0}$
(gap $0.014$) and $\rho_{L_1}$ (gap $0.003$).

Second, the binarized-clustering penalty is model-dependent.
$\rho_{L_0}$ depends on $z(x)$ only through $b(x)$ and so discards the
magnitude information in $z(x)$; on Ministral, LLaMA3, and Gemma it is the
worst metric, consistent with adversarial inputs being separable from
safe inputs primarily by activation magnitude on shared active
features rather than by the active support itself
(Appendix~\ref{app:freq-abl-overlap} presents direct evidence). On Qwen3 and
GPT-oss, by contrast, $\rho_{L_0}$ is the second-best metric (within
$0.013$ and $0.003$ of $\rho_{L_2}$): the magnitude information that
$\rho_{L_2}$ clustering exploits is less pivotal for these two models'
cluster assignments, even though magnitude still carries the
discriminative signal at scoring time.

\begin{table}[h]
\centering
\small
\caption{Clustering metric for \textsc{FreqMask-KM}, holding the
global frequency mask and $\ell_1$ scoring fixed. AUROC is the
3-slice mean; the $\rho_{L_2}$ row is each model's default
configuration and the other rows report the best of a
$(\ncluster, \sparsity)$ sweep over $\{50,100\} \times \{100, 200, 400\}$.
Single seed; bold marks the best metric per
column.}
\label{tab:freq-abl-cluster}
\setlength{\tabcolsep}{4pt}
\begin{tabular}{lcccccc}
\toprule
Metric & Qwen 1.5B & Ministral 8B & LLaMA3 8B & Qwen3 8B & GPT-oss 20B & Gemma 4 26B \\
\midrule
$\rho_{L_2}$ (default) & $0.745$          & $\mathbf{0.914}$ & $\mathbf{0.937}$ & $\mathbf{0.915}$ & $\mathbf{0.900}$ & $0.762$ \\
$\rho_{L_1}$           & $0.748$          & $0.817$          & $0.914$          & $0.882$          & $0.839$          & $\mathbf{0.776}$ \\
$\rho_{L_0}$           & $\mathbf{0.759}$ & $0.785$          & $0.860$          & $0.902$          & $0.897$          & $0.731$ \\
\bottomrule
\end{tabular}
\end{table}

\subsection{Number of Clusters}
\label{app:freq-abl-K}

\paragraph{Setup.} We sweep $\ncluster \in \{10, 25, 50, 100, 200,
500\}$ at \emph{each model's default mask size} $\sparsity$
(Qwen 1.5B $100$, Ministral $200$, LLaMA3 $400$, Qwen3 $400$,
GPT-oss $50$, Gemma $25$; \Cref{tab:hp-framework}), holding
$\rho_{L_2}$ clustering, global mask, and $\ell_1$ scoring fixed.
Fixing $\sparsity$ per model (rather than a single shared value) keeps
each model in the mask regime it was selected for: a shared mask far
above the small-mask models' optima collapses their adversarial slice
at every $\ncluster$, so a shared setting would understate those rows.

\paragraph{Results.} \Cref{tab:freq-abl-K} reports the resulting
3-slice mean AUROC. With each model held at its own mask size, all six
rows peak inside the sweep and fall off toward both ends, with
$\ncluster = 500$ uniformly the worst or near-worst entry. The peak
$\ncluster$ tracks the per-model selection in
Appendix~\ref{app:hyperparameters}: Qwen 1.5B at $25$, Ministral and Gemma at
$50$, LLaMA3 and Qwen3 at $100$, and GPT-oss at the small end ($10$--$25$,
$0.927$). The two small-mask models (GPT-oss, Gemma) are no longer
depressed---at their own $\sparsity$ they reach $0.927$ and $0.856$, in
line with their main-table \textsc{FreqMask-KM} values, rather than the
much lower figures a shared large mask would produce.
The Ministral row still shows large fluctuation outside its $\ncluster = 50$
peak, driven by the adversarial slice collapsing at
$\ncluster \in \{25, 100, 200, 500\}$; we do not have a mechanistic
explanation and read it as a sensitivity of Ministral's representation
geometry to $\ncluster$ rather than as noise.

\begin{table}[h]
\centering
\small
\caption{Number of clusters for \textsc{FreqMask-KM}, sweeping
$\ncluster$ at \emph{each model's default mask size} $\sparsity$
(Qwen 1.5B $100$, Ministral $200$, LLaMA3 $400$, Qwen3 $400$,
GPT-oss $50$, Gemma $25$; \Cref{tab:hp-framework}), with $\rho_{L_2}$
clustering, global mask, and $\ell_1$ scoring. AUROC is the 3-slice
mean. Single seed; bold marks the best $\ncluster$ per row. The
complementary no-mask analysis (mask disabled, $\ncluster$ swept) is in
\Cref{tab:freq-abl-nomask}.}
\label{tab:freq-abl-K}
\setlength{\tabcolsep}{6pt}
\begin{tabular}{lcccccc}
\toprule
$\ncluster$ & $10$ & $25$ & $50$ & $100$ & $200$ & $500$ \\
\midrule
Qwen 1.5B    & $0.745$ & $\mathbf{0.770}$ & $0.729$ & $0.745$ & $0.762$ & $0.720$ \\
Ministral 8B & $0.879$ & $0.767$ & $\mathbf{0.914}$ & $0.768$ & $0.774$ & $0.768$ \\
LLaMA3 8B    & $0.884$ & $0.916$ & $0.933$ & $\mathbf{0.936}$ & $0.898$ & $0.876$ \\
Qwen3 8B     & $0.880$ & $0.903$ & $0.848$ & $\mathbf{0.914}$ & $0.885$ & $0.880$ \\
GPT-oss 20B  & $\mathbf{0.927}$ & $0.926$ & $0.900$ & $0.875$ & $0.835$ & $0.832$ \\
Gemma 4 26B  & $0.841$ & $0.841$ & $\mathbf{0.856}$ & $0.801$ & $0.802$ & $0.783$ \\
\bottomrule
\end{tabular}
\end{table}

\paragraph{Does the mask do real work? A best-case no-mask comparison.}
Table~\ref{tab:freq-abl-K} varies $\ncluster$ with the global frequency
mask \emph{on}. To test whether the mask is load-bearing rather than
cosmetic, we give the no-mask condition its own best case: we disable the
mask entirely (\texttt{use\_masking=false}, so $\ell_1$ scoring runs over
\emph{all} SAE features with no top-$\sparsity$ restriction) and re-sweep
$\ncluster \in \{10, 25, 50, 100, 200, 500\}$, holding $\rho_{L_2}$
clustering and $\ell_1$ scoring fixed, so the comparison does not depend
on a cluster count chosen for the masked configuration. \Cref{tab:freq-abl-nomask} gives the full sweep; the best no-mask
entry per model, against the deployed masked configuration, is the
table discussed in \Cref{sec:exp-decomposition}.

\begin{table}[h]
\centering
\small
\caption{\textsc{FreqMask-KM} with the global mask \emph{disabled}
(\texttt{use\_masking=false}, $\ell_1$ over all SAE features), sweeping
$\ncluster$ with $\rho_{L_2}$ clustering and $\ell_1$ scoring. AUROC is
the 3-slice mean; single seed; bold marks the best $\ncluster$ per row.
``Masked best'' is each model's main-table \textsc{FreqMask-KM} AUROC
(\Cref{tab:main-results}), where the mask is on. These are the two
columns compared in \Cref{tab:mask-ablation}.}
\label{tab:freq-abl-nomask}
\setlength{\tabcolsep}{6pt}
\begin{tabular}{lcccccc!{\vrule}c}
\toprule
$\ncluster$ & $10$ & $25$ & $50$ & $100$ & $200$ & $500$ & Masked best \\
\midrule
Qwen 1.5B    & $0.769$ & $0.766$ & $0.747$ & $0.759$ & $\mathbf{0.776}$ & $0.736$ & $0.762$ \\
Ministral 8B & $0.875$ & $0.843$ & $\mathbf{0.904}$ & $0.864$ & $0.865$ & $0.859$ & $0.914$ \\
LLaMA3 8B    & $0.895$ & $0.919$ & $0.935$ & $\mathbf{0.937}$ & $0.902$ & $0.880$ & $0.937$ \\
Qwen3 8B     & $0.883$ & $0.904$ & $0.865$ & $\mathbf{0.914}$ & $0.896$ & $0.898$ & $0.915$ \\
GPT-oss 20B  & $\mathbf{0.781}$ & $0.780$ & $0.752$ & $0.748$ & $0.735$ & $0.745$ & $0.900$ \\
Gemma 4 26B  & $0.736$ & $0.718$ & $\mathbf{0.766}$ & $0.707$ & $0.737$ & $0.733$ & $0.856$ \\
\bottomrule
\end{tabular}
\end{table}

\subsection{Mask Size}
\label{app:freq-abl-k}

\paragraph{Setup.} We sweep $\sparsity \in \{25, 50, 100, 200, 400,
800, 1600\}$ at fixed $\ncluster = 100$, holding $\rho_{L_2}$
clustering, global mask, and $\ell_1$ scoring fixed.

\paragraph{Results.} In \Cref{tab:freq-abl-k},
Three of the five $\geq$8B models (Ministral, LLaMA3, Qwen3)
gain as $\sparsity$ grows and then saturate. Ministral rises
monotonically: $0.654 \to 0.817 \to 0.858$ across
$\sparsity = 25, 400, 1600$. LLaMA3 has a non-monotone region at small
$\sparsity$ ($0.805$ at $\sparsity=25$, dropping to $0.701$--$0.702$ at
$\sparsity \in \{50, 100\}$), then recovers and saturates at
$\sparsity \geq 200$ with AUROC $\approx 0.93$--$0.94$. Qwen3 collapses
at small $\sparsity$ ($0.38$ at $\sparsity \in \{25, 50\}$) but jumps
to $0.793$ at $\sparsity = 100$ and saturates at $0.915$ for
$\sparsity \geq 200$. GPT-oss and Gemma run in the opposite direction:
both peak at the smallest masks ($0.875$ at $\sparsity = 50$ for GPT-oss,
$0.820$ at $\sparsity = 50$ for Gemma) and decay as the mask grows---GPT-oss
to a plateau of $0.748$ for $\sparsity \geq 200$ and Gemma to $0.55$--$0.70$,
the two models for which a larger mask hurts. These small-mask peaks are
taken at the table's fixed $\ncluster = 100$; each model's main-table value
($0.900$ for GPT-oss, $0.856$ for Gemma; \Cref{tab:main-results}) is reached
only when the small mask is paired with $\ncluster = 50$, the joint
corner this $\ncluster = 100$ slice does not reach. Qwen 1.5B is essentially flat: AUROC fluctuates
within $0.717$--$0.759$ across the full range and shows no monotone
trend. For the models that benefit from a larger mask, the saturation
point ($\sparsity \approx 400$ on LLaMA3 and Qwen3,
$\sparsity \approx 800$ on Ministral) sets a soft upper bound on how
much the global mask helps before adding features stops contributing
discriminative signal.

\begin{table}[h]
\centering
\small
\caption{Mask size for \textsc{FreqMask-KM} at $\ncluster = 100$,
$\rho_{L_2}$ clustering, global mask, $\ell_1$ scoring. AUROC is the
3-slice mean. Single seed; bold marks the best
$\sparsity$ per row. Most $\geq$8B models favor a larger mask and then
plateau; GPT-oss is the exception, peaking at a small mask.}
\label{tab:freq-abl-k}
\setlength{\tabcolsep}{4pt}
\begin{tabular}{lccccccc}
\toprule
$\sparsity$  & $25$    & $50$    & $100$   & $200$   & $400$   & $800$   & $1600$ \\
\midrule
Qwen 1.5B    & $0.717$ & $0.741$ & $0.745$ & $0.734$ & $0.736$ & $\mathbf{0.759}$ & $\mathbf{0.759}$ \\
Ministral 8B & $0.654$ & $0.667$ & $0.729$ & $0.768$ & $0.817$ & $0.845$ & $\mathbf{0.858}$ \\
LLaMA3 8B    & $0.805$ & $0.701$ & $0.702$ & $0.929$ & $\mathbf{0.937}$ & $\mathbf{0.937}$ & $\mathbf{0.937}$ \\
Qwen3 8B     & $0.383$ & $0.378$ & $0.793$ & $\mathbf{0.915}$ & $\mathbf{0.915}$ & $\mathbf{0.915}$ & $\mathbf{0.915}$ \\
GPT-oss 20B  & $0.863$ & $\mathbf{0.875}$ & $0.832$ & $0.748$ & $0.748$ & $0.748$ & $0.748$ \\
Gemma 4 26B  & $0.801$ & $\mathbf{0.820}$ & $0.729$ & $0.549$ & $0.684$ & $0.697$ & $0.701$ \\
\bottomrule
\end{tabular}
\end{table}

\subsection{Scoring Metric: $\ell_1$ vs.\ $\ell_2$ vs.\ Mahalanobis}
\label{app:freq-abl-scoring}

\paragraph{Setup.} \textsc{FreqMask-KM} as defined in
\Cref{sec:framework-instantiations} uses $\ell_1$ centroid distance
on the masked SAE features. We compare $\ell_1$, $\ell_2$, and a
per-cluster Mahalanobis distance on the same masked features, holding
$\rho_{L_2}$ KMeans clustering and the global frequency mask fixed.
The Mahalanobis variant fits a per-cluster covariance on safe samples
and adds a regularizer $\lambda I$ before inversion, swept over
$\lambda \in \{10^{-3}, 10^{-2}, 10^{-1}\}$. For each scoring rule we
sweep $(\ncluster, \sparsity) \in \{50, 100\} \times \{100, 200, 400\}$
and report the best cell per model.

\paragraph{Results.} \Cref{tab:freq-abl-scoring} reports best 3-slice
mean AUROC per scoring rule on each model. $\ell_1$ and $\ell_2$ are
within $0.012$ AUROC of each other on every model except GPT-oss and
Gemma, where the gap widens to $0.032$ and $0.035$ respectively;
$\ell_1$ wins on Ministral, LLaMA3, and Gemma, while $\ell_2$ wins on
Qwen3, GPT-oss, and Qwen 1.5B. The two
magnitude-aware rules thus remain close, with the
choice between them model-dependent. Mahalanobis trails both
magnitude-aware rules on every $\geq$8B model except Gemma, where it is
in fact the best scoring rule ($0.785$, against $\ell_1$'s $0.762$).
Across models it benefits from a large swept regularizer---rising
monotonically to the largest value $\lambda = 10^{-1}$ on the other
models, and peaking one step earlier at $\lambda = 10^{-2}$ on Gemma
($0.785$ vs.\ $0.783$ at $10^{-1}$)---while at $\lambda = 10^{-3}$
AUROC drops to $0.56$--$0.81$ depending on model. This is
consistent with the per-cluster covariance being poorly conditioned on
the masked SAE features at the cluster sample sizes available on most
models, and substantiates the choice of a magnitude-aware $\ell_1$
scoring in \Cref{alg:freqmask-km} on those models; Gemma is the lone
exception, with a better-conditioned per-cluster covariance whose origin
we do not fully diagnose.

\begin{table}[h]
\centering
\small
\caption{Scoring metric for \textsc{FreqMask-KM}, holding $\rho_{L_2}$
KMeans clustering and the global frequency mask fixed. AUROC is the
3-slice mean; each cell is the best of a $(\ncluster, \sparsity)$
sweep over $\{50,100\} \times \{100, 200, 400\}$. The Mahalanobis row
additionally sweeps $\lambda \in \{10^{-3}, 10^{-2}, 10^{-1}\}$ and
selects the best $\lambda$ (the largest, $10^{-1}$, except Gemma, which
selects $10^{-2}$). Bold marks the best scoring rule per column.}
\label{tab:freq-abl-scoring}
\setlength{\tabcolsep}{4pt}
\begin{tabular}{lcccccc}
\toprule
Scoring & Qwen 1.5B & Ministral 8B & LLaMA3 8B & Qwen3 8B & GPT-oss 20B & Gemma 4 26B \\
\midrule
$\ell_1$        & $0.745$          & $\mathbf{0.914}$ & $\mathbf{0.937}$ & $0.915$          & $0.851$          & $0.762$ \\
$\ell_2$        & $\mathbf{0.749}$ & $0.902$          & $0.931$          & $\mathbf{0.917}$ & $\mathbf{0.883}$ & $0.727$ \\
Mahalanobis     & $0.743$          & $0.885$          & $0.848$          & $0.770$          & $0.860$          & $\mathbf{0.785}$ \\
\bottomrule
\end{tabular}
\end{table}

\subsection{Within-Cluster Active-Support Overlap of Safe and Unsafe Points}
\label{app:freq-abl-overlap}

\paragraph{Setup.} The framework assumes that within a cluster $C_c$,
safe samples occupy a small active-feature support, and the scoring
rule operates on the projection onto that support. The remaining
question is whether unsafe samples assigned to $C_c$ activate a
\emph{different} support from safe samples, or the \emph{same} support
at different magnitudes. We measure this directly. Let
$C_c^{\mathrm{safe}}$ and $C_c^{\mathrm{unsafe}}$ denote the safe and
unsafe samples assigned to cluster $c$ at evaluation time. For each
cluster, we define the population active set
\begin{equation}
S_c^{\mathrm{safe}} = \bigl\{ j \in [d_2] : \tfrac{1}{|C_c^{\mathrm{safe}}|} \!\!\sum_{x \in C_c^{\mathrm{safe}}}\!\! \mathbf{1}[z(x)_j > 0] > 0.5 \bigr\},
\qquad
S_c^{\mathrm{unsafe}} \text{ analogously.}
\end{equation}
The inner expression is the within-cluster firing frequency of SAE
feature $j$ on the safe population: the fraction of safe samples in
$C_c$ for which feature $j$ is active. The set $S_c^{\mathrm{safe}}$
collects the features that fire on a strict majority ($>50\%$) of
safe samples in the cluster; we treat these as the typical active
features for the safe population in that cluster.
$S_c^{\mathrm{unsafe}}$ is the analogous set for unsafe samples
assigned to the same cluster. The framework's local-sparsity
assumption is that $|S_c^{\mathrm{safe}}|$ is small, by construction
of the clustering; the question this section asks is whether
$S_c^{\mathrm{unsafe}}$ differs from $S_c^{\mathrm{safe}}$.

\paragraph{Results.} \Cref{tab:freq-abl-overlap} reports the
restricted Jaccard distribution. The median is $\geq 0.94$ on all six
models, and reaches $0.98$ on Qwen3: within a cluster, safe and unsafe
points populate nearly identical masked active supports. The anomaly
score therefore is not driven by which features are active in unsafe
inputs, but by how strongly they are active. This is the structural
reason that magnitude-aware scoring ($\ell_1$, $\ell_2$) beats the
binarized alternative in Appendix~\ref{app:freq-abl-scoring}: discarding the
magnitude information loses the discriminative signal. At the
clustering stage the magnitude penalty is more
model-dependent---$\rho_{L_0}$ is competitive on Qwen3 and
GPT-oss---as discussed in Appendix~\ref{app:freq-abl-cluster}.

\begin{table}[h]
\centering
\small
\caption{Within-cluster active-support overlap of safe and unsafe
points under the \textsc{FreqMask-KM} global mask. Counts are masked
features that fire in $>50\%$ of the corresponding population within
each cluster, summed across clusters; ``Jaccard median'' is the
per-cluster restricted Jaccard, taken over clusters with at least
one safe and one unsafe sample assigned.}
\label{tab:freq-abl-overlap}
\setlength{\tabcolsep}{4pt}
\begin{tabular}{lcccccc}
\toprule
Model & $\sparsity$ & Both & Safe only & Unsafe only & Neither & Jaccard median \\
\midrule
Qwen 1.5B    & $100$ & $98$  & $0$  & $0$ & $1$   & $1.00$ \\
Ministral 8B & $200$ & $139$ & $20$ & $2$ & $40$  & $0.94$ \\
LLaMA3 8B    & $400$ & $67$  & $3$  & $2$ & $329$ & $0.95$ \\
Qwen3 8B     & $400$ & $53$  & $1$  & $1$ & $345$ & $0.98$ \\
GPT-oss 20B  & $50$  & $36$  & $1$  & $2$ & $10$  & $0.95$ \\
Gemma 4 26B  & $25$  & $15$  & $1$  & $1$ & $8$   & $0.95$ \\
\bottomrule
\end{tabular}
\end{table}

\paragraph{Summary.} Reading the four ablations in stage order:
$\rho_{L_2}$ is the best clustering metric on four of the five $\geq$8B
models (Gemma is the exception, narrowly preferring $\rho_{L_1}$),
though its margin over the binarized
$\rho_{L_0}$ shrinks to $0.003$--$0.013$ on Qwen3 and GPT-oss; the best
$\ncluster$ and $\sparsity$ vary widely across models, within the
ranges $\ncluster \in \{10, 25, 50, 100\}$ and
$\sparsity \in \{25, \dots, 1600\}$, with no single setting dominating
and GPT-oss and Gemma preferring the small-$\ncluster$, small-$\sparsity$
corner that the others avoid; $\ell_1$ and $\ell_2$ scoring perform within
$0.012$ AUROC of each other on every model except GPT-oss ($0.032$) and
Gemma ($0.035$), with the better of the two model-dependent,
while per-cluster Mahalanobis trails both on every
$\geq$8B model except Gemma, and is
best at the largest swept regularizer on every model but Gemma (which
selects $10^{-2}$), broadly consistent with ill-conditioned per-cluster
covariance estimates on most models. The within-cluster Jaccard analysis in
Appendix~\ref{app:freq-abl-overlap} gives a direct mechanistic reason for why
magnitude-aware scoring ($\ell_1$, $\ell_2$) outperforms its binary
counterpart: safe and unsafe points within a cluster activate the same
masked features, and the discriminative signal is on magnitudes.

These orderings are specific to \textsc{FreqMask-KM}, and several of
them follow from choices made elsewhere in the pipeline rather than
from the stage in question alone: the $\ell_1$ scoring rule is paired
with a global frequency mask on SAE features whose entries are
nonnegative, the per-cluster covariance estimates that disadvantage
Mahalanobis are computed on a $\sparsity$-dimensional masked subspace
with $|C_c| / \sparsity$ in the low-tens range, and the $\ncluster$
peak depends on the safe sample size $N$ and the global sparsity $s$
(\Cref{sec:framework-clustering}). Other instantiations of the
framework, with different embedding spaces, mask constructions, or
learned per-cluster subspaces, may legitimately prefer different
choices at the same stage. \Cref{tab:decomposition} already shows
that \textsc{LearnedMask-LoRA} reaches the same performance band
under a different scoring rule (reconstruction residual rather than
$\ell_1$ centroid distance), and this section is intended as an ablation study of how to read the framework's design space, not as a
prescription that transfers to those instantiations unchanged.

\subsection{Extraction Layer}
\label{app:freq-abl-layer}

The main results read activations from a single mid-to-late layer per
model, chosen by the heuristic described in \Cref{sec:exp-setup}. To
test that choice, we sweep the extraction layer on three model
families, training a fresh SAE at each layer and running
\textsc{FreqMask-KM} with every other hyperparameter held at the
model's published setting. AUROC in this subsection is computed
against pooled HarmBench.

\begin{table}[h]
\centering
\small
\caption{\textsc{FreqMask-KM} AUROC against pooled HarmBench as a
function of the extraction layer. A fresh SAE is trained at each
layer; all other hyperparameters are held at each model's published
setting. Bold marks the layer used in the main results. The layer
used in the main results is the strongest of those swept on
LLaMA3-8B and GPT-oss-20B; on Gemma-4-26B the shallow layers score
higher.}
\label{tab:freq-abl-layer}
\setlength{\tabcolsep}{6pt}

\begin{tabular}{lcccccccc}
\toprule
\multicolumn{9}{c}{\textit{LLaMA3-8B}} \\
\midrule
Layer & $8$ & $12$ & $16$ & $20$ & $24$ & $\mathbf{25}$ & $28$ & $30$ \\
AUROC & $0.924$ & $0.913$ & $0.800$ & $0.802$ & $0.799$
      & $\mathbf{0.956}$ & $0.785$ & $0.921$ \\
\bottomrule
\end{tabular}

\vspace{4pt}

\begin{tabular}{lcccccccc}
\toprule
\multicolumn{9}{c}{\textit{GPT-oss-20B}} \\
\midrule
Layer & $4$ & $8$ & $12$ & $16$ & $18$ & $\mathbf{20}$ & $22$ & $23$ \\
AUROC & $0.924$ & $0.932$ & $0.927$ & $0.896$ & $0.924$
      & $\mathbf{0.966}$ & $0.842$ & $0.861$ \\
\bottomrule
\end{tabular}

\vspace{4pt}

\begin{tabular}{lcccccc}
\toprule
\multicolumn{7}{c}{\textit{Gemma-4-26B}} \\
\midrule
Layer & $8$ & $12$ & $16$ & $20$ & $\mathbf{25}$ & $29$ \\
AUROC & $0.940$ & $0.963$ & $0.934$ & $0.807$ & $\mathbf{0.828}$
      & $0.930$ \\
\bottomrule
\end{tabular}
\end{table}

On LLaMA3-8B and GPT-oss-20B the layer used in the main results gives
the highest AUROC of those swept ($0.956$ and $0.966$), with the
remaining layers spanning $0.785$--$0.924$ and $0.842$--$0.932$
respectively. On Gemma-4-26B the shallow layers are the strongest
($0.940$ at layer $8$ and $0.963$ at layer $12$) and the layer used in
the main results reaches $0.828$.

\paragraph{Takeaway.} The mid-to-late heuristic is a reasonable
default on all three models: it selects the best available layer on
two of them, and a workable one on the third. Which layer is best is
model-dependent, and it does not correspond to a fixed depth or a
fixed fraction of depth, so it appears to follow each model's
architecture rather than a rule that transfers across families. A
per-model sweep can therefore recover a small amount of additional
AUROC, at the cost of extra computations.

\subsection{SAE Sparsity Penalty}
\label{app:freq-abl-sae-lambda}

The framework operates in the feature space of a pretrained SAE, so its
performance could in principle depend on how that SAE was trained. We
test this by sweeping the SAE's main training hyperparameter, the
sparsity penalty $\lambda_{\ell_1}$ of Appendix~\ref{app:sae-training}, over
three orders of magnitude on four model families, and rerunning
\textsc{FreqMask-KM} unchanged on each resulting encoder: only the SAE
checkpoint changes, while $\ncluster$, $\sparsity$, the extraction
layer and every other setting stay at the model's published value.
\Cref{tab:freq-abl-sae-lambda} reports, for each encoder, the density
it reaches (the measured average number of active features per input,
$L_0$, out of $d_{\text{SAE}} = 16384$) together with the AUROC it
yields. AUROC here is the three-category mean of
\Cref{tab:main-results}, and the deployed rows reproduce that table.

Detection is stable over a wide range of penalties rather than
depending on one setting. On LLaMA3-8B, AUROC stays between $0.858$ and
$0.936$ while $\lambda_{\ell_1}$ moves by a factor of $1000$ and $L_0$
by a factor of seven, and Qwen2-1.5B stays between $0.643$ and $0.762$
at every setting, consistent with its representation quality rather than
its SAE bounding what unsupervised detection can recover on that model
(\Cref{sec:exp-main}). The clear losses occur where the encoder fails to
sparsify: the two Gemma-4-26B encoders trained for twenty epochs reach
$L_0 = 81$ and $103$ and fall to $0.627$ and $0.665$, well below the
other three settings on that model.

The penalty $\lambda_{\ell_1}$ does not determine sparsity on its own,
since training length also affects the density reached: the same
$\lambda_{\ell_1} = 3$ on Gemma-4-26B gives $L_0 = 54$ after ten epochs
and $L_0 = 81$ after twenty. The measured $L_0$, not $\lambda_{\ell_1}$,
is therefore the quantity on which SAE sparsity is comparable, which is
also why $\lambda_{\ell_1}$ is not shared across models: the two largest
models need a much larger penalty to reach a comparable density because
their residual-stream activations have a larger RMS
(Appendix~\ref{app:sae-training}).

\begin{table}[h]
\centering
\small
\caption{\textsc{FreqMask-KM} AUROC as a function of the SAE sparsity
penalty $\lambda_{\ell_1}$. Only the SAE checkpoint changes; all other
hyperparameters are held at each model's published setting. $L_0$ is
the measured average number of active SAE features per input, out of
$16384$. Epochs are listed because $\lambda_{\ell_1}$ and training
length jointly determine the density reached. AUROC is the
three-category mean of \Cref{tab:main-results}. Bold marks the encoder
used in the main results.}
\label{tab:freq-abl-sae-lambda}
\setlength{\tabcolsep}{6pt}
\begin{tabular}{lcccc}
\toprule
Model & $\lambda_{\ell_1}$ & Epochs & $L_0$ & AUROC \\
\midrule
LLaMA3-8B  & $\mathbf{0.01}$ & $\mathbf{5}$  & $\mathbf{66}$ & $\mathbf{0.936}$ \\
           & $1$    & $10$ & $58$  & $0.861$ \\
           & $3$    & $10$ & $22$  & $0.858$ \\
           & $10$   & $10$ & $9$   & $0.910$ \\
\midrule
Qwen2-1.5B & $\mathbf{0.01}$ & $\mathbf{10}$ & $\mathbf{227}$ & $\mathbf{0.762}$ \\
           & $0.1$  & $10$ & $213$ & $0.758$ \\
           & $1$    & $10$ & $153$ & $0.719$ \\
           & $3$    & $10$ & $91$  & $0.643$ \\
           & $10$   & $10$ & $53$  & $0.696$ \\
\midrule
GPT-oss-20B & $0.01$ & $20$ & $393$ & $0.846$ \\
            & $300$  & $20$ & $103$ & $0.832$ \\
            & $\mathbf{700}$ & $\mathbf{20}$ & $\mathbf{48}$ & $\mathbf{0.900}$ \\
\midrule
Gemma-4-26B & $1$    & $10$ & $55$  & $0.823$ \\
            & $1$    & $20$ & $103$ & $0.665$ \\
            & $\mathbf{3}$ & $\mathbf{10}$ & $\mathbf{54}$ & $\mathbf{0.856}$ \\
            & $3$    & $20$ & $81$  & $0.627$ \\
            & $10$   & $10$ & $31$  & $0.835$ \\
\bottomrule
\end{tabular}
\end{table}

\section{Ablations for \textsc{LearnedMask-LoRA}}
\label{app:learnedmask-lora-ablations}

This section reports three ablations on \textsc{LearnedMask-LoRA}
(\Cref{sec:framework-instantiations},
Appendix~\ref{app:hyperparameters}) targeting choices that are not pinned
down by the framework definition: whether the per-cluster LoRA stage
is necessary (Appendix~\ref{app:abl-lora}), whether Mahalanobis scoring on
the LoRA-adapted reconstruction outperforms the $\ell_2$ whitened
residual used in the main results (Appendix~\ref{app:abl-scoring}), and
whether the binary straight-through mask in Stage~1 can be replaced
by a continuous mask without loss (Appendix~\ref{app:abl-mask-type}).

Unless otherwise noted, all runs use the configuration of
Appendix~\ref{app:hyperparameters}: cosine binarized clustering with
$\ncluster = 50$ clusters for Ministral 8B and LLaMA3 8B and
$\ncluster = 100$ for the other four models, whitened residual
normalization with shrinkage $\alpha = 0.3$, Stage~1 logit
initialization at $0.5$, LoRA rank schedule $r \in \{2, 4, 8\}$ keyed
to cluster size, $30$ LoRA epochs at learning rate $10^{-3}$, and LoRA
sparsity weight $10^{-3}$. The minimum cluster size for LoRA Stage~2 is
$20$ for Qwen 1.5B and Ministral 8B, $500$ for LLaMA3 8B, and $50$
for Qwen3 8B and Gemma 4 26B. Performance is reported as mean AUROC across the three
evaluation slices (BeaverTails, ToxiGen, pooled HarmBench adversarial
attacks).

\paragraph{Relation to \Cref{tab:main-results}.} The Stage~1$+$Stage~2
row of \Cref{tab:abl-lora} matches the main table for every model
except Qwen3 8B, where it reads $0.914$ against $0.917$. The ablation
applies the rank schedule above, whereas the Qwen3 main-table run used
a flat $r = 8$. The gap is $0.003$ AUROC and does not change the
conclusion that Stage~2 helps on Qwen3: $+0.046$ with the schedule,
$+0.049$ without.

\subsection{LoRA Stage Contribution}
\label{app:abl-lora}

\paragraph{Setup.} We compare the Stage~1 binary-mask checkpoint
against the full two-stage pipeline that adds a per-cluster LoRA
adapter on top. Both rows reuse the same Stage~1 checkpoints and
clustering; the only difference is whether Stage~2 is applied. Five
seeds per row.

\paragraph{Results.} \Cref{tab:abl-lora} reports mean and standard
deviation of 3-slice AUROC across $5$ seeds. The LoRA stage helps on
four of the five $\geq$8B models---$+0.062$ AUROC on Ministral 8B,
$+0.096$ on LLaMA3 8B, $+0.046$ on Qwen3 8B, and $+0.051$ on Gemma 4
26B---but degrades two models: Qwen 1.5B ($-0.033$) and GPT-oss 20B
($-0.101$). GPT-oss thus joins Qwen 1.5B as a model on which the Stage~2
adapter hurts, so we report it Stage~1-only in the main results and
disable Stage~2 on both.

\begin{table}[h]
\centering
\small
\caption{Effect of the per-cluster LoRA stage on
\textsc{LearnedMask-LoRA}. AUROC is the 3-slice mean (BeaverTails,
ToxiGen, pooled HarmBench), reported as mean $\pm$ standard
deviation across $5$ random seeds.}
\label{tab:abl-lora}
\setlength{\tabcolsep}{4pt}
\resizebox{\textwidth}{!}{%
\begin{tabular}{lcccccc}
\toprule
Method & Qwen 1.5B & Ministral 8B & LLaMA3 8B & Qwen3 8B & GPT-oss 20B & Gemma 4 26B \\
\midrule
\textsc{LearnedMask} (Stage~1 only)        & $0.794 \pm 0.001$ & $0.834 \pm 0.001$ & $0.816 \pm 0.001$ & $0.868 \pm 0.000$ & $0.919 \pm 0.001$ & $0.853 \pm 0.000$ \\
\textsc{LearnedMask + LoRA} (Stage~1 + Stage~2) & $0.761 \pm 0.001$ & $0.896 \pm 0.001$ & $0.912 \pm 0.001$ & $0.914 \pm 0.001$ & $0.818 \pm 0.002$ & $0.904 \pm 0.000$ \\
\midrule
$\Delta$ from LoRA & $-0.033$ & $+0.062$ & $+0.096$ & $+0.046$ & $-0.101$ & $+0.051$ \\
\bottomrule
\end{tabular}%
}
\end{table}

\subsection{Scoring Function: Mahalanobis vs.\ Whitened $\ell_2$}
\label{app:abl-scoring}

\paragraph{Setup.} The main \textsc{LearnedMask-LoRA} results in
\Cref{tab:decomposition} use the $\ell_2$ norm of the whitened (masked)
SAE-reconstruction residual under the assigned cluster's adapted
encoder/decoder pair. We test whether replacing this with a per-cluster
Mahalanobis distance on the same residual improves performance. The
Mahalanobis variant fits a per-cluster covariance on the safe residuals
and adds a regularizer $\lambda I$ before inversion; we sweep
$\lambda \in \{10^{-4}, 10^{-2}, 10^{-1}, 1.0\}$, single seed per cell,
on Stage~1 checkpoints with hyperparameters matched to the multi-seed
configuration of Appendix~\ref{app:abl-lora}.

\paragraph{Results.} \Cref{tab:abl-scoring} reports 3-slice mean AUROC
for the full $\lambda$ sweep, the best Mahalanobis cell per model, and
the whitened $\ell_2$ reference from \Cref{tab:abl-lora}. Best $\lambda$
is large for the $\geq$8B models ($\lambda = 1.0$) and intermediate for
Qwen 1.5B ($\lambda = 10^{-1}$), with the exception of Gemma, whose best
$\lambda$ is the smallest ($10^{-4}$); at small $\lambda$ ($10^{-4}$) the
per-cluster covariance is otherwise ill-conditioned and AUROC collapses. At its
best $\lambda$, Mahalanobis reaches $0.793/0.890/0.904/0.879/0.577/0.785$ on
the six models, against the whitened $\ell_2$ reference of
$0.794/0.896/0.912/0.914/0.818/0.904$. The two scoring rules are within
$0.008$ AUROC on the original three models, but the gap widens on the
newer models ($0.035$ on Qwen3, $0.119$ on Gemma, and $0.241$ on GPT-oss);
on no model does Mahalanobis exceed the whitened $\ell_2$ reference. We retain
whitened $\ell_2$ as the default, since it requires no per-cluster
covariance fit and no regularization sweep.

\begin{table}[h]
\centering
\small
\caption{Scoring function comparison on \textsc{LearnedMask-LoRA}.
Mahalanobis scoring on the per-cluster reconstruction residual is
swept over the regularizer $\lambda$ ($1$ seed per cell); the
``best Mahal.''\ column reports the best $\lambda$ per model. The
``whitened $\ell_2$'' column is the multi-seed reference from
\Cref{tab:abl-lora}. AUROC is the 3-slice mean; bold marks the best
$\lambda$ within each model.}
\label{tab:abl-scoring}
\setlength{\tabcolsep}{4pt}
\begin{tabular}{lcccccc}
\toprule
 & \multicolumn{4}{c}{Mahalanobis, $\lambda$} & best Mahal. & whitened $\ell_2$ \\
\cmidrule(lr){2-5}
Model & $10^{-4}$ & $10^{-2}$ & $10^{-1}$ & $1.0$ & (selected $\lambda$) & (reference) \\
\midrule
Qwen 1.5B    & $0.602$ & $0.738$ & $\mathbf{0.793}$ & $0.786$ & $0.793$ ($\lambda{=}10^{-1}$) & $0.794$ \\
Ministral 8B & $0.426$ & $0.653$ & $0.885$ & $\mathbf{0.890}$ & $0.890$ ($\lambda{=}1.0$) & $0.896$ \\
LLaMA3 8B    & $0.576$ & $0.785$ & $0.897$ & $\mathbf{0.904}$ & $0.904$ ($\lambda{=}1.0$) & $0.912$ \\
Qwen3 8B     & $0.691$ & $0.703$ & $0.710$ & $\mathbf{0.879}$ & $0.879$ ($\lambda{=}1.0$) & $0.914$ \\
GPT-oss 20B  & $0.544$ & $0.546$ & $0.552$ & $\mathbf{0.577}$ & $0.577$ ($\lambda{=}1.0$) & $0.818$ \\
Gemma 4 26B  & $\mathbf{0.785}$ & $0.782$ & $0.776$ & $0.749$ & $0.785$ ($\lambda{=}10^{-4}$) & $0.904$ \\
\bottomrule
\end{tabular}
\end{table}

\subsection{Mask Parameterization: Continuous vs.\ Binary STE}
\label{app:abl-mask-type}

\paragraph{Setup.} Stage~1 of \textsc{LearnedMask-LoRA} learns a
binary mask $m_c \in \{0,1\}^{d_2}$ via a straight-through estimator
(Appendix~\ref{app:hyperparameters}). We replace this with a continuous mask
$m_c \in [0,1]^{d_2}$ trained against the same Stage~1 reconstruction
objective, then apply the standard Stage~2 LoRA on top. Hyperparameters
match Appendix~\ref{app:abl-lora} otherwise; single seed.

\paragraph{Results.} \Cref{tab:abl-mask-type} reports 3-slice mean
AUROC and the continuous-variant FPR@$95$. On all five $\geq$8B models
the two parameterizations are within $0.007$ AUROC of each other
($+0.004$ on Ministral, $-0.007$ on LLaMA3, $-0.003$ on Qwen3,
$+0.005$ on GPT-oss, $-0.001$ on Gemma); the continuous-mask row is a single seed, so we
cannot resolve whether these gaps reflect a real difference or seed
variation, and we treat the two parameterizations as indistinguishable
at this granularity. On Qwen 1.5B the continuous variant underperforms
the Stage~1 binary baseline by $0.046$ AUROC. Note that for GPT-oss
both rows sit near $0.82$ rather than near its main-table AUROC: this
ablation holds the Stage~2 LoRA adapter \emph{on} while varying the
mask type, and LoRA degrades GPT-oss (\Cref{tab:abl-lora}), so both
mask types fall below the Stage~1-only configuration GPT-oss actually
deploys ($0.919$, \Cref{tab:main-results}); the within-row
binary-vs-continuous gap ($+0.005$) is unaffected. Binary STE is retained
as the default since it does not meaningfully underperform the
continuous variant on any model and yields a sparser mask.

\begin{table}[h]
\centering
\small
\caption{Mask parameterization in Stage~1 of
\textsc{LearnedMask-LoRA}, with Stage~2 LoRA applied on both rows.
``Binary STE + LoRA'' reproduces the multi-seed reference from
\Cref{tab:abl-lora} (mean $\pm$ std over $5$ seeds). ``Continuous + LoRA'' is a single seed.}
\label{tab:abl-mask-type}
\setlength{\tabcolsep}{4pt}
\resizebox{\textwidth}{!}{%
\begin{tabular}{lcccccc}
\toprule
Method & Qwen 1.5B & Ministral 8B & LLaMA3 8B & Qwen3 8B & GPT-oss 20B & Gemma 4 26B \\
\midrule
\multicolumn{7}{l}{\textit{3-slice mean AUROC}} \\
\midrule
Binary STE + LoRA (multi-seed)  & $0.794 \pm 0.001$ & $0.896 \pm 0.001$ & $0.912 \pm 0.001$ & $0.914 \pm 0.001$ & $0.818 \pm 0.002$ & $0.904 \pm 0.000$ \\
Continuous + LoRA ($1$ seed)    & $0.764$            & $0.900$           & $0.905$           & $0.911$           & $0.823$           & $0.903$ \\
$\Delta$                        & $-0.046$          & $+0.004$          & $-0.007$          & $-0.003$          & $+0.005$          & $-0.001$ \\
Continuous + LoRA FPR@$95$ ($\downarrow$) & $0.594$ & $0.253$ & $0.250$ & $0.433$ & $0.456$ & $0.229$ \\
\bottomrule
\end{tabular}%
}
\end{table}

\paragraph{Summary.} The LoRA stage improves the performance of
\textsc{LearnedMask-LoRA} on Ministral, LLaMA3, Qwen3, and Gemma, and is
correctly disabled on the two models it hurts (Qwen 1.5B and
GPT-oss). The whitened-$\ell_2$ scoring rule used in the main results
is not improved upon by per-cluster Mahalanobis at any swept
regularizer on any model. The binary STE Stage~1 mask is retained over
a continuous-mask variant since the binary method yields a sparser
mask and does not perform worse than a continuous one.

\section{Adaptive Attacks Against the Deployed Detector}
\label{app:adaptive-attack}

A static detector exposes a single fixed decision rule, a score and one
operating threshold, and a white-box attacker who optimizes against
that rule can almost always find an input that stays on the safe side
of the threshold yet remains harmful. Detection-based defenses for LLMs
have repeatedly been broken this way
\citep{andriushchenko2024jailbreaking, nasr2025attacker}, and we
expected the same here. We tested it directly: with
\textsc{FreqMask-KM} used as an adversarial defense detector at its deployment
threshold, adaptive attack \citep{andriushchenko2024jailbreaking} succeeds on $98\%$ of $50$
AdvBench behaviors on LLaMA3-8B and $94\%$ on Qwen2-1.5B, and the
detector flags none of them. That attacker optimizes only the model's
refusal signal and never the anomaly score, so a detector-aware
attacker would be at least as strong. We therefore do not claim
robustness to detector-aware adaptive attacks; what we claim is
unsupervised detection of new, non-adaptive unsafe inputs, which is a
different threat model.

\begin{table}[h]
\centering
\small
\caption{A white-box adversarial-suffix jailbreak against the guarded
model, with \textsc{FreqMask-KM} detecting harmful generation at its deployment
threshold.}
\label{tab:adaptive-attack}
\setlength{\tabcolsep}{8pt}
\begin{tabular}{lcc}
\toprule
Model & Attack success rate & Detector block rate \\
\midrule
LLaMA3-8B  & $98\%$ ($49/50$) & $0\%$ ($0/50$) \\
Qwen2-1.5B & $94\%$ ($47/50$) & $0\%$ ($0/50$) \\
\bottomrule
\end{tabular}
\end{table}

The outcome is a property of static
defenses rather than of this detector in particular: the defense is
fixed first and the attacker moves second. It also does not contradict
the detection results of \Cref{sec:exp-main}. AUROC measures how well
the score ranks unsafe inputs above safe ones on a fixed evaluation
distribution, whereas the attack here moves the input distribution
until individual points fall below one chosen threshold, so a detector
can rank well and still be evaded at a fixed operating point.
Designing a defense robust to a sufficiently strong white-box adaptive
attacker remains an open problem and is beyond the scope of this
paper.

\section{Proofs for the Theory Section}
\label{ss:Pf}

We use the notation of \Cref{sec:theory}. Throughout, the representation
$z$, the population distribution $P_0$, the selected population centers,
and $K,k$ are fixed before drawing $\mathcal D_n\sim P_0^n$. An SAE
fitted on an independent pilot sample is allowed by conditioning on that
sample. Merely freezing an SAE after fitting it on $\mathcal D_n$ does
not give this independence. The results below do not analyze that
additional reuse of data or adaptive hyperparameter selection.

All probabilities conditional on $\mathcal D_n$ concern a fresh test
point. After a permutation of the empirical center labels, we transport
the assignment labels by the same permutation; we do not recompute a
different tie-breaking rule. Empty oracle clusters are assigned mean
zero by convention and are excluded on the count event used below.

\subsection{Population centers and oracle sample means}
\label{app:theory-centroids}

We first justify the conditional-mean identity used in the main text.
Let $V_c=\{z:c^\star(z)=c\}$. For any $u\in\R^{d_2}$,
\[
 \E[\|Z-u\|_2^2\mid Z\in V_c]
 =\E[\|Z-\E[Z\mid Z\in V_c]\|_2^2\mid Z\in V_c]
  +\|u-\E[Z\mid Z\in V_c]\|_2^2.
\]
If a center with $\pi_c>0$ differed from this conditional mean, replacing
it would strictly reduce the objective with the old assignments held
fixed. Reassigning to the nearest new centers cannot increase that
objective. This contradicts population optimality and proves
$\mu_c^\star=\E[Z\mid c^\star(Z)=c]$.

\begin{lemma}\label{Lem:cluster-counts}
If $n\pi_{\min}\geq8\log(4K/\delta)$, then with probability at least
$1-\delta/4$,
\begin{equation}
\label{eq:cluster-count-event}
 N_c\geq n\pi_c/2\geq n\pi_{\min}/2\qquad(c\in[K]).
\end{equation}
\end{lemma}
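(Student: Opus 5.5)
Each count $N_c=\sum_{i=1}^n\mathbf 1\{c^\star(Z_i)=c\}$ is a sum of $n$ i.i.d.\ Bernoulli$(\pi_c)$ variables, because the population centers, the nearest-center rule $c^\star$ and its tie-breaking are all fixed before $\mathcal D_n$ is drawn. The plan is therefore a multiplicative Chernoff lower-tail bound for each cluster separately, followed by a union bound over the $K$ clusters.

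\textbf{Step 1 (one cluster).} For fixed $c$, the multiplicative Chernoff bound with deviation $1/2$ gives
\[
P\bigl(N_c< n\pi_c/2\bigr)\leq\exp\bigl(-n\pi_c(1/2)^2/2\bigr)=\exp(-n\pi_c/8).
\]
If one prefers to avoid the multiplicative form, Bernstein's inequality with variance $n\pi_c(1-\pi_c)\leq n\pi_c$ gives a bound of the same order. The only care needed is to match the constant $8$ in the hypothesis, and the multiplicative Chernoff form gives exactly $1/8$.

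\textbf{Step 2 (uniform in $c$).} Since $\pi_c\geq\pi_{\min}$ and $n\pi_{\min}\geq 8\log(4K/\delta)$,
\[
\exp(-n\pi_c/8)\leq \exp(-n\pi_{\min}/8)\leq \delta/(4K).
\]
A union bound over $c\in[K]$ then shows that the failure probability is at most $\delta/4$. On the complementary event, $N_c\geq n\pi_c/2\geq n\pi_{\min}/2$ holds for every $c$; the second inequality is immediate from $\pi_c\geq\pi_{\min}$. This also shows that no oracle cluster is empty on this event, since $n\pi_{\min}/2\geq 4\log(4K/\delta)>0$.

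There is no real obstacle here. The only points to check are that the event is defined through the oracle assignments $c^\star$, not the fitted $\widehat c_n$, so that the independence needed for Chernoff holds, and that the Chernoff constant matches the stated threshold.
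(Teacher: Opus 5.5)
Your proposal is correct and follows the paper's proof essentially verbatim: you apply the multiplicative Chernoff lower-tail bound $\Pr(N_c<n\pi_c/2)\leq\exp(-n\pi_c/8)$ to each binomial count and take a union bound over the $K$ clusters. The paper additionally derives the Chernoff step from Markov's inequality with $t=\log 2$, obtaining the slightly stronger exponent $-n\pi_c(1-\log 2)/2\leq -n\pi_c/8$.
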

\begin{proof}
For each $c$, $N_c\sim\operatorname{Bin}(n,\pi_c)$. The multiplicative
Chernoff bound gives
$\Pr(N_c<n\pi_c/2)\leq\exp(-n\pi_c/8)$.
Summing over $K$ clusters proves the claim. For completeness, this
Chernoff bound follows by applying Markov's inequality to $e^{-tN_c}$,
using $\E e^{-tN_c}\leq\exp(n\pi_c(e^{-t}-1))$, and taking
$t=\log 2$: the exponent is
$-n\pi_c(1-\log 2)/2\leq-n\pi_c/8$.
\end{proof}

\begin{lemma}\label{Lem:masked-centroid}
Under \Cref{Ass:local-safe-geometry} and the count condition of
\Cref{Lem:cluster-counts}, with probability at least $1-\delta/2$,
\begin{align}
\label{eq:masked-centroid-concentration}
 \max_c\|M^\star(\overline\mu_c-\mu_c^\star)\|_2
 &\leq\sqrt{\frac{2}{n\pi_{\min}}}
       \left(v+B\sqrt{2\log\frac{4K}{\delta}}\right),\\
\label{eq:masked-centroid-l1}
 \max_c\|M^\star(\overline\mu_c-\mu_c^\star)\|_1
 &\leq\sqrt{\frac{2s_{\max}}{n\pi_{\min}}}
       \left(v+B\sqrt{2\log\frac{4K}{\delta}}\right).
\end{align}
\end{lemma}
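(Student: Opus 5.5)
Conditionally on the assignment labels $(c^\star(Z_i))_{i\le n}$, the points in $I_c$ are i.i.d.\ draws from $P_0(\cdot\mid c^\star(Z)=c)$. This holds because the labels are a deterministic function of each $Z_i$ and the sample is i.i.d. So I will work conditionally on the labels and on the count event of \Cref{Lem:cluster-counts}, on which $N_c\ge n\pi_c/2\ge n\pi_{\min}/2$ for every $c$.

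For a fixed cluster $c$, write $X_i=M^\star(Z_i-\mu_c^\star)$ for $i\in I_c$. By the conditional-mean identity, these are centered i.i.d.\ random vectors. \Cref{Ass:local-safe-geometry} gives $\E\|X_i\|_2^2\le v^2$ and $\|X_i\|_2\le B$ almost surely. The target quantity is $\|\frac1{N_c}\sum_{i\in I_c}X_i\|_2$.

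I will apply the bounded-differences (McDiarmid) inequality to $f=\|\sum_i X_i\|_2$. Changing a single $X_i$ moves $f$ by at most $2B$. For the mean, Jensen gives $\E f\le(\E\|\sum X_i\|_2^2)^{1/2}=(\sum\E\|X_i\|_2^2)^{1/2}\le v\sqrt{N_c}$, using orthogonality of the centered independent terms. Hence, with probability at least $1-\delta'$,
\[
\Bigl\|\tfrac1{N_c}\sum_{i\in I_c} X_i\Bigr\|_2\le \frac{v}{\sqrt{N_c}}+B\sqrt{\frac{2\log(1/\delta')}{N_c}}.
\]
The crude McDiarmid constant gives $2B\sqrt{\log(1/\delta')/(2N_c)}=B\sqrt{2\log(1/\delta')/N_c}$, which matches the stated form.

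Choosing $\delta'=\delta/(4K)$ and taking a union bound over the $K$ clusters costs total probability $\delta/4$. Adding the $\delta/4$ from the count event keeps the total failure probability within $\delta/2$. Plugging in $N_c\ge n\pi_{\min}/2$ then gives \eqref{eq:masked-centroid-concentration}, with the factor $\sqrt{2/(n\pi_{\min})}$ multiplying $v+B\sqrt{2\log(4K/\delta)}$.

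For \eqref{eq:masked-centroid-l1}, note that every coordinate $j\in S^\star\setminus J_c$ has zero conditional variance. Such a coordinate is almost surely equal to $(\mu_c^\star)_j$ within cluster $c$, so the corresponding entry of $\overline\mu_c-\mu_c^\star$ vanishes. The vector $M^\star(\overline\mu_c-\mu_c^\star)$ is therefore supported on $J_c$, with $|J_c|\le s_{\max}$. Cauchy–Schwarz then gives $\|\cdot\|_1\le\sqrt{s_{\max}}\|\cdot\|_2$, and the $\ell_1$ bound follows.

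The main point requiring care is the conditioning argument. Once we condition on the labels, $N_c$ is fixed and the within-cluster sample is i.i.d., so the fixed-sample-size concentration inequality applies. The probability statement for a given $n$ then follows by integrating over the labels. Everything else is routine.
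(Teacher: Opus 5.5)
Your proposal is correct and follows the paper's proof essentially step for step. You condition on the membership labels so that the within-cluster points are i.i.d.\ from $P_0(\cdot\mid c^\star(Z)=c)$, bound the expectation by $v/\sqrt{N_c}$ via Jensen, apply McDiarmid with bounded differences, union-bound at level $\delta/(4K)$ plus $\delta/4$ for the count event, and pass to $\ell_1$ by Cauchy--Schwarz on the support $J_c$. The only cosmetic differences are that you condition on all labels rather than only the indicators for cluster $c$, and you apply McDiarmid to the unnormalized sum rather than to the mean.
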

\begin{proof}
Fix $c$ and condition on the indicators
$(\mathbf 1\{c^\star(Z_i)=c\})_{i=1}^n$, with $N_c=r\geq1$.
The observations in $I_c$ are then independent with distribution
$P_0(\,\cdot\mid c^\star(Z)=c)$. Index them as $Z_{c,1},\ldots,Z_{c,r}$
and set $Y_j=M^\star(Z_{c,j}-\mu_c^\star)$. They satisfy
\[
 \E Y_j=0,\qquad \E\|Y_j\|_2^2\leq v^2,\qquad
 \|Y_j\|_2\leq B\quad\text{almost surely}.
\]
For $F=\|r^{-1}\sum_jY_j\|_2$, independence and centering give
\[
 \E F\leq(\E F^2)^{1/2}
 =\left(\frac1{r^2}\sum_j\E\|Y_j\|_2^2\right)^{1/2}
 \leq\frac v{\sqrt r}.
\]
Replacing one $Y_j$ changes $F$ by at most $2B/r$. The bounded-differences (Mcdiarmid's)
inequality therefore implies
\[
 \Pr\left(F>\frac v{\sqrt r}+t
       \ \middle|\ \text{membership indicators}\right)
 \leq\exp\left(-\frac{rt^2}{2B^2}\right).
\]
Take $t=B\sqrt{2\log(4K/\delta)/r}$. The bound holds for every
membership pattern with $r\geq1$, so it may be averaged over these
patterns without conditioning on any empirical clustering event.
A union bound over $c$ and \Cref{Lem:cluster-counts} cost at most
$\delta/4+\delta/4$. On their intersection, $r\geq n\pi_{\min}/2$,
which proves \eqref{eq:masked-centroid-concentration}.

If $j\in S^\star\setminus J_c$, then $Z_j=(\mu_c^\star)_j$ almost
surely in cluster $c$. Hence $M^\star(\overline\mu_c-\mu_c^\star)$ is
supported on $J_c$, simultaneously for every nonempty oracle cluster,
with probability one. Cauchy--Schwarz gives
\[
 \|M^\star(\overline\mu_c-\mu_c^\star)\|_1
 \leq\sqrt{|J_c|}\,
       \|M^\star(\overline\mu_c-\mu_c^\star)\|_2,
\]
and proves \eqref{eq:masked-centroid-l1}.
\end{proof}

\subsection{The global frequency mask and the main theorem}
\label{app:theory-mask-recovery}

\begin{lemma}\label{Lem:mask-recovery}
Let $\widehat p_j=n^{-1}\sum_i\mathbf 1\{(Z_i)_j>0\}$ and suppose
$\gamma=p_{(k)}-p_{(k+1)}>0$. If
$n\geq8\gamma^{-2}\log(4d_2/\delta)$, then
$\widehat M=M^\star$ with probability at least $1-\delta/2$.
\end{lemma}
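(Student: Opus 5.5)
The plan is to combine uniform concentration of the empirical frequencies $\widehat p_j$ around $p_j$ with a deterministic argument that a uniform error below $\gamma/2$ cannot change the top-$k$ set. Each $\widehat p_j$ is an average of $n$ i.i.d.\ Bernoulli$(p_j)$ variables $\mathbf 1\{(Z_i)_j>0\}$. For fixed $j$, Hoeffding's inequality gives
\[
 \Pr\bigl(|\widehat p_j-p_j|\geq\gamma/2\bigr)\leq 2\exp\bigl(-n\gamma^2/2\bigr).
\]
A union bound over the $d_2$ coordinates then bounds the failure probability by $2d_2\exp(-n\gamma^2/2)$.

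Next we check the sample size. Under $n\geq8\gamma^{-2}\log(4d_2/\delta)$ we have $n\gamma^2/2\geq4\log(4d_2/\delta)\geq\log(4d_2/\delta)$. Hence the failure probability is at most $2d_2\cdot\delta/(4d_2)=\delta/2$. The constant $8$ has slack, and even the one-sided version with $\gamma/2$ suffices. So with probability at least $1-\delta/2$,
\[
 \max_j|\widehat p_j-p_j|<\gamma/2 .
\]

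The deterministic step works on this event. For $j\in S^\star$ we have $p_j\geq p_{(k)}$, and for $j\notin S^\star$ we have $p_j\leq p_{(k+1)}=p_{(k)}-\gamma$. Therefore, for every $j\in S^\star$ and $j'\notin S^\star$,
\[
 \widehat p_j>p_{(k)}-\gamma/2\geq p_{(k+1)}+\gamma/2>\widehat p_{j'} .
\]
Every coordinate in $S^\star$ thus strictly beats every coordinate outside it. The $k$ largest empirical frequencies are exactly $S^\star$, with no ties across the boundary, so the tie-breaking rule in $\mathrm{top}_k$ is irrelevant and $\widehat M=M^\star$.

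The gap assumption $\gamma>0$ also makes $S^\star$ well defined: ties among the population frequencies can only occur within $S^\star$ or within its complement, never across the boundary. The only point needing care is this strictness at the boundary, which the strict inequality in the concentration event handles. Note that the bound on the number of samples depends on $d_2$ only through $\log d_2$, which is the advertised logarithmic dependence.
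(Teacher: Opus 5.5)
Your proof is correct and follows the paper's argument: Hoeffding's inequality with a union bound over the $d_2$ coordinates, then a deterministic check that every coordinate of $S^\star$ strictly beats every coordinate outside it. The only difference is that the paper controls deviations at level $\gamma/4$, which uses the constant $8$ exactly and leaves an empirical margin of $\gamma/2$. Your deviation level of $\gamma/2$, with a strict inequality, shows that the constant has slack.
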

\begin{proof}
Hoeffding's inequality  and a union bound give
\[
 \Pr\left(\max_{j\in[d_2]}|\widehat p_j-p_j|>\gamma/4\right)
 \leq2d_2e^{-n\gamma^2/8}\leq\delta/2.
\]
On the complementary event, for every $j\in S^\star$ and
$\ell\notin S^\star$,
$\widehat p_j-\widehat p_\ell\geq p_j-p_\ell-\gamma/2\geq\gamma/2>0$.
Thus the selected supports agree. Independence between different
coordinates is unnecessary; independence across observations suffices.
\end{proof}

\begin{proof}[Proof of \Cref{Thm:One}]
Intersect the events in \Cref{Lem:masked-centroid,Lem:mask-recovery}.
Their joint probability is at least $1-\delta$. No independence between
these events or the fitted $K$-means  solution is required.
For every $c$, the definition of $\rho_n$ gives
\begin{align}
\label{eq:empirical-population-centroid}
 \|M^\star(\widehat\mu_c-\mu_c^\star)\|_1
 &\leq\|M^\star(\widehat\mu_c-\overline\mu_c)\|_1
       +\|M^\star(\overline\mu_c-\mu_c^\star)\|_1
 \leq\varepsilon_n.
\end{align}
Consequently, for every $z$ whose empirical and population assignments
agree, the reverse triangle inequality implies
\begin{equation}
\label{eq:pointwise-score-control}
 |\widehat s_n(z)-s^\star(z)|
 \leq\|M^\star(\widehat\mu_{c^\star(z)}-\mu_{c^\star(z)}^\star)\|_1
 \leq\varepsilon_n.
\end{equation}
This statement is pointwise on the whole representation space, not
only on the safe support. The use of local sparsity concerned the
\emph{centroid estimation error}, not the support of the test point.

Write $A_n=\{z:\widehat c_n(z)\neq c^\star(z)\}$.
Outside $A_n$, disagreement between the two strict-threshold detectors
implies $|s^\star(z)-\tau_q|\leq\varepsilon_n$. Conditional on the
training sample, $\varepsilon_n$ is fixed and therefore
\[
 P_0(\widehat f_{n,q}\neq f_q^\star\mid\mathcal D_n)
 \leq P_0(A_n\mid\mathcal D_n)
       +P_0(|s^\star(Z)-\tau_q|\leq\varepsilon_n)
 \leq\eta_{0,n}+L_0\varepsilon_n.
\]
This proves the theorem. The last step remains valid for random
$\rho_n$ because anti-concentration holds for every $t\geq0$.
\end{proof}

\paragraph{Meaning of the rate.}
The theorem is an oracle inequality with explicit clustering remainders.
It applies to any measurable fitted centers and assignments. To conclude
consistency, one must separately show or assume that $\rho_n$ and
$\eta_{0,n}$ vanish. In a sequence of problems, $\gamma$, $\pi_{\min}$,
$v$, $B$, and $L_0$ must also be tracked. In particular, a small frequency
gap can make mask identification expensive even when $\log d_2$ is small.
For balanced clusters and bounded $L_0v,L_0B$, obtaining sampling error
at most $e$ requires $n$ of order $Ks_{\max}/e^2$, up to logarithms,
in addition to mask recovery and clustering control.
Multiplying all scores by a constant changes $L_0$ inversely, so score
normalization alone cannot improve this rate.

\subsection{Two elementary controls on the clustering remainders}
\label{app:theory-clustering}

These controls clarify the terms left explicit by the main theorem;
they do not claim that local sparsity alone guarantees $K$-means recovery.
Let
\[
 \eta_{\mathrm{tr},n}
 =\frac1n\sum_i\mathbf 1\{\widehat c_n(Z_i)\neq c^\star(Z_i)\}.
\]

\begin{lemma}\label{Lem:clustering-perturbation}
Suppose each fitted center is the mean of the points assigned to it,
$\max_i\|M^\star Z_i\|_1\leq H$, and the count event
\eqref{eq:cluster-count-event} holds. If
$\eta_{\mathrm{tr},n}\leq\pi_{\min}/4$, then
\begin{equation}\label{eq:rho-from-labels}
 \rho_n\leq\frac{8H\eta_{\mathrm{tr},n}}{\pi_{\min}}.
\end{equation}
Moreover, if $a_n=\max_c\|\widehat\mu_c-\mu_c^\star\|_2$, then for
$y\in\{0,1\}$,
\begin{equation}\label{eq:assignment-margin}
 P_y(\widehat c_n(Z)\neq c^\star(Z)\mid\mathcal D_n)
 \leq P_y(D_2(Z)-D_1(Z)\leq2a_n),
\end{equation}
where $D_1,D_2$ are the smallest and second-smallest population-center
distances. For $K=1$, the assignment error is zero.
\end{lemma}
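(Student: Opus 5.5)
The two claims are independent, and I plan to prove each by elementary counting or the triangle inequality; no probabilistic input is needed beyond the count event \eqref{eq:cluster-count-event}.

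\textbf{Bound \eqref{eq:rho-from-labels}.} Fix $c$ and write $\widehat I_c=\{i:\widehat c_n(Z_i)=c\}$, so $\widehat\mu_c$ is the mean over $\widehat I_c$ and $\overline\mu_c$ the mean over $I_c$.

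First I bound the symmetric difference. Every index in $\widehat I_c\triangle I_c$ is a mislabeled training point. A given index lies in exactly one of $\widehat I_c\setminus I_c$ and $I_c\setminus\widehat I_c$, so $m_c:=|\widehat I_c\triangle I_c|\leq n\eta_{\mathrm{tr},n}$.

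Next I bound $|\widehat I_c|$ from below. On the count event, and using $\eta_{\mathrm{tr},n}\leq\pi_{\min}/4$,
\[
|\widehat I_c|\geq N_c-m_c\geq n\pi_{\min}/2-n\pi_{\min}/4=n\pi_{\min}/4 .
\]
In particular $\widehat I_c$ is nonempty and the fitted mean is well defined.

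For the mean difference, write $A=\widehat I_c$ and $B=I_c$. Then
\[
\widehat\mu_c-\overline\mu_c=\frac1{|A|}\Bigl(\sum_{A\setminus B}Z_i-\sum_{B\setminus A}Z_i\Bigr)+\Bigl(\frac1{|A|}-\frac1{|B|}\Bigr)\sum_{B}Z_i .
\]
I apply $M^\star$ and take $\ell_1$ norms, using $\|M^\star Z_i\|_1\leq H$. The first term is at most $Hm_c/|A|$. The second is at most $H\,\bigl||B|-|A|\bigr|/|A|\leq Hm_c/|A|$. Hence
\[
\|M^\star(\widehat\mu_c-\overline\mu_c)\|_1\leq\frac{2Hm_c}{|A|}\leq\frac{2Hn\eta_{\mathrm{tr},n}}{n\pi_{\min}/4}=\frac{8H\eta_{\mathrm{tr},n}}{\pi_{\min}} .
\]
Taking the maximum over $c$ gives \eqref{eq:rho-from-labels}. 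Labels are understood after the fixed matching permutation.

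\textbf{Bound \eqref{eq:assignment-margin}.} This is a pointwise statement about $z$. Suppose $\widehat c_n(z)=c\neq c_1:=c^\star(z)$. Then
\[
\|z-\widehat\mu_c\|_2\leq\|z-\widehat\mu_{c_1}\|_2 .
\]
By the triangle inequality, each fitted center is within $a_n$ of its population counterpart. Therefore
\[
\|z-\mu^\star_c\|_2-a_n\leq\|z-\mu^\star_{c_1}\|_2+a_n=D_1(z)+a_n .
\]
Since $c\neq c_1$, the left distance is at least $D_2(z)$. (If there is a tie at the minimum, $D_2=D_1$ and the inequality still holds.) So $D_2(z)-D_1(z)\leq2a_n$, and taking $P_y$-probability conditional on $\mathcal D_n$ gives \eqref{eq:assignment-margin}. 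For $K=1$ both assignments are identically $1$, so the error is zero.

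\textbf{Main obstacle.} The only delicate step is the bookkeeping in the mean-difference bound. One must use $|\widehat I_c|$, not $N_c$, in the denominator, and the constant $8$ depends on controlling $|\widehat I_c|\geq n\pi_{\min}/4$; this is exactly where the hypothesis $\eta_{\mathrm{tr},n}\leq\pi_{\min}/4$ is used.
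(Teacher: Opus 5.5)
Your proof is correct and follows the paper's argument essentially step for step. It uses the same count $|\widehat I_c\triangle I_c|\le n\eta_{\mathrm{tr},n}$, the same lower bound $|\widehat I_c|\ge n\pi_{\min}/4$, the same two-term split of the difference of means with $|\widehat I_c|$ in the denominator, and the same $a_n$-perturbation of center distances for the margin bound; the only difference is that you prove the margin claim directly, whereas the paper argues the contrapositive that a gap exceeding $2a_n$ preserves the nearest center.
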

\begin{proof}
Put $\widehat I_c=\{i:\widehat c_n(Z_i)=c\}$.
For each $c$, $|I_c\mathbin{\triangle}\widehat I_c|
\leq n\eta_{\mathrm{tr},n}$, and hence
$|\widehat I_c|\geq n\pi_{\min}/4$.
For two index sets $A,B$ of sizes $a,b>0$ and vectors $u_i$ of norm at
most $H$, subtraction of their means gives
\[
 \left\|\frac1b\sum_{i\in B}u_i-\frac1a\sum_{i\in A}u_i\right\|_1
 \leq\frac{H|A\mathbin{\triangle}B|}{b}
       +\frac{H|a-b|}{b}
 \leq\frac{2H|A\mathbin{\triangle}B|}{b}.
\]
Apply this with $u_i=M^\star Z_i$ to obtain \eqref{eq:rho-from-labels}.
For \eqref{eq:assignment-margin}, each center distance changes by at
most $a_n$. A gap greater than $2a_n$ therefore preserves the unique
nearest center. Taking probabilities proves the claim.
\end{proof}

The quantity $a_n$ is a \emph{full-space} center error. The masked error
in \Cref{Thm:One} does not control it. Thus a dimension dependence in
full-space clustering cannot be removed merely by masking the score.

\subsection{Masking: estimation complexity and population separation}
\label{app:theory-masking}

\begin{corollary}\label{Cor:fixed-mask-complexity}
Let $M$ be a deterministic binary coordinate mask, fixed before drawing
$\mathcal D_n$, and keep the same full-space centers and assignments.
Define
\begin{align*}
 J_c(M)&=\{j:M_{jj}=1,\ \operatorname{Var}(Z_j\mid c^\star(Z)=c)>0\},
 \qquad s(M)=\max_c|J_c(M)|,\\
 v(M)^2&=\max_c\E[\|M(Z-\mu_c^\star)\|_2^2\mid c^\star(Z)=c],\\
 B(M)&=\max_c\operatorname*{ess\,sup}_{Z\mid c^\star(Z)=c}
                         \|M(Z-\mu_c^\star)\|_2,
\end{align*}
where $B(M)<\infty$, and put
$\rho_n(M)=\max_c\|M(\widehat\mu_c-\overline\mu_c)\|_1$.
Under the count condition, with probability at least $1-\delta/2$,
\begin{equation}\label{eq:fixed-mask-envelope}
 \max_c\|M(\widehat\mu_c-\mu_c^\star)\|_1
 \leq\varepsilon_n(M):=\rho_n(M)
 +\sqrt{\frac{2s(M)}{n\pi_{\min}}}
       \left(v(M)+B(M)\sqrt{2\log\frac{4K}{\delta}}\right).
\end{equation}
Let $s_M^\star,\widehat s_{n,M}$ be the corresponding scores and let
$\tau_{M,q}$ be the population safe quantile. If
$P_0(|s_M^\star(Z)-\tau_{M,q}|\leq t)\leq L_Mt$, their detector
disagreement is at most $\eta_{0,n}+L_M\varepsilon_n(M)$ on this event.
\end{corollary}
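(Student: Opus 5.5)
The plan is to rerun the proof of \Cref{Thm:One} verbatim with $M^\star$ replaced by the fixed mask $M$, and to check that nothing in that argument used the specific choice $M^\star$ beyond two facts. The first is that the mask is deterministic relative to $\mathcal D_n$. The second is that the masked centered vectors satisfy second-moment and almost-sure bounds. Since $M$ is fixed before sampling, \Cref{Lem:mask-recovery} is not needed at all. This is why the probability is $1-\delta/2$ rather than $1-\delta$: only the count event of \Cref{Lem:cluster-counts} (cost $\delta/4$) and the concentration union bound (cost $\delta/4$) are spent.

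\emph{Step 1 (masked oracle centroids).} Repeat the proof of \Cref{Lem:masked-centroid} with $Y_j=M(Z_{c,j}-\mu_c^\star)$. Conditional on the membership indicators with $N_c=r\geq1$, these vectors are i.i.d.\ and centered, because $\mu_c^\star=\E[Z\mid c^\star(Z)=c]$ by the identity of \Cref{app:theory-centroids}. They also satisfy $\E\|Y_j\|_2^2\leq v(M)^2$ and $\|Y_j\|_2\leq B(M)$ almost surely, by the definitions of $v(M)$ and $B(M)$ as maxima over $c$. The bound $\E F\leq v(M)/\sqrt r$ together with McDiarmid, applied with $t=B(M)\sqrt{2\log(4K/\delta)/r}$, a union bound over $c$, and $r\geq n\pi_{\min}/2$ then yields the $\ell_2$ bound. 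For the $\ell_1$ conversion, take any coordinate $j$ with $M_{jj}=1$ and $j\notin J_c(M)$. It has zero conditional variance, so $Z_j=(\mu_c^\star)_j$ almost surely in cluster $c$. Hence $M(\overline\mu_c-\mu_c^\star)$ is supported on $J_c(M)$, and Cauchy--Schwarz contributes the factor $\sqrt{s(M)}$.

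\emph{Step 2 (envelope).} Apply the triangle inequality through $\overline\mu_c$ as in \eqref{eq:empirical-population-centroid}:
\[
\|M(\widehat\mu_c-\mu_c^\star)\|_1\leq\rho_n(M)+\|M(\overline\mu_c-\mu_c^\star)\|_1 .
\]
This gives \eqref{eq:fixed-mask-envelope}.

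\emph{Step 3 (detector disagreement).} The scores $s_M^\star(z)=\|M(z-\mu^\star_{c^\star(z)})\|_1$ and $\widehat s_{n,M}(z)=\|M(z-\widehat\mu_{\widehat c_n(z)})\|_1$ share the same mask, with no $\widehat M$ to estimate. So for every $z\notin A_n$, the reverse triangle inequality gives $|\widehat s_{n,M}(z)-s^\star_M(z)|\leq\varepsilon_n(M)$. A disagreement of the two threshold rules at $\tau_{M,q}$ outside $A_n$ therefore forces $|s_M^\star(Z)-\tau_{M,q}|\leq\varepsilon_n(M)$. Given $\mathcal D_n$, the quantity $\varepsilon_n(M)$ is fixed, so the anticoncentration hypothesis with $t=\varepsilon_n(M)$ bounds this probability by $L_M\varepsilon_n(M)$. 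Adding $P_0(A_n\mid\mathcal D_n)=\eta_{0,n}$ completes the bound. The assignments are the same full-space ones, so $\eta_{0,n}$ is unchanged.

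The main point needing care is the first step's measurability and independence. $M$ must genuinely not depend on $\mathcal D_n$, since a data-chosen mask would break the conditional i.i.d.\ structure used for McDiarmid. Also, $B(M)<\infty$ must be an essential supremum uniform over clusters, so the bounded-differences constant is valid for every membership pattern before averaging. Everything else is a literal transcription of the proofs of \Cref{Lem:masked-centroid} and \Cref{Thm:One}.
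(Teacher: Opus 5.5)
Your proposal is correct and follows the paper's proof. You rerun the argument of \Cref{Lem:masked-centroid} with $M$ in place of $M^\star$, pass to $\ell_1$ via the support of the masked error on $J_c(M)$, add $\rho_n(M)$ by the triangle inequality, and close with the pointwise score comparison \eqref{eq:pointwise-score-control} and the margin condition with constant $L_M$. Your observation that fixing $M$ in advance removes mask recovery and the frequency gap, so only the $\delta/4+\delta/4$ budget is spent, is exactly the paper's remark.
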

\begin{proof}
Apply the proof of \Cref{Lem:masked-centroid} to $M$ and use
$\|u\|_1\leq\sqrt{s(M)}\|u\|_2$ on each $J_c(M)$.
The triangle inequality adds $\rho_n(M)$. The decision bound follows
from \eqref{eq:pointwise-score-control} and the stated margin condition
for this particular score. No frequency gap or mask recovery is needed
because the mask is fixed in advance.
\end{proof}

If $M_1$ retains a subset of the coordinates retained by $M_2$, then
$s(M_1)\leq s(M_2)$, $v(M_1)\leq v(M_2)$, $B(M_1)\leq B(M_2)$, and
$\rho_n(M_1)\leq\rho_n(M_2)$. Thus the envelope in
\eqref{eq:fixed-mask-envelope} is monotone under removing coordinates,
for fixed centers and matching. This is not a classification-dominance
statement: the population score, threshold, and margin constant change
with the mask, and useful information may be discarded. The unmasked
case is $M=I_{d_2}$. The result holds for each fixed mask, not
simultaneously for all data-selected masks without an additional argument.

\paragraph{What effective dimension does and does not imply.}
For $\Sigma_{c,M}=M\operatorname{Cov}(Z\mid c^\star(Z)=c)M$,
\[
 v(M)^2=\max_c\operatorname{tr}(\Sigma_{c,M}).
\]
For a nonzero positive-semidefinite matrix $\Sigma$, the PCA definition
of $d_{90}$ implies
$\operatorname{tr}(\Sigma)/\|\Sigma\|_{\mathrm{op}}
\leq d_{90}(\Sigma)/0.9$.
Indeed, the sum of the largest $d_{90}$ eigenvalues is at least $90\%$
of the trace and at most $d_{90}\|\Sigma\|_{\mathrm{op}}$.
But $d_{90}$ does not bound the absolute trace without an eigenvalue
scale, and a rank-one covariance can involve every coordinate.
Furthermore, Figure~\ref{fig:effective-dim} reports empirical quantities,
not a population concentration bound. We therefore use it as evidence
about relative variance concentration, not as a verification of every
hypothesis in \Cref{Thm:One}.

The next example separates population-score quality from finite-sample
estimation. It is a mathematical illustration, not a fitted model of
the reported activations.

\begin{proposition}[Weak sparse noise can obscure an unmasked score]
\label[proposition]{Prop:mask-nuisance}
There exist safe and unsafe distributions on $\R_+^{D+1}$ with
$1+O(\sqrt D)$ nonzero coordinates per input with high probability such
that the following holds for $K=k=1$. The global population frequency
mask retains a single coordinate and its centroid-distance score has
AUROC one. The unmasked distance to the \emph{known population safe
centroid} has AUROC tending to $1/2$ as $D\to\infty$.
\end{proposition}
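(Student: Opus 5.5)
I will prove the proposition with an explicit construction. The signal sits on one always-active coordinate, and a large number $D$ of weak, rarely active nuisance coordinates carry safe/unsafe-symmetric noise whose fluctuations swamp the signal in the unmasked distance.

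\textbf{The construction.} Take coordinates $Z=(Z_0,Z_1,\ldots,Z_D)\in\R_+^{D+1}$. Under $P_0$ set $Z_0=1$. Under $P_1$ set $Z_0=1+\Delta$ for a fixed $\Delta>0$; any fixed shift works, and I will take $\Delta=1$. Under both classes let the nuisance coordinates be i.i.d.\ with $Z_j=\xi_j\,\mathrm{Bernoulli}(p_D)$, independent of $Z_0$. Here $p_D=D^{-1/2}$, and $\xi_j$ is, say, uniform on $[0,1]$ so that magnitudes vary. Then the number of active coordinates is $1+\mathrm{Bin}(D,p_D)=1+O(\sqrt D)$ with high probability by Chernoff.

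\textbf{The masked score.} With $K=k=1$ the population frequencies are $p_0=1>p_D=p_j$ for $j\geq1$. The frequency mask is therefore $S^\star=\{0\}$, and the gap $\gamma=1-p_D>0$ is unique. The population centroid is $\mu^\star=\E_{P_0}Z=(1,\E\xi\,p_D,\ldots)$. The masked score is $s^\star(z)=|z_0-1|$, which equals $0$ on safe inputs and $\Delta$ on unsafe inputs, so its AUROC is exactly one.

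\textbf{The unmasked score.} Consider $T(z)=\|z-\mu^\star\|_1$, matching the paper's $\ell_1$ scoring; the $\ell_2$ case is analogous. Write $T=|z_0-1|+R$ with $R=\sum_{j\ge1}|Z_j-\mu^\star_j|$. Under both classes $R$ has the same law, a sum of $D$ i.i.d.\ bounded terms with variance of order $p_D$. Hence $\operatorname{Var}R\asymp Dp_D=\sqrt D\to\infty$, while the class shift in $T$ is only $\Delta$. Let $R,R'$ be independent copies. The AUROC is $\Pr(\Delta+R'>R)+\tfrac12\Pr(\Delta+R'=R)$. By the Lindeberg CLT (or Berry--Esseen), $(R-\E R)/\sqrt{\operatorname{Var}R}$ converges to a standard normal. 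Therefore $\Pr(R-R'<\Delta)\to\Phi(0)=1/2$, because $\Delta/\sqrt{\operatorname{Var}R}\to0$ and the limit is continuous. This gives AUROC $\to1/2$.

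\textbf{Main obstacle.} The only delicate step is the CLT for a triangular array with vanishing activation probability $p_D$. I would handle it with a Berry--Esseen bound: the third absolute moments are $O(p_D)$ per term, so the error is $O\bigl(Dp_D/(Dp_D)^{3/2}\bigr)=O(D^{-1/4})\to0$. That bound suffices to pass the shift $\Delta$ through the limit uniformly.
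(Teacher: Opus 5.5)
Your construction and argument are essentially the paper's, and the proof is correct. Both use one always-active signal coordinate plus $D$ i.i.d.\ $\mathrm{Bernoulli}(D^{-1/2})$ nuisance coordinates with identical law in both classes, take the $\ell_1$ distance to the known safe centroid, and apply a CLT to show that nuisance fluctuations of order $(Dp_D)^{1/2}\to\infty$ swamp a bounded class shift. The main difference is that the paper multiplies the nuisance by a vanishing amplitude $b=D^{-1/8}$, which shows that even individually negligible activations suffice because $b\,(Dp)^{1/2}\asymp D^{1/8}\to\infty$; your $O(1)$ amplitudes prove the stated claim but illustrate the ``weak'' in the title less sharply. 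The paper also uses continuous signals $U_0\sim\mathrm{Unif}[1,2]$ and $U_1\sim\mathrm{Unif}[3,4]$, which avoid ties and a degenerate safe score.
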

\begin{proof}
Take $D\geq9$, $p=D^{-1/2}$ and $b=D^{-1/8}$. Let
$U_0\sim\mathrm{Unif}[1,2]$, $U_1\sim\mathrm{Unif}[3,4]$, and let
$B_{y,j}\sim\mathrm{Bern}(p)$ be independent of these variables and
of one another. Define, for $y\in\{0,1\}$,
\[
 Z_y=(U_y,bB_{y,1},\ldots,bB_{y,D}).
\]
The expected support size is $1+Dp=1+\sqrt D$; a binomial Chernoff
bound gives $\|Z_y\|_0\leq1+2\sqrt D$ with probability at least
$1-e^{-\sqrt D/3}$. Each nuisance activation has amplitude $b\to0$.
The first safe coordinate has activation frequency one, whereas every
other coordinate has frequency $p$. The global top-one mask therefore
selects the first coordinate, with gap $1-p$.

The population safe center is $\mu=(3/2,bp,\ldots,bp)$.
The masked scores are $A_0=|U_0-3/2|\in[0,1/2]$ and
$A_1=|U_1-3/2|\in[3/2,5/2]$, so the masked ranking is perfect.
For $T_y=\sum_jB_{y,j}\sim\operatorname{Bin}(D,p)$, the unmasked
scores are
\[
 S_y=A_y+b\{Dp+(1-2p)T_y\}.
\]
Let $r_D=\sqrt{2Dp(1-p)}$. The standardized difference
$(T_1-T_0)/r_D$ converges to $N(0,1)$. This can be seen directly from
its characteristic function:
\[
 \E\exp\!\left(it\frac{T_1-T_0}{r_D}\right)
 =\left[1+2p(1-p)\{\cos(t/r_D)-1\}\right]^D
 \longrightarrow e^{-t^2/2},
\]
using $Dp(1-p)\to\infty$ and the expansion of cosine at zero.
Since $b(1-2p)r_D\asymp D^{1/8}\to\infty$ and $A_1-A_0$ is bounded,
\[
 \frac{S_1-S_0}{b(1-2p)r_D}\ \Longrightarrow\ N(0,1).
\]
The score difference has no atom at zero because $U_1$ is continuous
and independent of the other variables. Hence
$\operatorname{AUROC}(S)=\Pr(S_1>S_0)\to1/2$.
The class label has not changed the nuisance distribution; its increasing
fluctuation alone obscures the bounded signal in the unmasked score.
\end{proof}

\subsection{Independent safe-threshold calibration}
\label{app:theory-calibration}

Appendix~\ref{app:datasets} reports a held-out ROC evaluation and states
that no separate threshold-calibration step was performed. The following
results concern a possible deployment procedure. They do not attribute
an additional split to the experiments.

\begin{proposition}\label[proposition]{Prop:safe-quantile}
Condition on $\mathcal D_n$ and let
$Z'_1,\ldots,Z'_m\sim P_0^m$ be independent safe calibration points.
Let $\widehat F_m$ be the empirical distribution function of
$\widehat s_n(Z'_i)$, and define
$\widehat\tau_q=\inf\{t:\widehat F_m(t)\geq1-q\}$.
For $u_m=\sqrt{\log(2/\delta_q)/(2m)}$, with conditional probability
at least $1-\delta_q$,
\begin{equation}\label{eq:safe-quantile-bound}
 P_0(\widehat s_n(Z)>\widehat\tau_q
       \mid\mathcal D_n,Z'_1,\ldots,Z'_m)\leq q+u_m.
\end{equation}
The assertion allows atoms and uses the strict inequality in the detector.
\end{proposition}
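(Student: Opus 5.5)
Conditional on $\mathcal D_n$, the score map $\widehat s_n$ is a fixed measurable function. The values $W_i=\widehat s_n(Z'_i)$ are therefore i.i.d.\ with distribution function $G(t)=P_0(\widehat s_n(Z)\le t\mid\mathcal D_n)$. The quantity to bound is $1-G(\widehat\tau_q)$, the true exceedance probability at the empirical threshold. The plan is to use the Dvoretzky--Kiefer--Wolfowitz inequality with Massart's constant, conditionally on $\mathcal D_n$: with conditional probability at least $1-2e^{-2mu^2}$,
\[
\sup_t|\widehat F_m(t)-G(t)|\le u .
\]
Setting $u=u_m$ makes this failure probability exactly $\delta_q$. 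Massart's version holds for arbitrary distribution functions, including discontinuous ones, so atoms in $G$ cause no difficulty.

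On this event, the key step is to show $G(\widehat\tau_q)\ge 1-q-u_m$.
\begin{itemize}
\item The empirical distribution function $\widehat F_m$ is right-continuous and nondecreasing, so the infimum defining $\widehat\tau_q$ is attained: $\widehat F_m(\widehat\tau_q)\ge 1-q$. Since $1-q\in(0,1)$, the set is nonempty and the infimum is bounded below by $\min_i W_i$, so $\widehat\tau_q$ is finite.
\item The uniform DKW bound then gives $G(\widehat\tau_q)\ge\widehat F_m(\widehat\tau_q)-u_m\ge 1-q-u_m$.
\item Because the detector flags only $\widehat s_n(Z)>\widehat\tau_q$, the exceedance probability equals $1-G(\widehat\tau_q)\le q+u_m$.
\end{itemize}
This is \eqref{eq:safe-quantile-bound}. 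The role of the strict inequality is that $1-G(t)$ is exactly $P(\widehat s_n>t)$, so no left limits $G(t^-)$ appear even when $G$ has an atom at $\widehat\tau_q$.

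One subtlety is that $\widehat\tau_q$ is random, depending on $Z'_{1:m}$. The conditional probability in \eqref{eq:safe-quantile-bound} is over a fresh $Z$, independent of both $\mathcal D_n$ and the calibration points. It therefore equals $1-G(t)$ evaluated at $t=\widehat\tau_q$. Measurability is immediate, since $\widehat\tau_q$ is an order statistic of the $W_i$.

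The main obstacle is keeping the argument valid for arbitrary (atomic) score distributions. If a pointwise Hoeffding bound at a population quantile were used instead, atoms would break the argument. Using the uniform DKW--Massart bound, which is valid for all distribution functions, together with right-continuity of $\widehat F_m$, avoids this.
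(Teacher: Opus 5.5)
Your proposal is correct and follows the paper's proof essentially step for step: a conditional DKW bound with Massart's constant (valid for distribution functions with atoms), with $u_m$ chosen so the failure probability is $\delta_q$, then the attained-infimum fact $\widehat F_m(\widehat\tau_q)\ge 1-q$, then complements. The extra details you supply (right-continuity, finiteness and measurability of $\widehat\tau_q$, and why the strict inequality avoids left limits at an atom) are points the paper leaves implicit.
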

\begin{proof}
Let $\widehat F(t)=P_0(\widehat s_n(Z)\leq t\mid\mathcal D_n)$.
The Dvoretzky--Kiefer--Wolfowitz inequality gives
$\sup_t|\widehat F_m(t)-\widehat F(t)|\leq u_m$ with the stated
probability. Since $\widehat F_m(\widehat\tau_q)\geq1-q$,
$\widehat F(\widehat\tau_q)\geq1-q-u_m$.
Taking complements proves the result.
\end{proof}

False-positive control alone does not bound the displacement of the
threshold or the unsafe false-negative rate. The next result states the
additional quantile regularity needed for such a comparison.

\begin{proposition}\label[proposition]{Prop:estimated-quantile-disagreement}
Let $F^\star(t)=P_0(s^\star(Z)\leq t)$ and suppose, for some
$\lambda_0,r_0>0$ and all $0\leq u\leq r_0$,
\begin{equation}\label{eq:quantile-growth}
 F^\star(\tau_q+u)\geq1-q+\lambda_0u,
 \qquad F^\star(\tau_q-u)\leq1-q-\lambda_0u.
\end{equation}
On the training event of \Cref{Thm:One}, put
$a_m=2(\eta_{0,n}+u_m)/\lambda_0$ and assume $a_m\leq r_0$.
With conditional calibration probability at least $1-\delta_q$,
\begin{align}
\label{eq:quantile-displacement}
 |\widehat\tau_q-\tau_q|&\leq\varepsilon_n+a_m,\\
\label{eq:calibrated-disagreement}
 P_0(\mathbf 1\{\widehat s_n(Z)>\widehat\tau_q\}
         \neq f_q^\star(Z)\mid\mathcal D_n,Z'_1,\ldots,Z'_m)
 &\leq\eta_{0,n}+L_0(2\varepsilon_n+a_m).
\end{align}
\end{proposition}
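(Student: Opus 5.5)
We prove \Cref{Prop:estimated-quantile-disagreement} by working on the intersection of the training event of \Cref{Thm:One} and the Dvoretzky--Kiefer--Wolfowitz event of \Cref{Prop:safe-quantile}, which has conditional calibration probability at least $1-\delta_q$. On the training event, \eqref{eq:pointwise-score-control} gives $|\widehat s_n(z)-s^\star(z)|\leq\varepsilon_n$ for every $z\notin A_n$, and $P_0(A_n\mid\mathcal D_n)=\eta_{0,n}$. The first step transfers this pointwise control to the distribution functions. Let $\widehat F(t)=P_0(\widehat s_n(Z)\leq t\mid\mathcal D_n)$. Splitting on $A_n$ and its complement gives
\[
 F^\star(t-\varepsilon_n)-\eta_{0,n}\;\leq\;\widehat F(t)\;\leq\;F^\star(t+\varepsilon_n)+\eta_{0,n}
\]
for every $t$. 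On the DKW event, the empirical distribution function $\widehat F_m$ lies within $u_m$ of $\widehat F$ uniformly, so $F^\star(t-\varepsilon_n)-e\leq\widehat F_m(t)\leq F^\star(t+\varepsilon_n)+e$, where $e=\eta_{0,n}+u_m$.

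The second step establishes the displacement bound \eqref{eq:quantile-displacement} from the quantile growth condition \eqref{eq:quantile-growth}. The slack $2e/\lambda_0=a_m$ absorbs the defect $e$ together with the boundary and atom issues of the generalized inverse.

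\emph{Upper bound.} Take $t=\tau_q+\varepsilon_n+a_m$. Since $a_m\leq r_0$, we get $\widehat F_m(t)\geq F^\star(\tau_q+a_m)-e\geq 1-q+\lambda_0a_m-e=1-q+e\geq1-q$, so $\widehat\tau_q\leq t$.

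\emph{Lower bound.} Take any $t<\tau_q-\varepsilon_n-a_m$, and write $t+\varepsilon_n=\tau_q-u$ with $u>a_m$. If $u\leq r_0$, then $\widehat F_m(t)\leq 1-q-\lambda_0u+e<1-q-e\leq1-q$. If $u>r_0$, monotonicity gives $F^\star(\tau_q-u)\leq F^\star(\tau_q-r_0)\leq 1-q-\lambda_0r_0\leq1-q-\lambda_0 a_m$, and the same conclusion follows. Hence no such $t$ qualifies in the infimum defining $\widehat\tau_q$, and $\widehat\tau_q\geq\tau_q-\varepsilon_n-a_m$. Together these give $|\widehat\tau_q-\tau_q|\leq\varepsilon_n+a_m$.

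The third step bounds the decision disagreement \eqref{eq:calibrated-disagreement}, repeating the final step of the proof of \Cref{Thm:One}. Outside $A_n$, if $\mathbf 1\{\widehat s_n(z)>\widehat\tau_q\}\neq\mathbf 1\{s^\star(z)>\tau_q\}$, then $s^\star(z)$ and $\widehat s_n(z)$ lie on opposite sides of their respective thresholds. The scores differ by at most $\varepsilon_n$ and the thresholds by at most $\varepsilon_n+a_m$, so $|s^\star(z)-\tau_q|\leq 2\varepsilon_n+a_m$. Conditioned on $\mathcal D_n$ and the calibration points, this radius is fixed, so the anti-concentration condition \eqref{eq:safe-anticoncentration} bounds the probability of this band by $L_0(2\varepsilon_n+a_m)$. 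Adding $P_0(A_n\mid\mathcal D_n)=\eta_{0,n}$ gives the claim.

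I expect the main obstacle to be the second step, namely keeping the generalized-inverse quantile bookkeeping correct in the presence of atoms and strict versus non-strict inequalities, and treating the regime $u>r_0$, where the growth condition does not apply directly and one must fall back on monotonicity. The first and third steps are direct adaptations of arguments already in the proof of \Cref{Thm:One}.
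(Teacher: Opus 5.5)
Your proposal is correct and follows essentially the same route as the paper: the CDF sandwich from the pointwise score comparison, the DKW event, the growth condition applied at $u=a_m$ to locate $\widehat\tau_q$, and the threshold-band argument with radius $2\varepsilon_n+a_m$. The one difference is cosmetic. The paper evaluates $\widehat F_m$ only at $\tau_q-\varepsilon_n-a_m$ and relies on monotonicity of $\widehat F_m$ for all smaller $t$, so your $u>r_0$ case split is unnecessary. Its strict inequality, like the paper's, uses $e=\eta_{0,n}+u_m>0$, which holds because $u_m>0$; the paper notes this explicitly, so you should state it too.
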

\begin{proof}
The deterministic score comparison \eqref{eq:pointwise-score-control}
shows, for every $t$,
\[
 F^\star(t-\varepsilon_n)-\eta_{0,n}
 \leq\widehat F(t)
 \leq F^\star(t+\varepsilon_n)+\eta_{0,n}.
\]
On the DKW event, set $w=\eta_{0,n}+u_m>0$. Applying
\eqref{eq:quantile-growth} at $u=a_m$ yields
\[
 \widehat F_m(\tau_q+\varepsilon_n+a_m)\geq1-q+w,
 \qquad
 \widehat F_m(\tau_q-\varepsilon_n-a_m)\leq1-q-w.
\]
The empirical quantile lies between these two arguments, proving
\eqref{eq:quantile-displacement}. Outside changed assignments,
\[
 |(\widehat s_n(z)-\widehat\tau_q)-(s^\star(z)-\tau_q)|
 \leq\varepsilon_n+|\widehat\tau_q-\tau_q|
 \leq2\varepsilon_n+a_m.
\]
The same threshold-band argument as in \Cref{Thm:One} proves
\eqref{eq:calibrated-disagreement}.
\end{proof}

\subsection{Training and held-out cluster occupancy}
\label{app:theory-occupancy}

\begin{lemma}\label{Lem:multinomial-l1}
For $r$ independent observations in $[K]$ with common law $p$ and
empirical law $\widehat p_r$, with probability at least $1-\delta$,
\begin{equation}\label{eq:multinomial-l1}
 \|\widehat p_r-p\|_1\leq b_r(\delta):=
 \sqrt{K/r}+\sqrt{2\log(1/\delta)/r}.
\end{equation}
\end{lemma}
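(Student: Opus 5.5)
The plan is the standard two-step argument: bound the expectation of $\|\widehat p_r-p\|_1$, then show that the norm concentrates around that expectation through McDiarmid's inequality.

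\emph{Expectation.} Write $\widehat p_r(c)=r^{-1}\sum_{i=1}^r\mathbf 1\{X_i=c\}$, where $X_1,\ldots,X_r$ are the observations. Each coordinate has $\E\widehat p_r(c)=p(c)$ and $\operatorname{Var}(\widehat p_r(c))=p(c)(1-p(c))/r\leq p(c)/r$. Jensen's inequality, applied coordinatewise, gives $\E|\widehat p_r(c)-p(c)|\leq\sqrt{p(c)/r}$. Summing over $c$ and using Cauchy--Schwarz,
\[
 \E\|\widehat p_r-p\|_1\leq\frac{1}{\sqrt r}\sum_{c\in[K]}\sqrt{p(c)}\leq\frac{1}{\sqrt r}\sqrt{K\sum_c p(c)}=\sqrt{K/r}.
\]

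\emph{Concentration.} View $F(X_1,\ldots,X_r)=\|\widehat p_r-p\|_1$ as a function of the independent observations. Replacing one observation $X_i$ by another value moves mass $1/r$ from one coordinate of $\widehat p_r$ to another. By the triangle inequality, this changes $F$ by at most $2/r$. McDiarmid's bounded-differences inequality, the same tool used in \Cref{Lem:masked-centroid}, then gives
\[
 \Pr\bigl(F>\E F+t\bigr)\leq\exp\!\left(-\frac{2t^2}{r(2/r)^2}\right)=\exp\!\left(-\frac{rt^2}{2}\right).
\]
Setting the right-hand side equal to $\delta$ gives $t=\sqrt{2\log(1/\delta)/r}$. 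Combined with the expectation bound, this yields \eqref{eq:multinomial-l1} with probability at least $1-\delta$.

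There is no real obstacle. The only point to check is that the bounded-difference constant $2/r$ produces exactly the stated factor $\sqrt{2\log(1/\delta)/r}$, and the calculation above confirms that it does. No tightness in $K$ beyond Cauchy--Schwarz is required.
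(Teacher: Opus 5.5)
Your proposal is correct and follows essentially the same route as the paper. Both arguments bound $\E\|\widehat p_r-p\|_1$ by $\sqrt{K/r}$ using the multinomial variances and Cauchy--Schwarz, then apply McDiarmid's inequality with bounded difference $2/r$ to get the $\sqrt{2\log(1/\delta)/r}$ term. The only cosmetic difference is that you apply Jensen coordinatewise before Cauchy--Schwarz, whereas the paper applies Cauchy--Schwarz first to pass to $\E\|\widehat p_r-p\|_2^2$.
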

\begin{proof}
Cauchy--Schwarz and the multinomial variances give
\[
 \E\|\widehat p_r-p\|_1
 \leq\sqrt{K\,\E\|\widehat p_r-p\|_2^2}
 =\sqrt{\frac K r\left(1-\sum_cp_c^2\right)}\leq\sqrt{K/r}.
\]
Changing one observation changes the $\ell_1$ deviation by at most
$2/r$. The bounded-differences inequality therefore bounds its upper
tail above the mean by $\exp(-rt^2/2)$, proving the claim.
\end{proof}

\begin{proposition}\label[proposition]{Prop:occupancy-generalization}
Let $\widehat\pi_{\mathrm{tr}}$ be the fitted-cluster histogram of
$\mathcal D_n$ and let $\widehat\pi_{\mathrm{ho}}$ be the corresponding
histogram of $m$ independent held-out safe observations. With probability
at least $1-\delta$ over both samples,
\begin{equation}\label{eq:occupancy-generalization}
 \|\widehat\pi_{\mathrm{ho}}-\widehat\pi_{\mathrm{tr}}\|_1
 \leq2\eta_{\mathrm{tr},n}+2\eta_{0,n}
       +b_n(\delta/2)+b_m(\delta/2).
\end{equation}
Here the assignment-error quantities are their actual sample-dependent
values. The result requires neither mask recovery nor small centroid error.
\end{proposition}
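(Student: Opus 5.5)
The plan is a triangle-inequality argument through two intermediate histograms. Let $\pi^\star$ denote the population oracle occupancy $(\pi_c)_c$. Let $\pi^\star_{\mathrm{tr}}$ be the histogram of the oracle labels $c^\star(Z_i)$ on $\mathcal D_n$, and $\pi^\star_{\mathrm{ho}}$ the histogram of the oracle labels on the held-out points. Then
\[
 \|\widehat\pi_{\mathrm{ho}}-\widehat\pi_{\mathrm{tr}}\|_1
 \leq\|\widehat\pi_{\mathrm{ho}}-\pi^\star_{\mathrm{ho}}\|_1
 +\|\pi^\star_{\mathrm{ho}}-\pi^\star\|_1
 +\|\pi^\star-\pi^\star_{\mathrm{tr}}\|_1
 +\|\pi^\star_{\mathrm{tr}}-\widehat\pi_{\mathrm{tr}}\|_1,
\]
and I will bound the four terms separately.

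\textbf{Step 1 (oracle sampling terms).} The oracle labels $c^\star(Z_i)$ are i.i.d.\ with law $\pi^\star$, because $c^\star$ is fixed before sampling. \Cref{Lem:multinomial-l1} at level $\delta/2$ then gives $\|\pi^\star_{\mathrm{tr}}-\pi^\star\|_1\leq b_n(\delta/2)$.

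For the held-out sample, condition on $\mathcal D_n$. The held-out points are independent of $\mathcal D_n$, so their oracle labels are still i.i.d.\ with law $\pi^\star$. \Cref{Lem:multinomial-l1} at level $\delta/2$ gives $\|\pi^\star_{\mathrm{ho}}-\pi^\star\|_1\leq b_m(\delta/2)$. A union bound over the two events yields total probability at least $1-\delta$.

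\textbf{Step 2 (label-switching terms).} If a fraction $f$ of the points change label, each changed point removes mass $1/r$ from one bin and adds $1/r$ to another. This changes the $\ell_1$ distance between the two histograms by at most $2/r$ per point, for a total of at most $2f$.

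For the training term, $f=\eta_{\mathrm{tr},n}$ by definition, so $\|\pi^\star_{\mathrm{tr}}-\widehat\pi_{\mathrm{tr}}\|_1\leq2\eta_{\mathrm{tr},n}$. This holds deterministically after the matching of labels, since $\widehat c_n$ is transported by the fixed permutation.

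For the held-out term, the same argument gives the bound $2\hat\eta_{\mathrm{ho}}$, where $\hat\eta_{\mathrm{ho}}$ is the empirical held-out mismatch fraction. Its conditional mean is $\eta_{0,n}$.

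\textbf{Main obstacle.} The statement contains $2\eta_{0,n}$ rather than the empirical fraction $2\hat\eta_{\mathrm{ho}}$, so this needs care. I would try to avoid an extra concentration term. Compute the held-out comparison against the \emph{population} fitted law $\widehat\pi:=P_0(\widehat c_n(Z)=\cdot\mid\mathcal D_n)$ instead of the oracle histogram.

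Conditionally on $\mathcal D_n$, the held-out fitted labels are i.i.d.\ with law $\widehat\pi$, so \Cref{Lem:multinomial-l1} gives $\|\widehat\pi_{\mathrm{ho}}-\widehat\pi\|_1\leq b_m(\delta/2)$. Moreover $\|\widehat\pi-\pi^\star\|_1\leq2P_0(\widehat c_n\neq c^\star\mid\mathcal D_n)=2\eta_{0,n}$, by the same coupling argument at the population level.

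Thus the held-out chain becomes $\widehat\pi_{\mathrm{ho}}\to\widehat\pi\to\pi^\star$, with cost $b_m(\delta/2)+2\eta_{0,n}$. The training chain stays $\widehat\pi_{\mathrm{tr}}\to\pi^\star_{\mathrm{tr}}\to\pi^\star$, with cost $2\eta_{\mathrm{tr},n}+b_n(\delta/2)$. Summing the two chains gives exactly \eqref{eq:occupancy-generalization}. No mask recovery or centroid control enters anywhere, since only labels are compared.
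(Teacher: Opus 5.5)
Your final chain is correct and is essentially the paper's proof. The training histogram goes through the oracle training histogram to $\pi$ at cost $2\eta_{\mathrm{tr},n}+b_n(\delta/2)$, and the held-out histogram goes through the conditional fitted law $P_0(\widehat c_n(Z)=\cdot\mid\mathcal D_n)$ (the paper's $\widetilde\pi_0$) to $\pi$ at cost $b_m(\delta/2)+2\eta_{0,n}$, using \Cref{Lem:multinomial-l1} and the same-$Z$ coupling. The preliminary four-term decomposition through the oracle held-out histogram is unnecessary, as you noticed, and can be dropped.
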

\begin{proof}
Let $\pi_c=P_0(c^\star(Z)=c)$, let
$\widehat\pi_{\mathrm{tr},c}^\star
=n^{-1}\sum_i\mathbf 1\{c^\star(Z_i)=c\}$, and let
$\widetilde\pi_{0,c}=P_0(\widehat c_n(Z)=c\mid\mathcal D_n)$.
Changing one assignment moves empirical mass between at most two cells,
so
\[
 \|\widehat\pi_{\mathrm{tr}}-\widehat\pi_{\mathrm{tr}}^\star\|_1
 \leq2\eta_{\mathrm{tr},n}.
\]
Coupling $c^\star(Z)$ and $\widehat c_n(Z)$ with the same fresh $Z$
gives $\|\widetilde\pi_0-\pi\|_1\leq2\eta_{0,n}$.
Since the population partition is fixed, \Cref{Lem:multinomial-l1}
applies to the oracle training histogram. Conditional on $\mathcal D_n$,
it also applies to the held-out histogram with law $\widetilde\pi_0$.
These two concentration events cost $\delta/2$ each; their intersection
satisfies
\[
 \|\widehat\pi_{\mathrm{tr}}^\star-\pi\|_1\leq b_n(\delta/2),
 \qquad
 \|\widehat\pi_{\mathrm{ho}}-\widetilde\pi_0\|_1\leq b_m(\delta/2).
\]
The triangle inequality proves the result. In particular, we have not
incorrectly treated the fitted training histogram as multinomial
conditional on a partition learned from those same observations.
\end{proof}

Similar occupancy does not imply similar partitions: distinct partitions
may have identical cell probabilities. Figure~\ref{fig:cluster-occupancy}
is therefore an empirical check of one aspect of generalization, not a
converse to \Cref{Prop:occupancy-generalization} or a measurement of
$\eta_{0,n}$.

\subsection{Optimal readout and the information retained by the detector}
\label{app:theory-information}

Condition on the fitted detector throughout this subsection. Let
$\mathcal R_\alpha(f)=\alpha P_0(f(Z)=1)+(1-\alpha)P_1(f(Z)=0)$ for
$\alpha\in(0,1)$. Define $\mathcal R_{\mathrm{Bayes}}$ as the infimum
over measurable classifiers of $Z$, $\mathcal R_{\mathrm{cs}}^\star$
as the infimum over measurable functions of
$(\widehat c_n(Z),\widehat s_n(Z))$, and
$\mathcal R_{\mathrm{cl}}^\star$ as the infimum over functions of
$\widehat c_n(Z)$ alone. These are population oracle risks, not risks
of fitted calibration models.

\begin{proposition}\label[proposition]{Prop:cluster-only-risk}
With $\widetilde\pi_{y,c}=P_y(\widehat c_n(Z)=c\mid\mathcal D_n)$,
\begin{align}
\label{eq:cluster-risk-exact}
 \mathcal R_{\mathrm{cl}}^\star
 &=\sum_c\min\{\alpha\widetilde\pi_{0,c},
                  (1-\alpha)\widetilde\pi_{1,c}\}\\
\label{eq:cluster-risk-weighted-tv}
 &=\frac{1-\|\alpha\widetilde\pi_0
                   -(1-\alpha)\widetilde\pi_1\|_1}{2}.
\end{align}
For equal priors this becomes
\begin{equation}\label{eq:cluster-risk-tv}
 \mathcal R_{\mathrm{cl}}^\star
 =\frac12(1-\operatorname{TV}(\widetilde\pi_0,\widetilde\pi_1)),
 \qquad \operatorname{TV}(p,q)=\tfrac12\|p-q\|_1.
\end{equation}
Let $\mathcal R_{\mathrm{aff}}^\star$ be the infimum over classifiers
$\mathbf 1\{a_{\widehat c_n(Z)}\widehat s_n(Z)
                  +b_{\widehat c_n(Z)}>0\}$ with arbitrary real
coefficients. Then
\begin{equation}\label{eq:cluster-score-bayes-sandwich}
 \mathcal R_{\mathrm{Bayes}}
 \leq\mathcal R_{\mathrm{cs}}^\star
 \leq\mathcal R_{\mathrm{aff}}^\star
 \leq\mathcal R_{\mathrm{cl}}^\star.
\end{equation}
In particular,
$0\leq\mathcal R_{\mathrm{cs}}^\star-\mathcal R_{\mathrm{Bayes}}
\leq\mathcal R_{\mathrm{cl}}^\star$.
\end{proposition}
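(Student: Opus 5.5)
The exact formula for $\mathcal R_{\mathrm{cl}}^\star$ comes from a pointwise minimization over the cluster label, and the sandwich comes from nesting the classifier families. Throughout I condition on $\mathcal D_n$, so $\widehat c_n$ and $\widehat s_n$ are fixed measurable maps and $\widetilde\pi_0,\widetilde\pi_1$ are fixed probability vectors on $[K]$.

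\textbf{Cluster-only risk.} Any classifier that depends only on $\widehat c_n(Z)$ is a map $g:[K]\to\{0,1\}$. Its risk decomposes cell by cell:
\[
 \mathcal R_\alpha(g)=\sum_c\bigl(\alpha\widetilde\pi_{0,c}\mathbf 1\{g(c)=1\}+(1-\alpha)\widetilde\pi_{1,c}\mathbf 1\{g(c)=0\}\bigr).
\]
The cells decouple, so I minimize each summand separately by choosing $g(c)$ to pick the smaller of the two weights. This gives \eqref{eq:cluster-risk-exact}, and the infimum is attained because the family is finite.

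For \eqref{eq:cluster-risk-weighted-tv}, I apply $\min\{a,b\}=(a+b-|a-b|)/2$ to each cell and sum. The linear terms contribute
\[
 \sum_c\bigl(\alpha\widetilde\pi_{0,c}+(1-\alpha)\widetilde\pi_{1,c}\bigr)=\alpha+(1-\alpha)=1,
\]
which yields the weighted-$\ell_1$ form. Setting $\alpha=1/2$ gives
\[
 \tfrac12-\tfrac14\|\widetilde\pi_0-\widetilde\pi_1\|_1=\tfrac12\bigl(1-\operatorname{TV}(\widetilde\pi_0,\widetilde\pi_1)\bigr),
\]
which is \eqref{eq:cluster-risk-tv}.

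\textbf{Sandwich.} Each inequality in \eqref{eq:cluster-score-bayes-sandwich} is an inclusion of classifier families, since an infimum over a larger family is no larger.
\begin{itemize}
\item Every measurable function of $(\widehat c_n(Z),\widehat s_n(Z))$ is a measurable function of $Z$, because it is a composition of measurable maps. Hence $\mathcal R_{\mathrm{Bayes}}\le\mathcal R_{\mathrm{cs}}^\star$.
\item Every per-cluster affine rule $\mathbf 1\{a_{\widehat c_n(Z)}\widehat s_n(Z)+b_{\widehat c_n(Z)}>0\}$ is a measurable function of the pair. Hence $\mathcal R_{\mathrm{cs}}^\star\le\mathcal R_{\mathrm{aff}}^\star$.
\item Every cluster-only rule $g$ is affine-representable: take $a_c=0$, and set $b_c=1$ when $g(c)=1$ and $b_c=-1$ when $g(c)=0$. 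Hence $\mathcal R_{\mathrm{aff}}^\star\le\mathcal R_{\mathrm{cl}}^\star$.
\end{itemize}
The final claim follows from $\mathcal R_{\mathrm{Bayes}}\ge0$, since risks are nonnegative. This gives $0\le\mathcal R_{\mathrm{cs}}^\star-\mathcal R_{\mathrm{Bayes}}\le\mathcal R_{\mathrm{cs}}^\star\le\mathcal R_{\mathrm{cl}}^\star$.

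\textbf{Main obstacle.} The only delicate point is bookkeeping. The probabilities must be conditional on $\mathcal D_n$ so that $\widetilde\pi_y$ are deterministic. The affine family must use the strict inequality so that constant rules are realizable; this is handled by choosing $b_c=\pm1$ rather than $0$. With those two points in place, none of the steps needs measure-theoretic care beyond the composition argument.
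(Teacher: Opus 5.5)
Your proposal is correct and follows the paper's proof essentially step for step: the cellwise choice of the cheaper label for the exact formula, the identity $\min\{a,b\}=(a+b-|a-b|)/2$ together with the normalization of the two occupancy vectors for the weighted-$\ell_1$ and TV forms, and the three nested classifier families (with $a_c=0$, $b_c=\pm1$ realizing every cluster-only rule) for the sandwich. Your explicit use of $\mathcal R_{\mathrm{Bayes}}\ge 0$ to obtain $\mathcal R_{\mathrm{cs}}^\star-\mathcal R_{\mathrm{Bayes}}\le\mathcal R_{\mathrm{cl}}^\star$ just spells out a final step the paper leaves implicit.
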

\begin{proof}
Declaring cluster $c$ unsafe incurs error
$\alpha\widetilde\pi_{0,c}$, whereas declaring it safe incurs error
$(1-\alpha)\widetilde\pi_{1,c}$. Choosing the better label separately
in each cluster gives \eqref{eq:cluster-risk-exact}. The identity
$\min(a,b)=(a+b-|a-b|)/2$ and the normalization of the two occupancy
vectors give \eqref{eq:cluster-risk-weighted-tv} and
\eqref{eq:cluster-risk-tv}.

Every cluster-and-score classifier is a measurable classifier of $Z$.
Every cluster-affine classifier is a cluster-and-score classifier.
Finally, setting $a_c=0$ and $b_c\in\{-1,1\}$ realizes any constant
label in each cluster. These inclusions prove the risk ordering.
\end{proof}

For a fitted affine-threshold classifier $\widehat g$, the excess risk is exactly
\begin{align}
\label{eq:readout-decomposition}
 \mathcal R_\alpha(\widehat g)-\mathcal R_{\mathrm{Bayes}}
 ={}&[\mathcal R_\alpha(\widehat g)-\mathcal R_{\mathrm{aff}}^\star]
   +[\mathcal R_{\mathrm{aff}}^\star-\mathcal R_{\mathrm{cs}}^\star]
   +[\mathcal R_{\mathrm{cs}}^\star-\mathcal R_{\mathrm{Bayes}}].
\end{align}
The three terms are nonnegative: fitting the readout, restricting it to
cluster-affine maps, and losing information in the cluster-and-score
representation. The occupancy formula bounds the last term, not the
first two. The supervised probe is another fitted classifier, not the
Bayes rule. Similar measured AUROCs therefore provide empirical evidence
but not a bound on these classification-risk gaps.

\begin{proposition}\label[proposition]{Prop:cluster-risk-estimation}
Condition on $\mathcal D_n$. For $y\in\{0,1\}$, let
$Z_{y,1},\ldots,Z_{y,m_y}\sim P_y^{m_y}$ be independent evaluation
samples not used to fit or select the detector. Let
$\widehat\pi_y^{\,\mathrm{ev}}$ be their fitted-cluster histograms and
put
\[
 \widehat{\mathcal R}_{\mathrm{cl}}
 =\sum_c\min\{\alpha\widehat\pi_{0,c}^{\,\mathrm{ev}},
                 (1-\alpha)\widehat\pi_{1,c}^{\,\mathrm{ev}}\}.
\]
With conditional probability at least $1-\delta$,
\begin{equation}\label{eq:cluster-risk-estimation}
 |\widehat{\mathcal R}_{\mathrm{cl}}-\mathcal R_{\mathrm{cl}}^\star|
 \leq\tfrac12\{\alpha b_{m_0}(\delta/2)
                  +(1-\alpha)b_{m_1}(\delta/2)\}.
\end{equation}
\end{proposition}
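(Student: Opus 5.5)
The bound follows from Lemma~\ref{Lem:multinomial-l1} combined with a Lipschitz property of the map $(p,q)\mapsto\sum_c\min\{\alpha p_c,(1-\alpha)q_c\}$. Throughout we condition on $\mathcal D_n$. The fitted assignment $\widehat c_n$ is then a fixed measurable map, so the evaluation labels $\widehat c_n(Z_{y,i})$, $i\le m_y$, are i.i.d.\ in $[K]$ with common law $\widetilde\pi_y$ from Proposition~\ref{Prop:cluster-only-risk}. The two evaluation samples are independent of $\mathcal D_n$ and of each other. Because they were not used in fitting or selecting the detector, no adaptivity enters. This is the point to check carefully, in the same spirit as the remark after Proposition~\ref{Prop:occupancy-generalization}.

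\textbf{Step 1 (Lipschitz property).} For reals, $|\min\{a,b\}-\min\{a',b'\}|\le\max\{|a-a'|,|b-b'|\}\le|a-a'|+|b-b'|$. Summing over $c$ and using the exact formula \eqref{eq:cluster-risk-exact} gives
\[
 |\widehat{\mathcal R}_{\mathrm{cl}}-\mathcal R_{\mathrm{cl}}^\star|
 \le\alpha\|\widehat\pi_0^{\,\mathrm{ev}}-\widetilde\pi_0\|_1
 +(1-\alpha)\|\widehat\pi_1^{\,\mathrm{ev}}-\widetilde\pi_1\|_1 .
\]
This bound is too weak by a factor of two compared with the claim.

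\textbf{Step 2 (sharpening to the factor $\tfrac12$).} Use the representation \eqref{eq:cluster-risk-weighted-tv}, which holds for any pair of probability vectors. Then
\[
 2|\widehat{\mathcal R}_{\mathrm{cl}}-\mathcal R_{\mathrm{cl}}^\star|
 =\bigl|\|\alpha\widetilde\pi_0-(1-\alpha)\widetilde\pi_1\|_1
 -\|\alpha\widehat\pi_0^{\,\mathrm{ev}}-(1-\alpha)\widehat\pi_1^{\,\mathrm{ev}}\|_1\bigr| .
\]
By the reverse triangle inequality, the right-hand side is at most $\alpha\|\widehat\pi_0^{\,\mathrm{ev}}-\widetilde\pi_0\|_1+(1-\alpha)\|\widehat\pi_1^{\,\mathrm{ev}}-\widetilde\pi_1\|_1$, which yields the $\tfrac12$.

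The one point to verify is that the empirical histograms are genuine probability vectors summing to one. This holds because every evaluation point is assigned to exactly one cluster, so the normalization identity behind \eqref{eq:cluster-risk-weighted-tv} applies to them as well.

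\textbf{Step 3 (concentration).} Apply Lemma~\ref{Lem:multinomial-l1} conditionally on $\mathcal D_n$ to each class $y$, with confidence $\delta/2$. This gives $\|\widehat\pi_y^{\,\mathrm{ev}}-\widetilde\pi_y\|_1\le b_{m_y}(\delta/2)$. A union bound over the two classes gives conditional probability at least $1-\delta$, and substituting into Step 2 proves \eqref{eq:cluster-risk-estimation}.

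The main obstacle is minor. It is ensuring the factor $\tfrac12$ via the total-variation form rather than the naive Lipschitz bound of Step 1, and stating the conditional independence cleanly.
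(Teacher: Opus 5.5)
Your proof is correct and matches the paper's: you rewrite both risks with the weighted-TV identity \eqref{eq:cluster-risk-weighted-tv}, bound their difference by $\tfrac12\{\alpha\|\widehat\pi_0^{\,\mathrm{ev}}-\widetilde\pi_0\|_1+(1-\alpha)\|\widehat\pi_1^{\,\mathrm{ev}}-\widetilde\pi_1\|_1\}$ with the reverse triangle inequality, and finish with Lemma~\ref{Lem:multinomial-l1} at level $\delta/2$ per class and a union bound. Your Step~1 is an unneeded detour, since Step~2 alone already gives the stated constant.
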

\begin{proof}
Use \eqref{eq:cluster-risk-weighted-tv} for both normalized histogram
pairs. The reverse triangle inequality bounds the difference by
\[
 \tfrac12\{\alpha\|\widehat\pi_0^{\,\mathrm{ev}}-\widetilde\pi_0\|_1
 +(1-\alpha)\|\widehat\pi_1^{\,\mathrm{ev}}-\widetilde\pi_1\|_1\}.
\]
Apply \Cref{Lem:multinomial-l1} to each independent evaluation sample
with failure probability $\delta/2$.
\end{proof}

Figure~\ref{fig:cluster-occupancy} overlays each evaluation histogram
with the \emph{training} histogram. The overlap formula instead compares
safe and unsafe \emph{population} occupancies under the same fitted
partition; the preceding proposition estimates them with independent
safe and unsafe evaluations. The training overlay must not be
substituted without accounting for its data dependence.
For the benign-OOD evaluation, the same formulas apply after replacing
$P_0$ in the risk by the evaluation's mixture of in-distribution safe and
benign-OOD data. A conclusion for one negative distribution does not
automatically transfer to the other.

\subsection{Generalization on a safe--unsafe mixture}
\label{app:theory-mixture}

\begin{corollary}\label{Cor:mixture-generalization}
On the event of \Cref{Thm:One}, define
$\eta_{1,n}=P_1(\widehat c_n(Z)\neq c^\star(Z)\mid\mathcal D_n)$.
Suppose $P_1(|s^\star(Z)-\tau_q|\leq t)\leq L_1t$ for all $t\geq0$.
Then
\begin{equation}\label{eq:mixture-risk-proof}
 |\mathcal R_\alpha(\widehat f_{n,q})-\mathcal R_\alpha(f_q^\star)|
 \leq\alpha\eta_{0,n}+(1-\alpha)\eta_{1,n}
       +\{\alpha L_0+(1-\alpha)L_1\}\varepsilon_n.
\end{equation}
For the independently calibrated detector of
\Cref{Prop:estimated-quantile-disagreement}, the same bound holds with
$\varepsilon_n$ replaced by $2\varepsilon_n+a_m$ on its calibration
event.
\end{corollary}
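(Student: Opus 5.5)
The plan is to reduce the mixture risk difference to two separate disagreement probabilities, one under $P_0$ and one under $P_1$, and to bound each by the same threshold-band argument used in the proof of \Cref{Thm:One}. Since $\mathcal R_\alpha(f)=\alpha P_0(f(Z)=1)+(1-\alpha)P_1(f(Z)=0)$, the triangle inequality gives
\[
 |\mathcal R_\alpha(\widehat f_{n,q})-\mathcal R_\alpha(f_q^\star)|
 \leq\alpha P_0(\widehat f_{n,q}(Z)\neq f_q^\star(Z)\mid\mathcal D_n)
 +(1-\alpha)P_1(\widehat f_{n,q}(Z)\neq f_q^\star(Z)\mid\mathcal D_n).
\]
This holds because $|P_y(\widehat f=1)-P_y(f^\star=1)|\leq P_y(\widehat f\neq f^\star)$, and the same holds for the event $\{\,\cdot=0\}$.

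The safe term is exactly what \Cref{Thm:One} controls. On its event, it is at most $\eta_{0,n}+L_0\varepsilon_n$. For the unsafe term, I would reuse the pointwise comparison \eqref{eq:pointwise-score-control}. That comparison was derived on the event of \Cref{Lem:masked-centroid,Lem:mask-recovery}, and it holds for \emph{every} $z$ outside $A_n=\{\widehat c_n\neq c^\star\}$, not only on the safe support, as the paper remarks explicitly. Outside $A_n$, a disagreement of the two strict-threshold detectors therefore forces $|s^\star(z)-\tau_q|\leq\varepsilon_n$. Conditional on $\mathcal D_n$, $\varepsilon_n$ is fixed. Hence $P_1(\widehat f_{n,q}\neq f_q^\star\mid\mathcal D_n)\leq\eta_{1,n}+P_1(|s^\star(Z)-\tau_q|\leq\varepsilon_n)\leq\eta_{1,n}+L_1\varepsilon_n$ by the assumed unsafe anticoncentration. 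Combining the two terms gives \eqref{eq:mixture-risk-proof}.

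For the calibrated detector, I would follow the proof of \Cref{Prop:estimated-quantile-disagreement}. On the intersection of the training event and the DKW calibration event, \eqref{eq:quantile-displacement} together with the pointwise bound gives $|(\widehat s_n(z)-\widehat\tau_q)-(s^\star(z)-\tau_q)|\leq2\varepsilon_n+a_m$ for all $z\notin A_n$. This is again pointwise in $z$, so the same band argument applies under both $P_0$ and $P_1$, with $\varepsilon_n$ replaced by $2\varepsilon_n+a_m$. The calibration sample is safe-only, so the threshold displacement bound does not depend on $P_1$.

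The only step needing care is confirming that nothing in the score comparison used the safe distribution of the test point. The centroid and mask errors are properties of $\mathcal D_n$ alone, and $A_n$ is a deterministic set given $\mathcal D_n$. So the unsafe bound is valid, and no further obstacle is expected.
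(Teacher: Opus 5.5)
Your proposal is correct and follows essentially the same route as the paper's proof: bound each class-conditional error difference by the disagreement probability $P_y(\widehat f_{n,q}\neq f_q^\star\mid\mathcal D_n)$, control that by $\eta_{y,n}+L_y\varepsilon_n$ using the pointwise score comparison \eqref{eq:pointwise-score-control} (valid for any test point off $A_n$) and the threshold-band argument, and weight by the priors. For the calibrated detector, you use the last display in the proof of the estimated-quantile proposition with $2\varepsilon_n+a_m$ in place of $\varepsilon_n$, exactly as the paper does.
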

\begin{proof}
The score comparison \eqref{eq:pointwise-score-control} holds for any
test point. The threshold-band argument therefore bounds disagreement
under $P_y$ by $\eta_{y,n}+L_y\varepsilon_n$, for $y=0,1$.
For binary classifiers $f,g$, the absolute difference in either
class-conditional error is at most $P_y(f\neq g)$. Weight by the class
priors and add. The calibrated statement follows from the last display
in the proof of \Cref{Prop:estimated-quantile-disagreement}.
\end{proof}

Consequently, total excess risk is bounded by the estimation term in
\eqref{eq:mixture-risk-proof} plus
$\mathcal R_\alpha(f_q^\star)-\mathcal R_{\mathrm{Bayes}}$.
The latter includes the choice of a global threshold and the false-positive
budget $q$; it is not solely representation loss. In fact,
\eqref{eq:safe-anticoncentration} excludes an atom at $\tau_q$, so
$P_0(s^\star(Z)>\tau_q)=q$ and
$\mathcal R_\alpha(f_q^\star)\geq\alpha q$.
For fixed $q>0$, vanishing estimation error therefore need not imply
vanishing safety-classification error.

\end{document}